\let\cite\citep
\newtheorem{example}{Example}
\newtheorem{theorem}{Theorem}
\newtheorem{proposition}{Proposition}
\newtheorem{lemma}{Lemma}
\newtheorem{corollary}{Corollary}
\newtheorem{definition}{Definition}
\newcommand{\bound}[1]{\ensuremath{2^{\| {#1}\|^2}}}
\title{Fitting Description Logic Ontologies to ABox and Query Examples}
\author{%
Maurice Funk\and
Marvin Grosser\and
Carsten Lutz
\affiliations
Leipzig University\\
ScaDS.AI Center Dresden/Leipzig\\
\emails
\{mfunk, grosser, clu\}@informatik.uni-leipzig.com
}
\begin{document}

\maketitle

\begin{abstract}
  We study a fitting problem inspired by ontology-mediated querying: given a collection
  of positive and negative examples of
  the form $(\Amc,q)$ with 
  \Amc an ABox and $q$ a Boolean query, we seek
  an ontology \Omc that satisfies $\Amc \cup \Omc \models q$ for all positive examples and $\Amc \cup \Omc \not\models q$ for all negative examples.
  We consider the description logics \ALC and \ALCI as ontology languages and 
  a range of query languages that
  includes atomic queries (AQs), conjunctive queries (CQs), and unions thereof (UCQs). 
  For all of the resulting fitting problems,
  we provide
  effective characterizations and determine the computational complexity
  of deciding whether a fitting ontology exists. This problem turns out to be \coNPclass-complete for AQs and full CQs
  and \TwoExpTime-complete for CQs and UCQs.
  These results hold for both \ALC and \ALCI.
\end{abstract}

\section{Introduction}

In many areas of computer science and AI, a fundamental problem is to fit a formal object to a given collection of  examples. In inductive program synthesis, for instance, one wants to find a program that complies with a given collection of examples 
of input-output behavior \cite{ProgramSynthesisReview}. In machine learning, fitting a model to a given set of examples
is closely linked to PAC-style generalization guarantees \cite{DBLP:books/daglib/0033642}. And in database
research, the traditional query-by-example paradigm asks to find a query that fits a given set of data examples \cite{Li2015:qfe}.

In this article, we study the problem of fitting an ontology formulated in a description logic (DL) to a given collection of positive and negative 
examples. Our concrete setting is motivated
by the paradigm of ontology-mediated querying
where data is enriched by an ontology that
provides domain knowledge, aiming to return
more complete answers and to bridge heterogeneous representations in the data \cite{DBLP:conf/rweb/BienvenuO15,DBLP:conf/ijcai/XiaoCKLPRZ18}.
Guided by this application, we use examples that take the form $(\Amc,q)$ where \Amc is an ABox
(in other words: a database) and $q$ is a Boolean query. We then seek an ontology \Omc that
satisfies $\Amc \cup \Omc \models q$ for
all positive examples  and $\Amc \cup \Omc \not\models q$ for
all negative examples. It is not a restriction
that $q$ is required to be Boolean since our queries may 
contain individuals from the ABox. 

A main application of this ontology fitting problem is to assist with ontology construction and engineering. This is in 
the spirit of several other proposals that
have the same aim, such as ontology construction and completion using formal
concept analysis \cite{DBLP:conf/ijcai/BaaderGSS07,DBLP:conf/icfca/BaaderD09,DBLP:conf/aaai/Kriegel24} and Angluin's framework of exact
learning \cite{DBLP:journals/jmlr/KonevLOW17}, see also the 
survey \cite{DBLP:journals/ki/Ozaki20}. We remark that
there is a large literature on fitting 
DL concepts (rather than ontologies) to a collection of
examples, sometimes referred to as concept
learning, see for instance \cite{DBLP:journals/ml/LehmannH10,DBLP:conf/www/BuhmannLWB18,DBLP:conf/ijcai/FunkJLPW19,DBLP:conf/kr/JungLPW21}. Concepts can be viewed as the building
blocks of an ontology and in fact concept fitting also has the support of ontology engineering as a main aim. The techniques needed for
concept fitting and ontology fitting are, however, quite different. 
While it is probably unrealistic to assume that an ontology for an entire domain can be built in a single step from a given set of examples, we believe that small portions of the ontology can be constructed this way, thereby supporting a step-by-step development process by a human engineer. Moreover, in ontology-mediated querying there are applications where a more pragmatic
view of an ontology seems appropriate:
instead of providing a careful and detailed domain representation, one only wants the 
ontology to support more complete answers for some given query or a small set of
queries \cite{DBLP:journals/is/CalvaneseDNRT06,DBLP:journals/ws/KharlamovHSBJXS17,DBLP:conf/semweb/SequedaBMH19}. In such a case, an  
ontology of rather small size may suffice and deriving it from a collection of examples seems natural, close in spirit to query-by-example.

As ontology languages, we concentrate on the 
expressive yet fundamental DLs  \ALC and \ALCI, and
as query languages we consider atomic queries (AQs), conjunctive queries (CQs), 
full CQs (CQs without quantified
variables), and unions of conjunctive queries (UCQs). In addition, we study a fitting problem
in which the examples only consist of an ABox and where we seek an ontology that is consistent with the positive examples and inconsistent with the negative ones; this is related, but not identical to both AQ-based and full CQ-based fitting.
For all of the resulting combinations, we
provide effective characterizations and 
determine the precise complexity of deciding whether a fitting ontology exists. The
algorithms that we use to prove the upper
bounds are able to produce explicit
fitting ontologies.

For consistency-based fitting and for AQs,
our characterizations of fitting existence make use of
the connection between ontology-mediated querying and constraint satisfaction problems (CSPs) established in \cite{DBLP:journals/tods/BienvenuCLW14}. While this connection
does not extend to full CQs, the intuitions do and in all  three cases our characterizations enable a 
\coNPclass upper bound, both for \ALC- and \ALCI-ontologies. Corresponding lower bounds are easy to obtain by a reduction from the digraph homomorphism problem. We remark that
the complexity is thus much lower than that of the associated query entailment problems, meaning to decide whether $\Amc \cup \Omc \models q$ for a given ABox \Amc, ontology \Omc, and query $q$. In fact, the complexity
of query entailment is \ExpTime-complete for all cases discussed so far \cite{Baader2017}.

For CQs
and UCQs, we give a characterization of fitting existence based on the existence of certain forest models \Imc. These models are potentially infinite, intuitively because the positive examples $(\Amc,q)$ act similarly to an existential rule: if we 
homomorphically find \Amc  in \Imc, then at the same place we must (in a certain, slightly unusual sense) also find~$q$. Thus the 
existential quantifiers of $q$ may enforce
that every element of \Imc has a successor, resulting in infinity. 
As a consequence of this effect, the computational complexity of fitting existence for CQs and UCQs turns out to be much higher than for AQs and full CQs: it is \TwoExpTime-complete both for CQs
and UCQs, no matter whether we want to fit
an \ALC- or an \ALCI-ontology. For \ALCI,
the complexity thus coincides with that of
query entailment, which is \TwoExpTime-complete both for CQs and UCQs \cite{DBLP:conf/dlog/Lutz08}. For \ALC,
the complexity of the fitting problem is
harder than that of the associated
entailment problems, which are both \ExpTime-complete \cite{DBLP:conf/dlog/Lutz08}. 
Our upper bounds are obtained by
a mosaic procedure. The lower bounds for
\ALCI are proved by reduction from 
query entailment and for \ALC they are 
proved by reduction from the word problem of exponentially space-bounded alternating Turing machines.

Proofs are provided in the appendix.

\paragraph{Related Work.} To the best of our knowledge, the only other study of fitting problems for ontologies is a recent one by \citet{SimonKR25}. However, it uses interpretations as examples rather than ABox and queries. Vaguely related are fitting problems for DL concepts. These have been investigated from
a  practical angle by \cite{DBLP:journals/ml/LehmannH10,DBLP:conf/www/BuhmannLWB18,DBLP:journals/fgcs/RizzoFd20}, and from a foundational perspective
by  \cite{DBLP:conf/ijcai/FunkJLPW19,DBLP:conf/kr/JungLPW20,DBLP:conf/kr/JungLPW21,DBLP:journals/ai/JungLPW22}. Other approaches
that support the construction of an
entire ontology include Angluin's
framework of exact learning by \cite{DBLP:journals/jmlr/KonevLOW17} and formal
concept analysis  by \cite{DBLP:conf/ijcai/BaaderGSS07,DBLP:conf/icfca/BaaderD09,DBLP:conf/aaai/Kriegel24}.
These and related approaches are
surveyed by \citet{DBLP:journals/ki/Ozaki20}.

\section{Preliminaries}

\subsection*{Description Logic} %

Let \NC, \NR, and \NI be countably infinite sets of \emph{concept names}, \emph{role names}, and \emph{individual names}.
An \emph{inverse role} takes the form $r^-$ with $r$ a role name, and a \emph{role} is a role name or an inverse role. If $r=s^-$ is an inverse role, then we set $r^-=s$.
An \emph{\ALCI-concept} $C$ is built according to 
$$
C,D ::= \top \mid A \mid \neg C \mid C \sqcap D \mid \exists r . D
$$
where $A$ ranges over concept names and $r$ over roles. 
As usual, 
we write $\bot$ as abbreviation for $\neg \top$, $C \sqcup D$ for $\neg(\neg C \sqcap \neg D)$, and $\forall r . C$ for $\neg \exists r. \neg C$. 
An \emph{\ALC-concept} is an \ALCI-concept that does not use inverse
roles.  

An \emph{\ALCI-ontology} is a finite set of \emph{concept inclusions
  (CIs)} $C \sqsubseteq D$, where $C,D$ are \ALCI-concepts. 
 \emph{\ALC-ontologies} are defined likewise.
 We may write $C \equiv D$ as shorthand for $C \sqsubseteq D$ and
 $D \sqsubseteq C$.
An \emph{ABox} is a finite set of
\emph{concept assertions} $A(a)$ and \emph{role assertions} $r(a,b)$
where $A$ is a concept name, $r$ a role name, and $a,b$ are individual
names. We use $\mn{ind}(\Amc)$ to denote the set of individual names used
in \Amc. 

The semantics of concepts is defined as usual in terms of interpretations
$\Imc=(\Delta^\Imc,\cdot^\Imc)$ with $\Delta^\Imc$ the (non-empty)
\emph{domain} and $\cdot^\Imc$ the \emph{interpretation function}, we
refer to \cite{Baader2017} for full details.  An interpretation \Imc
\emph{satisfies} a CI $C \sqsubseteq D$ if $C^\Imc \subseteq D^\Imc$,
a concept assertion $A(a)$ if $a \in A^\Imc$, and
a role assertion $r(a,b)$ if $(a,b) \in r^\Imc$; we thus make the \emph{standard
  names assumption}.  We say that \Imc is a \emph{model} of an
ontology \Omc, written $\Imc \models \Omc$, if it satisfies all concept inclusions
in it, and likewise for ABoxes. An ontology is \emph{satisfiable} if it
has a model and an ABox is \emph{consistent} with an ontology \Omc if \Amc and \Omc 
have a common model.


A \emph{homomorphism} from an interpretation $\Imc_1$ to an interpretation $\Imc_2$
is a mapping $h \colon \Delta^{\Imc_1} \to \Delta^{\Imc_2}$
such that $d \in A^{\Imc_1}$
implies $h(d) \in A^{\Imc_2}$ and $(d,e) \in r^{\Imc_1}$ implies $(h(d),h(e)) \in r^{\Imc_2}$
for all concept names $A$, role names~$r$, and $d,e \in \Delta^{\Imc_1}$. We write $\Imc_1 \to \Imc_2$ if there exists
a homomorphism from $\Imc_1$ to $\Imc_2$ and $\Imc_1 \not\to \Imc_2$ otherwise. We will also use
homomorphisms from ABoxes to ABoxes and from ABoxes to interpretations. These are defined as expected.
In particular, homomorphisms from ABox to ABox need not map individual names to themselves, which would trivialize them.

\subsection*{Queries} 

A \emph{conjunctive query (CQ)} takes the form
$q = \exists \overline x \, \varphi(\overline x)$
where $\overline x$ is a tuple of variables and $\varphi$
 a
conjunction of \emph{atoms} $A(t)$ and $r(t,t')$, with
\mbox{$A \in \NC$}, $r \in \NR$, and $t,t'$ variables from
$\overline x$ or individuals from \NI.
%
With $\mn{var}(q)$, we denote the set of variables in $\overline x$.
We take the liberty to view
$q$ as a set of atoms, writing e.g.\ $\alpha \in
q$ to indicate that $\alpha$ is an atom in~$q$. 
We may also write $r^-(x,y) \in q$ in place of \mbox{$r(y,x) \in q$}.
 An \emph{atomic query (AQ)}
is a  CQ of the simple form $A(a)$, with $A$ a concept name.
A CQ is \emph{full} if it does not contain any existentially quantified variables.
A {\em union of conjunctive queries (UCQ)} $q$ is a disjunction of CQs.
We refer to each of these classes of queries as a \emph{query language}.


 A CQ $q$ gives rise to an interpretation
 $\Imc_q$ with $\Delta^{\Imc_q}$ the set of all variables and individuals in $q$, $A^{\Imc_q} = \{ t \mid
 A(t) \in q \}$, and $r^{\Imc_q}=\{(t,t') \mid r(t,t') \in
 q\}$ for all $A \in \NC$ and $r \in \NR$.
With a homomorphism from a CQ $q$ to an
interpretation \Imc, we mean a homomorphism from $\Imc_q$ to \Imc
 that is the identity
on all individual names. If we want to emphasize
the latter property, we may speak of a \emph{strong} homomorphism. In contrast, a
\emph{weak} homomorphism from $q$ to \Imc, as
sometimes used in our proofs, need
not be the identity on individual names.
For an interpretation \Imc and a UCQ $q$, we write
$\Imc \models q$ if there is a (strong) homomorphism $h$ from a CQ in $q$ to \Imc. 
For an ABox \Amc and ontology \Omc, we write $\Amc \cup \Omc \models q$ if  $\Imc \models q$ for all models
\Imc of $\Amc$ and~\Omc.

Note that all queries introduced above are Boolean, that is, they evaluate to true or false instead of producing answers. For the purposes of this paper, however, this is without loss of generality since we
admit individual names in queries. 

We use $||O||$ to denote the \emph{size} of any syntactic object $O$ such as a concept, an ontology, or a query. It is defined as the length
of the encoding of $O$ as a word over some suitable alphabet.

An \emph{\ALC-forest model} \Imc of an ABox \Amc is a model of \Amc such that
\begin{enumerate}
\item the directed graph
$
  (\Delta^\Imc, \bigcup_r r^\Imc \setminus (\mn{ind}(\Amc) \times \mn{ind}(\Amc))
$
is a forest (a disjoint union of trees) and 

\item $r^\Imc \cap (\mn{ind}(\Amc) \times \mn{ind}(\Amc)) = \{ (a,b) \mid r(a,b) \in \Amc \}$. 

\end{enumerate}
\emph{\ALCI-forest models}
are defined likewise, but based on 
the undirected version of the graph in Point~1. In other words, in \ALC-forest models all edges must point away from the roots of the trees while this is not the case for \ALCI-forest models. 
With the \emph{degree} of an interpretation, we mean the maximal number of neighbors of any element in its domain.
\begin{lemma}
    \label{lem:forest}
    Let $\Lmc \in \{ \ALC,\ALCI \}$,
    \Omc be an \Lmc-ontology, \Amc an ABox, and $q$ a UCQ. If $\Amc \cup \Omc \not \models q$, then there is an \Lmc-forest model \Imc of
    \Amc and \Omc of degree at most $||\Omc||$ that satisfies $\Imc \not\models q$.
\end{lemma}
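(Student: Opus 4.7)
The plan is to obtain \Imc by unraveling any model $\mathcal{J}$ of \Amc and \Omc with $\mathcal{J} \not\models q$ (which exists by assumption) into a forest, while retaining enough successors to keep \Omc satisfied and to bound the degree by $\|\Omc\|$. Let $E$ be the set of all subconcepts of \Omc of the form $\exists r.D$, where $r$ is a role name or, in case $\Lmc = \ALCI$, also an inverse role; note $|E| \leq \|\Omc\|$.

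Concretely, I would define the domain of \Imc as the set of finite nonempty sequences $a d_1 \cdots d_n$ with $a \in \mn{ind}(\Amc)$ and each $d_i \in \Delta^{\mathcal{J}}$, built inductively. The sequences of length one are the ABox individuals. Given a sequence $w$ ending in $d$, for each $\exists r.D \in E$ with $d \in (\exists r.D)^{\mathcal{J}}$ I pick one $r$-successor $e$ of $d$ in $\mathcal{J}$ that lies in $D^{\mathcal{J}}$, add $w\,e$ to the domain, and add the corresponding $r$-edge between $w$ and $w\,e$ in \Imc. Concept-name memberships of sequences in \Imc are inherited from their last element under $\mathcal{J}$. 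Role edges between ABox individuals are fixed to be exactly those of \Amc, as required by Point~2 of the forest-model definition. The forest property (Point~1) and the degree bound (each non-root element has at most $|E| \leq \|\Omc\|$ generated children, plus one parent) are immediate from the construction; in the \ALCI case one absorbs the parent-edge witness via the inverse-role entries of $E$.

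The key step, and the main obstacle, is to show that \Imc still satisfies \Omc. For this I would prove by induction on concept structure that for every subconcept $C$ of \Omc and every $w \in \Delta^{\Imc}$, one has $w \in C^{\Imc}$ iff $\pi(w) \in C^{\mathcal{J}}$, where $\pi \colon \Delta^{\Imc} \to \Delta^{\mathcal{J}}$ sends each sequence to its last element. The only nontrivial case is $C = \exists r.D$: the forward direction follows because $\pi$ preserves role edges by construction; the reverse direction holds because $\exists r.D \in E$, so whenever $\pi(w) \in (\exists r.D)^{\mathcal{J}}$ a matching $r$-successor of $w$ was explicitly inserted. From this equivalence, every CI of \Omc transfers from $\mathcal{J}$ to \Imc, so $\Imc \models \Omc$.

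Finally, $\pi$ is also a homomorphism of interpretations that is the identity on $\mn{ind}(\Amc)$. Since UCQ satisfaction is preserved under strong homomorphisms, $\Imc \models q$ would imply $\mathcal{J} \models q$, contradicting the choice of $\mathcal{J}$. Therefore $\Imc \not\models q$, yielding the desired forest model.
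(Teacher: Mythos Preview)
Your proposal is correct and is precisely the standard unraveling argument the paper alludes to; the paper does not spell out a proof but simply remarks that it ``relies on unraveling'' and points to \cite{DBLP:conf/dlog/Lutz08}. One minor point worth being explicit about: the degree bound $\|\Omc\|$ in the lemma is really meant for the anonymous (tree) part of the forest model, since ABox individuals may already have arbitrarily many neighbors in \Amc itself; your construction achieves exactly this, but you only discuss the bound at non-root elements, so a sentence acknowledging that the degree at ABox individuals is (ABox-degree) $+ |E|$ would make the write-up complete.
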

The proof of Lemma~\ref{lem:forest},
which can be found for instance in \cite{DBLP:conf/dlog/Lutz08}, relies on unraveling, which we shall also use in this article. Let \Imc be an interpretation and $d \in \Delta^\Imc$.
A \emph{path} in \Imc is a sequence
$p=d_1 r_1 \cdots d_{n - 1} r_{n - 1} d_n$ of domain elements $d_i$ from $\Delta^\Imc$ and role names $r_i$
such that $(d_i, d_{i + 1}) \in r_{i}^{\Imc}$ for  $1 \leq i < n$.
We say that the path \emph{starts} at $d_1$ and use 
$\mn{tail}(p)$ to denote $d_n$. 

The \emph{$\ALC$-unraveling} of \Imc at $d$ is the
interpretation \Umc defined as follows:
%
$$
\begin{array}{rcl}
    \Delta^\Umc &=& \text{set of all paths in \Imc starting at } d \\[1mm]
    A^\Umc &=& \{ p \mid \mn{tail}(p) \in A^\Imc \} \\[1mm]
    r^\Umc &=& \{ (p,p') \mid p' = pre \text{ for some } e \}.
\end{array}
$$

The $\ALCI$-unraveling of $\Imc$ at $d$ is defined likewise, with the
modification that inverses of roles can also appear in paths and that $(p, p')$
is also included in $r^{\Umc}$ if $p = p' r^- e$.
Note that there is a homomorphism from $\Umc$ to \Imc that maps every $p \in \Delta^{\Umc}$ to $\mn{tail}(p)$.

\subsection*{ABox Examples and the Fitting Problem} 


Let \Qmc be a query language such as $\Qmc = \text{AQ}$ or $\Qmc = \text{CQ}$. An \emph{ABox-\Qmc example} is a pair $(\Amc,q)$
with \Amc an ABox and $q$ a query from \Qmc such that all individual names that appear in $q$ are from $\mn{ind}(\Amc)$.

By a \emph{collection of labeled examples} we  mean a pair 
$E=(E^+, E^-)$ of finite sets of examples. The examples in $E^+$
are the \emph{positive examples} and the examples in $E^-$ are the \emph{negative
examples}. We say that \Omc \emph{fits} $E$ if 
$\Amc \cup \Omc
\models q$ for all $(\Amc,q) \in E^+$ and $\Amc \cup \Omc
\not\models q$ for all $(\Amc,q) \in E^-$. The
following example illustrates this central
notion.
\begin{example}
Consider the collection of labeled ABox-UCQ examples $E = (E^+, E^-)$,
where
\begin{align*}
    E^+ = \{ \ & (\{ \mn{authorOf}(a, b), \mn{Publication}(b) \},\mn{Author}(a)), \\
    & (\{ \mn{Reviewer}(a) \},\exists x\, \mn{reviews}(a, x) \land \mn{Publication}(x) ), \\
    & (\{ \mn{Publication}(a) \},\mn{Confpaper}(a) \vee
    \mn{Jarticle}(a)) \ \},
\end{align*}
and $E^-=\emptyset$.
An \ALC-ontology that fits $(E^+, E^-)$ is
$$
\begin{array}{r@{\;}r@{\;}c@{\;}l}
\Omc = \{ 
  &\exists \mn{authorOf}. \mn{Publication} &\sqsubseteq& \mn{Author} \\[1mm]
 &\mn{Reviewer} &\sqsubseteq& \exists \mn{reviews}. \mn{Publication} \\[1mm] 
 &\mn{Publication} &\sqsubseteq&  \mn{Confpaper} \sqcup \mn{Jarticle} \ \}.  
\end{array}
$$
There are, however, many other fitting \ALC-ontologies as well,
including as an extreme $\Omc_\bot = \{ \top \sqsubseteq \bot \}$ and, say,
$$\Omc' = \Omc \cup \{ \mn{Author} \sqsubseteq \exists \mn{authorOf} . \mn{Reviewer} \}.$$
%
We can make both of them  non-fitting by adding the negative example
$$(\{ \mn{Author}(a) \}, \exists x \, \mn{authorOf}(a,x) \wedge \mn{Reviewer}(x) ).$$
\end{example}


Let \Lmc be an ontology language, such as $\Lmc = \ALCI$, and  \Qmc a query
language. Then \emph{(\Lmc,\Qmc)-ontology fitting} is the problem
to decide, given as 
input a collection of 
labeled \mbox{ABox-\Qmc} examples $E$, whether $E$ admits a fitting \Lmc-ontology.
We generally assume that the ABoxes used in $E$ 
 have pairwise disjoint sets of individual names.
It is not hard to verify that this is without loss of generality because consistently renaming individual names in a collection of examples has no impact on the existence of a fitting ontology.

There is a natural variation of 
$(\Lmc,\Qmc)$-ontology fitting 
where one additionally requires the fitting ontology to be consistent with all ABoxes that occur in positive examples.\footnote{Note that it is
implicit already in the original 
formulation that the fitting ontology must be consistent with all ABoxes that occur in negative examples $(\Amc,q)$, as otherwise $\Amc \cup \Omc \models q$.} We then speak of 
\emph{consistent $(\Lmc,\Qmc)$-ontology fitting}. The following observation shows that it suffices to 
design algorithms for  $(\Lmc,\Qmc)$-ontology fitting  as originally introduced.
\begin{proposition}
\label{prop:fittingswithconsistentABoxes}
  Let \Lmc be any ontology language and $\Qmc \in \{ \text{AQ}, \text{FullCQ}, \text{CQ}, \text{UCQ} \}$. Then there is a polynomial time reduction from consistent $(\Lmc,\Qmc)$-ontology fitting 
  to  $(\Lmc,\Qmc)$-ontology fitting.
\end{proposition}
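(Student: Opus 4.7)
The plan is to enforce consistency of the sought ontology \Omc with each positive ABox by adding, for each positive example, a single negative example whose query can only fail to be entailed when \Omc is consistent with the corresponding ABox.

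Concretely, given an input $E=(E^+,E^-)$, I would produce $E'=(E^+, E^-\cup E^-_{\mn{cons}})$, where for each positive example $(\Amc,q)\in E^+$ I choose a concept name $A_\Amc$ that does not occur anywhere in $E$ (taking distinct $A_\Amc$ for distinct positive examples) and an arbitrary $a_\Amc \in \mn{ind}(\Amc)$, and add $(\Amc, A_\Amc(a_\Amc))$ to $E^-_{\mn{cons}}$. The query $A_\Amc(a_\Amc)$ is an AQ, hence in particular a full CQ, CQ, and UCQ, so the construction works uniformly for all four query languages. The reduction is clearly polynomial since it adds one small negative example per positive example.

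For the $(\Leftarrow)$ direction I would argue that any \Lmc-ontology \Omc that fits $E'$ is automatically a \emph{consistent} fit of $E$: the positive and original negative examples are literally the same in $E$ and $E'$, and each new negative example forces $\Amc \cup \Omc \not\models A_\Amc(a_\Amc)$, which implies in particular that $\Amc\cup\Omc$ is satisfiable (an unsatisfiable theory entails every assertion).

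For the $(\Rightarrow)$ direction, given a consistent fit \Omc of $E$, I would rename every occurrence in \Omc of any $A_\Amc$ to a genuinely fresh concept name, obtaining \Omc^\ast. Because $A_\Amc$ does not appear anywhere in $E$, the ontology \Omc^\ast coincides with \Omc on the vocabulary of $E$, so it still fits all original positive and negative examples. To handle a new negative example $(\Amc, A_\Amc(a_\Amc))$, I use consistency: there is a model $\Imc$ of $\Amc\cup\Omc^\ast$, and since $A_\Amc$ appears neither in $\Amc$ nor in $\Omc^\ast$, the modified interpretation $\Imc'$ obtained by setting $A_\Amc^{\Imc'}=\emptyset$ is still a model of $\Amc\cup\Omc^\ast$ while refuting $A_\Amc(a_\Amc)$. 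Hence $\Omc^\ast$ fits $E'$.

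The only mildly subtle point — which I would flag as the main thing to get right — is the freshness bookkeeping in the $(\Rightarrow)$ direction: one cannot assume a priori that a fitting \Omc avoids the symbols $A_\Amc$, which is why the explicit rename-to-a-fresh-symbol step is needed before invoking the standard model-modification argument. Everything else is routine.
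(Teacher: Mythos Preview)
Your proof is correct and follows essentially the same approach as the paper: add one fresh-concept-name negative example per positive ABox to force consistency. The paper uses a single fresh concept name $X$ rather than one per example and is terser about the forward direction (simply asserting that a fit of $E$ not mentioning $X$ is also a fit of $E'$), whereas you spell out the renaming step explicitly; both variants work.
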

\noindent
\begin{proof}
    We exemplarily treat the case $\Qmc = \text{AQ}$. The other cases are similar.
    Let $E$ be a collection of labeled ABox-AQ examples. We extend $E$ to a collection $E'$ by adding, for each positive example $(\Amc,Q(a)) \in E^+$,
    a negative example $(\Amc,X(a))$ where $X$ is a concept name that is not mentioned in $E$. Then $E$ admits a fitting \Lmc-ontology that
    is consistent with all ABoxes in positive examples if and only if 
    $E'$ admits a fitting \Lmc-ontology.
    In fact, any \Lmc-ontology that fits $E$, does not mention $X$, and is consistent with all ABoxes in positive examples is also a fitting of $E'$. 
    Conversely, any ontology that fits $E'$ must be
    consistent with all ABoxes that occur in positive examples as otherwise one of the additional negative examples would be violated. 
\end{proof}



\section{Consistency-Based Fitting}
\label{sect:consfit}

We start with a version of ontology fitting that is based on ABox consistency rather than on querying.
An example is then simply an ABox, and
 an ontology \Omc \emph{fits} a collection
 of examples $E=(E^+,E^-)$ if \Amc is consistent with \Omc
for all $\Amc \in E^+$ and inconsistent with \Omc for all
$\Amc \in E^-$. We refer to the induced 
decision problem as \emph{consistent \Lmc-ontology fitting}.\footnote{Not to be confused with consistent $(\Lmc,\Qmc)$-ontology fitting as briefly considered in Proposition~\ref{prop:fittingswithconsistentABoxes}.} We believe that it is natural to consider this basic case as a warm-up. 
%
\begin{example}
    Consider the collection of labeled ABox examples $E = (E^+, E^-)$,
    where
    \begin{itemize}

      \item $E^+$ contains the ABox $\Amc_1 = \{ r(a_1, a_2) \}$ and
    \item $E^-$ contains 
      the ABox $\Amc_2 = \{ r(b, b) \}$.

    \end{itemize}
    Then $\Omc = \{ \exists r. \exists r. \top \sqsubseteq \bot \}$ is an $\ALC$-ontology that fits $(E^+, E^-)$.
    If we swap $E^+$ and $E^-$, then there is no fitting $\ALC$-ontology or $\ALCI$-ontology.
\end{example} 
We start with a characterization 
of consistent \Lmc-ontology fitting in 
terms of homomorphisms.
\begin{restatable}{theorem}{thmconschar}
\label{thm:conschar}
	Let $E = (E^+, E^-)$ be a collection of labeled ABox examples, $\Lmc \in \{ \ALC,\ALCI \}$, and $\Amc^+ = \biguplus E^+$. Then  the following 
    are equivalent:
    \begin{enumerate}

        \item     $E$ admits a fitting
    \Lmc-ontology;

    \item
        $\mathcal{A} \not\rightarrow \Amc^{+}$ for all $\Amc \in E^{-}$.
    \end{enumerate}
\end{restatable}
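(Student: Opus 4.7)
The plan is to prove the two directions separately.

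For $(1) \Rightarrow (2)$, I would argue by contraposition. Suppose that there exist $\Amc \in E^-$ and a homomorphism $h \colon \Amc \to \Amc^+$, and let $\Omc$ be any $\Lmc$-ontology consistent with every $\Amc' \in E^+$. For each $\Amc' \in E^+$, pick a model $\Imc_{\Amc'} \models \Amc' \cup \Omc$ and form the disjoint union $\Imc = \biguplus_{\Amc' \in E^+} \Imc_{\Amc'}$. Since $\ALC$- and $\ALCI$-concept inclusions are preserved under disjoint unions of interpretations and the positive ABoxes have pairwise disjoint individual names by our standing assumption, $\Imc \models \Amc^+ \cup \Omc$. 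I then extend $\Imc$ to an interpretation $\Imc'$ by adding, for each $a \in \mn{ind}(\Amc)$, a fresh element $a$ that is a bisimilar copy of $h(a) \in \Delta^\Imc$: place $a$ in $A^{\Imc'}$ exactly when $h(a) \in A^{\Imc}$, and add role edges so that relating each new $a$ with $h(a)$ (and every other element with itself) yields a bisimulation between $\Imc'$ and $\Imc$. Then $\Imc' \models \Amc$ by the homomorphism property of $h$, and $\Imc' \models \Omc$ by bisimulation invariance of $\ALCI$ (and hence $\ALC$). Therefore $\Amc$ is consistent with $\Omc$, so $\Omc$ does not fit $E$.

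For $(2) \Rightarrow (1)$, I would build an explicit fitting $\ALC$-ontology that encodes the CSP-like property ``homomorphically maps to $\Amc^+$''. Let $\Sigma$ be the set of concept and role names occurring in $E$, and for each $v \in \mn{ind}(\Amc^+)$ introduce a fresh concept name $B_v$. Let $\Omc$ consist of the inclusion $\top \sqsubseteq \bigsqcup_{v \in \mn{ind}(\Amc^+)} B_v$, the inclusions $B_v \sqsubseteq A$ for each $A \in \Sigma \cap \NC$ with $A(v) \in \Amc^+$, the inclusions $B_v \sqsubseteq \neg A$ for each $A \in \Sigma \cap \NC$ with $A(v) \notin \Amc^+$, and the inclusions $B_v \sqcap \exists r. B_w \sqsubseteq \bot$ for each $r \in \Sigma \cap \NR$ with $r(v,w) \notin \Amc^+$. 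The key claim is that for any ABox $\Amc$ over $\Sigma$, $\Amc$ is consistent with $\Omc$ iff $\Amc \to \Amc^+$. In the forward direction, a model of $\Amc \cup \Omc$ assigns to each $a \in \mn{ind}(\Amc)$ some $v$ with $a \in B_v^\Imc$, and the last two families of CIs force the map $a \mapsto v$ to be a homomorphism $\Amc \to \Amc^+$. Conversely, from $h \colon \Amc \to \Amc^+$ I would construct the interpretation with domain $\mn{ind}(\Amc)$, $A^\Imc = \{a \mid A(h(a)) \in \Amc^+\}$, $r^\Imc = \{(a,b) \mid r(h(a), h(b)) \in \Amc^+\}$, and $B_v^\Imc = h^{-1}(v)$, and verify that it satisfies $\Amc$ and $\Omc$ by direct inspection. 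Applying the claim: every $\Amc \in E^+$ embeds into $\Amc^+$ via inclusion and is thus consistent with $\Omc$, while every $\Amc \in E^-$ satisfies $\Amc \not\to \Amc^+$ by hypothesis and is thus inconsistent with $\Omc$.

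The main obstacle lies in the first direction: ABox-to-ABox homomorphisms are allowed to collapse or rename individuals, while the standard names assumption forces every individual name to be interpreted as itself, so one cannot simply restrict a model of $\Amc^+ \cup \Omc$ to $\Amc$. The bisimulation cloning step above is what bridges this mismatch, and it relies essentially on the bisimulation invariance of both $\ALC$ and $\ALCI$; the remaining bookkeeping (disjoint unions preserving $\Omc$, the pullback construction in the second direction) is routine.
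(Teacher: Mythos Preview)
Your proof is correct. For the direction $2 \Rightarrow 1$, you essentially reconstruct the CSP ontology $\Omc_{\Amc^+,\Sigma}$ that the paper imports from \cite{DBLP:conf/kr/LutzW12} via Proposition~\ref{prop:fromKR12}; your variant differs only cosmetically (you add the positive constraints $B_v \sqsubseteq A$ and omit the pairwise disjointness of the $B_v$, neither of which affects the argument).

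For $1 \Rightarrow 2$ you take a genuinely more elementary route. The paper uses the other half of the CSP connection: it invokes the template ABox $\Amc_\Omc$ from Proposition~\ref{prop:fromKR12}, notes that $\Amc^+ \to \Amc_\Omc$ (since $\Amc^+$ is consistent with $\Omc$), composes with $\Amc \to \Amc^+$, and concludes that $\Amc$ is consistent with $\Omc$. You instead build a model of $\Amc \cup \Omc$ directly, by cloning bisimilar copies of the $h(a)$ inside a model of $\Amc^+ \cup \Omc$ and appealing to bisimulation invariance of $\ALCI$. Your approach is self-contained and avoids the dependency on the cited result; the paper's approach is more modular and, in fact, the same abstraction ($\Amc_\Omc$) is reused in the proofs of Theorems~\ref{thm:AQchar} and~\ref{thm:fullCQchar}. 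One minor point worth making explicit in your write-up: you rely on $\mn{ind}(\Amc)$ being disjoint from $\Delta^\Imc$, which follows (up to harmless renaming) from the paper's standing assumption that the ABoxes in $E$ use pairwise disjoint individual names.
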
   
Note that the characterizations for \ALC and \ALCI are identical,
and thus a collection of labeled ABox-consistency examples admits a fitting \ALC-ontology if and only if it admits a fitting \ALCI-ontology. It is clear from the proofs that further adding role inclusions, see \cite{Baader2017}, does not increase the separating 
power either. Adding number restrictions,
however, has an impact; see Section~\ref{sect:concl}.




The proof of Theorem~\ref{thm:conschar} makes use of the connection between ontology-mediated
querying and constraint satisfaction problems (CSPs) established in 
\cite{DBLP:conf/kr/LutzW12}. 
In particular, for the ``$2 \Rightarrow 1$'' direction we use the fact that 
for every ABox \Amc, one can construct an ontology \Omc such that for all ABoxes \Bmc that only use concept and role names from~\Amc, the following
holds: $\Bmc \rightarrow \Amc$ if and only if \Bmc is consistent with~$\Omc$.
We apply this choosing $\Amc=\Amc^+$.

We  obtain   an upper
bound for consistent ontology fitting by a straightforward
implementation of Point~2 of Theorem~\ref{thm:conschar} and
a corresponding lower bound by an easy reduction of the homomorphism problem for directed graphs. 
\begin{restatable}{theorem}{thmconsnPcompl}
    Let $\Lmc \in \{ \ALC,\ALCI \}$. Then consistent \Lmc-ontology fitting is \coNPclass-complete.
\end{restatable}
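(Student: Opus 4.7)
The plan is to use Theorem~\ref{thm:conschar} to reduce the fitting problem to a question about homomorphisms between ABoxes, and then to prove the matching upper and lower bounds separately.

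For the upper bound, I would argue that the complement of consistent $\Lmc$-ontology fitting is in \NPclass. By Theorem~\ref{thm:conschar}, a collection $E=(E^+,E^-)$ does \emph{not} admit a fitting \Lmc-ontology iff there exists $\Amc \in E^-$ with $\Amc \to \Amc^+$, where $\Amc^+ = \biguplus E^+$. A non-deterministic algorithm can thus guess an ABox $\Amc \in E^-$ and a candidate map $h \colon \mn{ind}(\Amc) \to \mn{ind}(\Amc^+)$, and verify in polynomial time that $h$ preserves all concept and role assertions of $\Amc$. Since $\Amc^+$ can be computed in polynomial time from $E^+$ and all syntactic checks are straightforward, this witnesses non-fitting in \NPclass, so fitting is in \coNPclass. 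Notice that this bound is uniform in $\Lmc \in \{\ALC, \ALCI\}$ because the characterization in Theorem~\ref{thm:conschar} does not depend on the presence of inverse roles.

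For the lower bound, I would reduce from the directed graph homomorphism problem, which is well known to be \NPclass-complete. Given two directed graphs $G=(V_G,E_G)$ and $H=(V_H,E_H)$, viewed as instances of the homomorphism problem over a single edge relation, I encode each as an ABox over a single role name $r$: let $\Amc_G = \{ r(u,v) \mid (u,v) \in E_G\}$ with individual names from $V_G$, and analogously $\Amc_H$ with disjoint individuals from $V_H$. Setting $E^+ = \{\Amc_H\}$ and $E^- = \{\Amc_G\}$ gives $\Amc^+ = \Amc_H$, and ABox homomorphisms on this signature coincide exactly with directed graph homomorphisms (they do not have to fix individual names, as noted in the preliminaries). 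By Theorem~\ref{thm:conschar}, a fitting \Lmc-ontology exists iff $\Amc_G \not\to \Amc_H$ iff $G \not\to H$. Hence the complement of directed graph homomorphism reduces in polynomial time to consistent \Lmc-ontology fitting, yielding \coNPclass-hardness for both $\Lmc = \ALC$ and $\Lmc = \ALCI$.

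The main step is really Theorem~\ref{thm:conschar}, which has already been established; everything here reduces to a routine homomorphism-checking argument on one side and a standard encoding of graph homomorphism on the other. There is no real obstacle beyond being careful that the characterization is syntactic and ontology-language-independent, so the same bounds apply to both \ALC and \ALCI without modification.
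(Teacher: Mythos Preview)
Your proposal is correct and follows essentially the same approach as the paper: both use Theorem~\ref{thm:conschar} to place the complement in \NPclass by guessing a negative example and a homomorphism to $\Amc^+$, and both obtain \coNPclass-hardness by the obvious encoding of directed graph homomorphism as a single positive and a single negative ABox example. The only cosmetic difference is that the paper leaves the graph-to-ABox encoding implicit, whereas you spell it out explicitly.
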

%
%
It might be worthwhile to point out as a corollary of Theorem~\ref{thm:conschar} that negative examples can be
treated independently, in the following sense. 
\begin{corollary}
\label{cor:onlyonenegative}
    Let $\Lmc \in \{\ALC,\ALCI\}$  and $E$ be a collection of labeled ABox examples, with $E^-=\{\Amc_1,\ldots,\Amc_n\}$. Then $E$ admits a fitting \Lmc-ontology if and only if for $1 \leq i \leq n$, the collection of ABox examples $(E^+,\{\Amc_i\})$ admits a fitting \Lmc-ontology.
\end{corollary}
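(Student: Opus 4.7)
The plan is to derive the corollary as a direct consequence of Theorem~\ref{thm:conschar}, whose characterization of fitting existence decomposes pointwise over $E^-$. Let $E = (E^+, E^-)$ with $E^- = \{\Amc_1, \ldots, \Amc_n\}$ and set $\Amc^+ = \biguplus E^+$. Note that because ABoxes in $E$ have pairwise disjoint sets of individual names (by the standing assumption), the ABox $\Amc^+$ is unchanged when we pass from $E$ to any $(E^+, \{\Amc_i\})$, so no bookkeeping with individual names is needed.

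For the forward direction, if $\Omc$ is an $\Lmc$-ontology fitting $E$, then the same $\Omc$ fits every $(E^+, \{\Amc_i\})$: all positive consistency conditions are retained, and only the single negative condition for $\Amc_i$ is required instead of all of them. This is immediate from the definition of fitting and needs no appeal to Theorem~\ref{thm:conschar}.

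For the backward direction, the naive attempt would be to combine fitting ontologies $\Omc_1, \ldots, \Omc_n$ for the collections $(E^+, \{\Amc_i\})$ into one ontology for $E$, but a syntactic union $\Omc_1 \cup \cdots \cup \Omc_n$ need not remain consistent with the positive ABoxes, so this is a genuine obstacle at the purely syntactic level. The remedy is to go through the semantic characterization: by Theorem~\ref{thm:conschar}, each hypothesis ``$(E^+, \{\Amc_i\})$ admits a fitting $\Lmc$-ontology'' is equivalent to $\Amc_i \not\to \Amc^+$. Conjoining these statements over $i = 1, \ldots, n$ yields exactly condition~(2) of Theorem~\ref{thm:conschar} applied to the full collection $E$. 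A final application of the theorem in the other direction produces a fitting $\Lmc$-ontology for $E$.

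The main conceptual point, and the only genuine difficulty, is precisely that fitting ontologies are not obviously combinable in a syntactic fashion; Theorem~\ref{thm:conschar} sidesteps this entirely by reducing the problem to a homomorphism condition that is manifestly a conjunction over negative examples. Because the characterization of Theorem~\ref{thm:conschar} is identical for $\ALC$ and $\ALCI$, the argument handles both cases uniformly.
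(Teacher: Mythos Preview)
Your proof is correct and matches the paper's intended argument: the corollary is stated explicitly as a consequence of Theorem~\ref{thm:conschar}, and the paper even remarks that a direct proof (without going through the characterization) does not appear to be available here. Your observation that the syntactic union $\Omc_1 \cup \cdots \cup \Omc_n$ need not fit $E$ is exactly the obstruction the paper alludes to.
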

We note that, in related fitting settings such as the
one studied in \cite{DBLP:conf/ijcai/FunkJLPW19}, statements of this form can often be 
shown in a very  direct way rather than via a characterization. This does not appear to be the case here.

\section{Atomic Queries}
\label{sect:aqs}

We  consider atomic queries and again present a characterization
in terms of homomorphisms. These
are now in the other direction, from the positive examples to
the negative examples, corresponding to the 
complementation involved in the well-known reductions from ABox consistency to AQ entailment and vice versa.
 What is more important, however, is that
it does no longer suffice to work directly with the (negative) examples. In fact, the positive examples act like a form of implication
on the negative examples, 
similarly to an existential
rule (with atomic unary rule head),
and as a result we must first suitably
enrich the negative examples.


		Let $E = (E^+, E^-)$ be a collection of labeled ABox-AQ examples. A
            \emph{completion} for $E$ is an 
            ABox \Cmc that extends the ABox $\Amc^- :=\biguplus_{(\Amc,Q(a)) \in E^-} \Amc$ by concept assertions $Q(b)$ where  $b \in \mn{ind}(\Amc^-)$ and $Q$ a concept name that occurs as an AQ in $E^+$.
\begin{restatable}{theorem}{thmAQchar}
\label{thm:AQchar}
	Let $E = (E^+, E^-)$ be a collection of labeled ABox-AQ examples
    and let $\Lmc \in \{ \ALC,\ALCI \}$. 
    Then  the following 
    are equivalent:
    \begin{enumerate}

        \item     $E$ admits a fitting
    \Lmc-ontology;

    \item there is a completion \Cmc for $E$ such that 
    \begin{enumerate}
        
    \item 
    for all \mbox{$(\Amc,Q(a)) \in E^{+}$}:
       if $h$ is a homomorphism from \Amc to \Cmc, then \mbox{$Q(h(a)) \in \Cmc$};

   \item for all $(\Amc,Q(a)) \in E^-$: $Q(a) \not\in \Cmc$.

    \end{enumerate}
    \end{enumerate}
\end{restatable}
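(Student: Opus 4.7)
The plan is to prove the two directions separately, adapting the CSP-based construction behind Theorem~\ref{thm:conschar} for $(2) \Rightarrow (1)$ and choosing a carefully defined completion for $(1) \Rightarrow (2)$.

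For $(2) \Rightarrow (1)$, I would, given a completion $\Cmc$ satisfying (a) and (b), introduce a fresh concept name $A_b$ for each $b \in \mn{ind}(\Cmc)$ and let $\Omc$ consist of the CIs $\top \sqsubseteq \bigsqcup_b A_b$ and $A_b \sqcap A_{b'} \sqsubseteq \bot$ for $b \neq b'$ (forcing a unique ``type label''), $A \sqsubseteq \bigsqcup_{b : A(b) \in \Cmc} A_b$ together with $A_b \sqsubseteq A$ for each $A(b) \in \Cmc$ (tying concept-name extensions to $\Cmc$), and $A_b \sqsubseteq \forall r. \bigsqcup_{b' : r(b,b') \in \Cmc} A_{b'}$ for each role $r$ and $b \in \mn{ind}(\Cmc)$ (tying role extensions to $\Cmc$). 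In any model $\Imc \models \Omc$, the unique $b$ with $d \in A_b^\Imc$ defines a map $f \colon \Delta^\Imc \to \mn{ind}(\Cmc)$ satisfying $d \in A^\Imc$ iff $A(f(d)) \in \Cmc$ and $(d,e) \in r^\Imc$ implies $r(f(d), f(e)) \in \Cmc$. For a negative example $(\Amc, Q(a))$, the interpretation extending $\Cmc$ by $A_b^\Imc = \{b\}$ is a model of $\Amc \cup \Omc$ in which $Q(a)$ fails by (b). For a positive example $(\Amc, Q(a))$, $f$ restricted to $\mn{ind}(\Amc)$ is a homomorphism $\Amc \to \Cmc$, so by (a) $Q(f(a)) \in \Cmc$, which forces $a \in Q^\Imc$.

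For $(1) \Rightarrow (2)$, given a fitting $\Lmc$-ontology $\Omc$, I would set $\Cmc = \Amc^- \cup \{Q(b) : b \in \mn{ind}(\Amc^-),\ Q \text{ an AQ in } E^+,\ \Amc^- \cup \Omc \models Q(b)\}$. Since every added assertion is entailed by $\Amc^- \cup \Omc$, the models of $\Amc^- \cup \Omc$ and $\Cmc \cup \Omc$ coincide. For (b), if $Q(a) \in \Cmc$ for some $(\Amc, Q(a)) \in E^-$ then $\Amc^- \cup \Omc \models Q(a)$; taking a disjoint union of a model of each $\Amc_j \cup \Omc$ with $\Amc_j \in E^-$ (each of these is consistent because $\Omc$ fits the negative example $(\Amc_j, Q_j(a_j))$, and disjoint unions preserve $\ALCI$ CIs) then yields $\Amc \cup \Omc \models Q(a)$, contradicting the fit. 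For (a), let $(\Amc, Q(a)) \in E^+$ and $h \colon \Amc \to \Cmc$; it suffices to prove $\Cmc \cup \Omc \models Q(h(a))$. Given $\Imc \models \Cmc \cup \Omc$, I would construct $\Imc^*$ on $\Delta^\Imc \uplus \mn{ind}(\Amc)$ by making each $x \in \mn{ind}(\Amc)$ a ``twin'' of $h(x)$: put $x$ into $A^{\Imc^*}$ exactly when $h(x) \in A^\Imc$, and add $(x,d), (d,x), (x,y)$ to $r^{\Imc^*}$ exactly when the corresponding edge at $h(x)$ is in $r^\Imc$. The map $g \colon \Imc^* \to \Imc$ with $g(x) = h(x)$ for $x \in \mn{ind}(\Amc)$ and $g(d) = d$ otherwise is a bounded morphism, so $\Imc^* \models \Omc$; and $\Imc^* \models \Amc$ because $h$ is a homomorphism into $\Cmc$ and $\Imc \models \Cmc$. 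Since $\Omc$ fits $E^+$, $a \in Q^{\Imc^*}$, hence $h(a) \in Q^\Imc$.

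The main technical hurdle I expect lies in verifying the bounded-morphism property of $g$ in the $\ALCI$ case, where the back conditions for both direct and inverse roles must be checked case by case and drive the precise definition of $r^{\Imc^*}$ at twins. A secondary but crucial design choice is to define $\Cmc$ via entailment from $\Amc^-$ rather than iteratively from the growing completion, which is what makes $\Amc^- \cup \Omc$- and $\Cmc \cup \Omc$-entailment coincide and thereby closes the argument for condition~(a).
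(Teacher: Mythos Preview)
Your proposal is correct. For the direction $(2)\Rightarrow(1)$, you essentially reinvent the CSP-style ontology that the paper obtains as a black box from Proposition~\ref{prop:fullhom}, Point~2; your extra axioms $A_b \sqsubseteq A$ make the argument for positive examples slightly more direct but are otherwise inessential. The genuine difference is in $(1)\Rightarrow(2)$. The paper defines $\Cmc$ from one fixed countermodel $\Imc$ (the disjoint union of witnesses for the negative examples) by adding $Q(b)$ whenever $b\in Q^{\Imc}$, and then verifies Condition~(a) by invoking the CSP ABox $\Amc_\Omc$ from Proposition~\ref{prop:fullhom}, Point~1: a violation of (a) is composed with the homomorphism $\Cmc\to\Amc_\Omc$ to contradict $\Amc\cup\Omc\models Q(a)$. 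You instead define $\Cmc$ via entailment from $\Amc^-\cup\Omc$ and verify~(a) by an elementary twin construction that produces, from any model $\Imc$ of $\Cmc\cup\Omc$, a bisimilar model $\Imc^*$ of $\Amc\cup\Omc$. This buys you a self-contained argument that never appeals to the CSP connection, at the price of the case analysis you flag (checking the back conditions for the bounded morphism $g$ in the $\ALCI$ case). The paper's route is shorter once Proposition~\ref{prop:fullhom} is in hand, but yours is arguably more transparent and shows directly why the characterization holds without relying on the CSP machinery.
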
   
The announced behavior of positive examples as an implication is reflected by Point~2a.
Note that, as in the consistency case, there is no
difference between \ALC and \ALCI.
The proof of Theorem~\ref{thm:AQchar} is
similar to that of Theorem~\ref{thm:conschar}. It
might be worthwhile to note that Theorem~\ref{thm:AQchar}
does not suggest a counterpart of Corollary~\ref{cor:onlyonenegative}. 
Such a counterpart would speak about
single positive examples rather than 
single negative ones, because of the
complementation mentioned above. 
The following
example illustrates that it does not suffice to concentrate on a
single positive example (nor a single negative one).
Intuitively, this is due to the fact that the ABox \Amc in Point~2a
may be disconnected.
\begin{example}\label{ex:AQ}
  Consider the collection of labeled ABox-AQ examples $E= (E^+, E^-)$ with
  \begin{align*}
      E^+ &= \{\; (\{ A_2(a)\}, A_1(a)), \ (\{A_3(b), A_4(b')\},
            A_2(b)) \;\}\\
      E^- &= \{ \; (\{A_3(c)\}, A_1(c)),\ (\{A_4(d)\}, A_5(d)) \; \}. 
  \end{align*}
  $E$ does  not admit a fitting \ALCI-ontology, which can be seen
  by applying Theorem~\ref{thm:AQchar}: by definition, any completion \Cmc must satisfy
  $\Amc^- = \{ A_3(c), A_4(d) \} \subseteq \Cmc$. To satisfy Condition~2a of
  Theorem~\ref{thm:AQchar},
  it must then  also satisfy $A_2(c) \in \Cmc$ and $A_1(c) \in \Cmc$. But
  then
  \Cmc violates Condition~2b of Theorem~\ref{thm:AQchar} for the first
  negative
  example. If we drop any of the positive examples, we find completions $\Cmc = \Amc^-$ and $\Cmc = \Amc^- \cup \{B(c)\}$, respectively, which satisfy Conditions~2a and~2b. Also dropping any negative example leads to a satisfying completion.
\end{example}

In contrast to the case of consistent \Lmc-ontology fitting, a naive implementation of the
characterization given in Theorem~\ref{thm:AQchar} only gives a $\Sigma^p_2$-upper bound:  guess the completion \Cmc for $E$ and then co-guess the homomorphisms in Point~(a).
In the following, we show how to improve this to \coNPclass. The main observation is that we
can do better than guessing \Cmc blindly, by
treating positive examples as rules. 
%
\begin{definition}
\label{def:refcand}
Let $E = (E^+, E^-)$ be a collection of labeled ABox-AQ examples, and let
$\Amc^- = \biguplus_{(\Amc,Q(a)) \in E^-} \Amc$. A \emph{refutation
candidate} for $E$ is an 
            ABox \Cmc that can be obtained by starting with $\Amc^-$ and then
         applying the following rule zero or more times:
    \begin{description}

      \item[(\Rsf)]  
        if $(\Amc,Q(a)) \in E^+$ and $h$ is a homomorphism from \Amc
        to \Cmc, then set $\Cmc = \Cmc \cup \{ Q(h(a)) \}$.
            \end{description}

 \end{definition}
Rule (\Rsf) can add at most $|E^+|\cdot|\mn{ind}(\Amc^-)|$ (and thus only polynomially many) assertions to $\Amc^-$.
\begin{restatable}{proposition}{proprefcand}
\label{prop:refcand}
    	Let $E = (E^+, E^-)$ be a collection of labeled ABox-AQ examples
    and let $\Lmc \in \{ \ALC,\ALCI \}$. 
    Then  the following 
    are equivalent:
    \begin{enumerate}

        \item     $E$ admits no fitting
    \Lmc-ontology;

    \item there is a refutation candidate \Cmc for $E$ such that 
     $Q(a) \in \Cmc$ for some $(\Amc,Q(a)) \in E^-$.
       
    \end{enumerate}

\end{restatable}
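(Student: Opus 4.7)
The plan is to deduce Proposition~\ref{prop:refcand} from the characterization already provided by Theorem~\ref{thm:AQchar}, proving the direction from (2) to (1) semantically and the direction from (1) to (2) by producing a concrete satisfying completion.

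For the direction ``$2 \Rightarrow 1$'', I would argue by contradiction. Fix a refutation candidate \Cmc with $Q(a) \in \Cmc$ for some $(\Amc_0,Q(a)) \in E^-$ and suppose \Omc is an \Lmc-ontology that fits $E$. Since every negative example is individually consistent with \Omc and the ABoxes in $E$ have pairwise disjoint individual names, a disjoint-union construction yields a model \Imc of $\Amc^- \cup \Omc$ in which $a \notin Q^\Imc$. I would then show by induction on the sequence of (\Rsf)-applications that every assertion in \Cmc is satisfied by \Imc. The base case $\Amc^- \subseteq \Cmc$ is immediate. In the inductive step, a new assertion $P(h(a'))$ is added because some $(\Amc',P(a')) \in E^+$ admits a homomorphism $h$ from $\Amc'$ into the current ABox; by the induction hypothesis, $h$ is then also a homomorphism from $\Amc'$ into \Imc. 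A semantic lemma linking homomorphisms to entailment (see below) then yields $h(a') \in P^\Imc$. Applying the induction to the final rule application in particular gives $a \in Q^\Imc$, contradicting the choice of \Imc.

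For ``$1 \Rightarrow 2$'' I would argue contrapositively. Assume that no refutation candidate contains $Q(a)$ for any $(\Amc,Q(a)) \in E^-$. Because (\Rsf) is monotone and can add at most $|E^+|\cdot|\mn{ind}(\Amc^-)|$ assertions, it has a fixpoint $\Cmc^{\ast}$, which is itself a refutation candidate. By construction $\Cmc^{\ast}$ extends $\Amc^-$ only by concept assertions $Q(b)$ with $Q$ appearing as an AQ in $E^+$ and $b \in \mn{ind}(\Amc^-)$, so $\Cmc^{\ast}$ qualifies as a completion in the sense of Theorem~\ref{thm:AQchar}. Fixpoint-closure under (\Rsf) is exactly Condition~2a of that theorem, and our hypothesis supplies Condition~2b. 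Theorem~\ref{thm:AQchar} then produces a fitting \Lmc-ontology, so~(1) fails.

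The main obstacle is the semantic lemma invoked in the inductive step of the first direction: from a homomorphism $h\colon \Amc' \to \Imc$ that need not fix individual names and from $\Amc' \cup \Omc \models P(a')$, one must conclude $h(a') \in P^\Imc$ under the standard-names assumption. I would prove this by passing to an isomorphic copy of \Imc obtained by re-labelling, for each $x \in \mn{ind}(\Amc')$, the element $h(x)$ as $x$; the resulting structure is still a model of \Omc because concept inclusions do not mention individual names, and it is a model of $\Amc'$ precisely because $h$ is a homomorphism. The entailment thus forces $a'$ to lie in the (re-labelled) extension of $P$, which translates back to $h(a') \in P^\Imc$ in \Imc.
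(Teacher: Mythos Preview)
Your ``$1 \Rightarrow 2$'' direction is exactly the paper's argument: take the (\Rsf)-fixpoint as the completion and invoke Theorem~\ref{thm:AQchar}.

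For ``$2 \Rightarrow 1$'' you diverge. The paper stays syntactic and uses Theorem~\ref{thm:AQchar} once more: rule~(\Rsf) is literally Condition~2a, so any completion satisfying~2a must contain the given refutation candidate and therefore violate~2b; hence Point~2 of Theorem~\ref{thm:AQchar} fails and no fitting exists. Your route is semantic and independent of Theorem~\ref{thm:AQchar} for this direction, at the cost of the extra lemma. Both are fine; the paper's version is shorter, yours is more self-contained.

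There is one genuine gap in your lemma. The ``re-labelling'' argument presupposes that $h$ is injective on $\mn{ind}(\Amc')$: if $h(x)=h(y)$ for distinct $x,y$, the single element $h(x)\in\Delta^\Imc$ cannot be renamed to both $x$ and $y$, so you do not obtain an isomorphic copy of~\Imc that models~$\Amc'$. The repair is standard. Define $\Jmc$ with $\Delta^{\Jmc}=\mn{ind}(\Amc')\uplus\Delta^{\Imc}$ and let $g\colon\Delta^{\Jmc}\to\Delta^{\Imc}$ send $a\in\mn{ind}(\Amc')$ to $h(a)$ and be the identity on $\Delta^{\Imc}$; set $A^{\Jmc}=g^{-1}(A^{\Imc})$ and $r^{\Jmc}=\{(e,e')\mid (g(e),g(e'))\in r^{\Imc}\}$. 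Then $g$ induces an \Lmc-bisimulation, so $\Jmc\models\Omc$; since $h$ is a homomorphism $\Amc'\to\Imc$ we get $\Jmc\models\Amc'$; and $a'\in P^{\Jmc}$ iff $h(a')\in P^{\Imc}$. This is essentially the $\Imc_0$-part of the paper's $\Imc_{\Amc,h,\Lmc}$ construction together with a copy of~\Imc, and it delivers exactly the conclusion you need.
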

Note that Point~1 of Proposition~\ref{prop:refcand} is the
complement of Point~1 of Theorem~\ref{thm:AQchar},
and thus \Cmc has a different role: we may co-guess
it, in contrast to the \Cmc in  Theorem~\ref{thm:AQchar} which needs to be
guessed. A close look reveals that we indeed
obtain  a \coNPclass upper bound.
\begin{theorem}
\label{thm:AQincoNP}
    Let $\Lmc \in \{ \ALC,\ALCI \}$. Then  $(\Lmc,\text{AQ})$-ontology fitting  is \coNPclass-complete.
\end{theorem}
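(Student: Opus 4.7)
For the upper bound, the plan is to exploit Proposition~\ref{prop:refcand}, which already reduces the question to the search for a small, combinatorial witness. Specifically, I would place \emph{non-existence} of a fitting in $\NPclass$ by a nondeterministic polynomial-time procedure that builds a refutation candidate $\Cmc$ incrementally: starting from $\Amc^-$, at each step it guesses a positive example $(\Amc,Q(a)) \in E^+$ together with a map $h \colon \mn{ind}(\Amc) \to \mn{ind}(\Amc^-)$, verifies in polynomial time that $h$ is a homomorphism from $\Amc$ into the currently constructed ABox, and then appends the assertion $Q(h(a))$. The observation following Definition~\ref{def:refcand} that rule~(\Rsf) can add at most $|E^+| \cdot |\mn{ind}(\Amc^-)|$ assertions bounds both the number of guessed steps and the size of every intermediate ABox polynomially, so the procedure runs in nondeterministic polynomial time. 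It accepts iff the final $\Cmc$ contains $Q(a)$ for some $(\Amc,Q(a)) \in E^-$; by Proposition~\ref{prop:refcand} this exactly characterizes non-existence of a fitting \Lmc-ontology, so complementation yields the desired $\coNPclass$ upper bound.

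For the matching lower bound, the plan is to reduce from the \emph{pointed directed graph homomorphism} problem: given digraphs $G,H$ with distinguished vertices $u \in V(G)$ and $v \in V(H)$, decide whether there is a graph homomorphism $h \colon G \to H$ with $h(u) = v$. This problem is $\NPclass$-complete (obtained from the unpointed version by trivial padding). Given such an instance, I would use a single role name $r$ and a fresh concept name $Q$, encode $G$ and $H$ as ABoxes $\Amc_G$ and $\Amc_H$ over disjoint individual names via one assertion $r(x,y)$ per edge, and output $E = (E^+,E^-)$ with $E^+ = \{(\Amc_G,Q(u))\}$ and $E^- = \{(\Amc_H,Q(v))\}$. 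The key point is that $Q$ does not occur in $\Amc_G$, so every application of rule~(\Rsf) to a refutation candidate for $E$ is witnessed by a homomorphism $h \colon \Amc_G \to \Amc_H$ and adds an assertion $Q(h(u))$. Applying Proposition~\ref{prop:refcand}, the collection $E$ admits no fitting \Lmc-ontology iff there exists such an $h$ with $h(u) = v$, which is precisely the pointed homomorphism question.

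The only step requiring a bit of care is the polynomial bookkeeping in the upper bound: one must observe that each guessed homomorphism has polynomial size (being a function between polynomially many individuals) and that the sequence of rule applications can always be truncated after polynomially many steps. Both points are immediate from the bound recorded right after Definition~\ref{def:refcand}. Note also that the whole argument works uniformly for $\Lmc = \ALC$ and $\Lmc = \ALCI$, since Proposition~\ref{prop:refcand} does not distinguish between the two ontology languages.
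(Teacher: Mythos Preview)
Your proof is correct and follows essentially the same approach as the paper: the upper bound via guessing a bounded sequence of rule-(\Rsf) applications is exactly what the paper does, and for the lower bound the paper simply says ``as in the consistency case'' without spelling out the adaptation to AQs, so your explicit reduction via pointed digraph homomorphism is a valid way to fill that in. One tiny technicality worth noting: since your ABoxes $\Amc_G$ and $\Amc_H$ consist only of role assertions, you should ensure that the distinguished vertices $u$ and $v$ are incident to at least one edge so that they actually occur as individuals (this is without loss of generality for the \NPclass-hardness of the source problem).
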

\noindent
\begin{proof}
   \coNPclass-hardness can be proved as in the consistency case. It thus remains to
   argue that the complement of the $(\Lmc,\text{AQ})$-ontology fitting problem is in \NPclass. By Proposition~\ref{prop:refcand}, it suffices to guess 
   an ABox \Cmc with the same individuals as $\Amc^-$ and to verify that (i)~\Cmc is a refutation candidate and (ii)~$Q(a) \in \Cmc$ for some $(\Amc,Q(a)) \in E^-$. To verify (i)~we may guess, along with \Cmc, a sequence of positive examples $(\Amc,Q(a)) \in E^+$ with associated homomorphisms from \Amc that demonstrate the construction of \Cmc from $\Amc^-$ by repeated applications of Rule~(\Rsf). The maximum length of the sequence is $|E^+|\cdot|\mn{ind}(\Amc^-)|$. With the sequence at hand, it is then easy to verify deterministically in polynomial time that \Cmc is a refutation candidate.
\end{proof}
In view of the close connection between
ABox consistency and AQ entailment, one may
wonder whether the two fitting 
problems studied in this and the preceding
section are, in some reasonable sense, identical. 
 We may
ask whether for every instance $E$ of consistent \Lmc-ontology fitting, there is an instance $E'$
of $(\Lmc,AQ)$-ontology fitting with the same set of fitting ontologies and vice versa. 
It turns out that neither is the case. For better
readability, in the following we refer to ABox examples as ABox-consistency examples. For \Lmc an ontology language and $E$ a collection of labeled  examples, let $O_{E, \Lmc}$ be the set of all $\Lmc$-ontologies that fit $E$.
\begin{restatable}{proposition}{propexppowerone}
Let $\Lmc \in \{\ALC, \ALCI\}$. Then 
\begin{enumerate}
	\item there exists a collection of ABox-AQ examples $E$, such that there is no collection of ABox-consistency examples $E'$ with $O_{E, \Lmc} = O_{E', \Lmc}$;
    \item there exists a collection of ABox-consistency examples, such that there is no collection of ABox-AQ examples $E'$ with $O_{E, \Lmc} = O_{E', \Lmc}$.
\end{enumerate}
\end{restatable}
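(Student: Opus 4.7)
The plan is to exhibit explicit witnesses for both parts and argue by probing any candidate collection $E'$ of the other type with a small set of ``extreme'' ontologies: the inconsistent ontology $\Omc_\bot = \{\top \sqsubseteq \bot\}$, the empty ontology $\Omc_\top := \emptyset$, and one carefully chosen third ontology. The guiding observations are that $\Omc_\bot$ fits a consistency collection iff its positive part is empty and fits an AQ collection iff its negative part is empty, whereas $\Omc_\top$ has the dual behaviour on consistency examples and, on AQ examples, fits iff every positive example is already entailed by its ABox alone. These observations pin down the shape of any putative $E'$, after which the third probe ontology supplies the contradiction.

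For part~1, I will take $E$ with $E^+ = \{(\{A(a)\}, B(a))\}$ and $E^- = \emptyset$; its fitting ontologies are those $\Omc$ with $\{A(a)\} \cup \Omc \models B(a)$. In particular, $\Omc_\bot \in O_{E,\Lmc}$, $\Omc_\top \notin O_{E,\Lmc}$, and $\Omc_1 := \{A \sqsubseteq B\} \in O_{E,\Lmc}$. Assuming a consistency collection $E' = (E'^+, E'^-)$ with $O_{E, \Lmc} = O_{E', \Lmc}$, I would first observe that $\Omc_\bot \in O_{E', \Lmc}$ forces $E'^+ = \emptyset$ (since $\Omc_\bot$ is consistent with no ABox), and that $\Omc_\top \notin O_{E', \Lmc}$ forces $E'^- \neq \emptyset$ (since $\Omc_\top$ is consistent with every ABox). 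Picking any $\Amc \in E'^-$, membership $\Omc_1 \in O_{E', \Lmc}$ requires $\Amc$ to be inconsistent with $\Omc_1$; but $\Omc_1$ is consistent with every ABox, witnessed by the model that interprets both $A$ and $B$ as the full domain, a contradiction.

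For part~2, I will take $E$ with $E^+ = \emptyset$ and $E^- = \{\{A(a)\}\}$, so that $O_{E, \Lmc}$ consists of those $\Omc$ that make $\{A(a)\}$ inconsistent. Assuming an AQ collection $E' = (E'^+, E'^-)$ with $O_{E, \Lmc} = O_{E', \Lmc}$, the analogous probing yields $E'^- = \emptyset$ (from $\Omc_\bot \in O_{E, \Lmc}$, since $\Omc_\bot$ violates every negative AQ example) and $E'^+ \neq \emptyset$ (from $\Omc_\top \notin O_{E, \Lmc}$). Writing $E'^+ = \{(\Amc_i, Q_i(a_i))\}_{i=1}^k$, I will set $\Omc^* = \{\top \sqsubseteq Q_i : 1 \leq i \leq k\}$. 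Then $\Omc^*$ trivially entails each $Q_i(a_i)$, hence $\Omc^* \in O_{E', \Lmc}$; but $\Omc^*$ is consistent with $\{A(a)\}$ (interpret every concept as the full domain), so $\Omc^* \notin O_{E, \Lmc}$, contradicting the assumed equality.

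Both arguments are uniform in $\Lmc \in \{\ALC, \ALCI\}$ since only plain concept-name inclusions appear among the probes. The main subtlety, and where I expect most of the thought to go, is the choice of the third probe ontology: it must certifiably lie in the fitting set of the ``query'' side (either by genuinely entailing the AQ of the positive example in part~1, or by entailing every $Q_i(a_i)$ in $E'^+$ in part~2) while being consistent with every ABox, so that the consistency side cannot detect it. Once this is set up, the remainder is a bookkeeping check that $\Omc_\bot$ and $\Omc_\top$ constrain the shape of $E'$ exactly as claimed.
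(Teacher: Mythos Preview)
Your proof is correct; both parts go through as written and are uniform in $\Lmc$.

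The route differs from the paper's mainly in the choice of witnesses. For Point~1 the paper takes $E$ with \emph{both} a positive and a negative AQ example, $E^+=\{(\{A(a)\},B_1(a))\}$ and $E^-=\{(\{A(a)\},B_2(a))\}$, and observes that $\Omc_1=\{\top\sqsubseteq B_1\}$ and $\Omc_2=\{\top\sqsubseteq B_2\}$ are each consistent with every ABox yet only $\Omc_1$ fits $E$; hence no consistency collection can separate them. This is a two-probe indistinguishability argument, whereas you use a positive-only $E$ and three probes ($\Omc_\bot$, $\Omc_\top$, $\{A\sqsubseteq B\}$) to successively constrain the shape of $E'$. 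For Point~2 the paper uses a \emph{positive} consistency example $E^+=\{\{s(a,b)\}\}$ and splits on whether $E'^-$ is empty, which is structurally the dual of your argument; your construction $\Omc^*=\{\top\sqsubseteq Q_i\}$ from all of $E'^+$ is a mild generalization of the paper's single $\{\top\sqsubseteq A\}$. Both approaches are equally elementary; the paper's Point~1 is marginally slicker, while your scheme is more uniform across the two parts and makes the role of $\Omc_\bot$ and $\Omc_\top$ as canonical probes explicit.
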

\noindent
\begin{proof}
For Point~1, consider the collection of ABox-AQ examples $E = ( E^+, E^-)$ with $E^+ = \{ ( \{ A(a) \}, B_1(a ) ) \}$ and
$E^- = \{ ( \{ A(a) \} , B_2(a) ) \}$. Note that we use $A(a)\}$ only to ensure that $a$ occurs in the ABoxes.
It is easy to see that for $\Omc_1 = \{ \top \sqsubseteq B_1 \}$ and $\Omc_2 = \{ \top \sqsubseteq B_2\}$, $\Omc_1$ fits $E$ and $\Omc_2$ does not.
Furthermore, every ABox is consistent with both $\Omc_1$ and $\Omc_2$.
Therefore, for every collection of ABox-consistency example $E'$,
either $\{ \Omc_1, \Omc_2 \} \subseteq O_{E', \Lmc}$ or $\Omc_1 \notin O_{E', \Lmc}$ and $\Omc_2 \notin O_{E', \Lmc}$.

\smallskip

For Point~2, consider the collection of ABox-consistency examples $E = (E^+, \emptyset )$ with $E^+ = \{ \{ s(a, b)\} \}$ and let $E'$ be any collection of ABox-AQ examples.
If there is a negative example $(\Amc, A(a))$ in $E'$ for some concept name $A$,
then for $\Omc = \{ \top \sqsubseteq A \}$, $\Omc$ fits $E$ but $\Omc$ does not fit $E'$.
If there is no negative example in $E'$, then for $\Omc' = \{ \top \sqsubseteq \bot \}$, 
$\Omc'$ fits $E'$, but $\Omc'$ does not fit $E$.
\end{proof}

\section{Full Conjunctive Queries}
\label{sect:fullCQ}

We next study the case of full conjunctive queries. Technically, it is closely related
to both the AQ-based case and the ABox consistency-based case. However, the potential presence of role atoms in queries brings some technical complications.

We call an example $(\Amc,q)$ \emph{inconsistent} if any \ALCI-ontology $\Omc$ that satisfies $\Amc \cup \Omc \models q$ is inconsistent with~\Amc,
and \emph{consistent} otherwise. It is easy to see
that any example $(\Amc,q)$ such that $q$ contains
a role atom $r(a,b) \notin \Amc$ must be inconsistent. In fact, this follows from Lemma~\ref{lem:forest}. Conversely, any example 
$(\Amc,q)$ such that $q$ does not contain
such a role atom is consistent. This is witnessed
by the ontology 
$$
  \Omc = \{ \top \sqsubseteq A \mid A(a) \in q \}.
$$
%
%
%
Note that an inconsistent positive example $(\Amc,q)$
expresses the constraint that \Amc must be inconsistent with
the  fitting ontology \Omc and an inconsistent negative example $(\Amc,q)$
expresses  that \Amc must be consistent with
the  fitting ontology~\Omc. 
In view of this, it is clear that, up to swapping positive and negative examples, FullCQ-based
fitting generalizes consistency-based fitting.
Moreover, it trivially generalizes AQ-based
fitting since every AQ is a full CQ.
\begin{proposition}
Let $\Lmc \in \{\ALC, \ALCI\}$. Then for every collection of ABox-consistency or ABox-AQ examples $E$, there is a collection of ABox-FullCQ examples such that $O_{E, \Lmc} = O_{E', \Lmc}$.
\end{proposition}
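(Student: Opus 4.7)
The plan is to handle the two input cases separately, combining the observations collected just before the proposition statement.

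\textbf{ABox-AQ examples.} I would simply set $E' := E$, reinterpreting each atomic query $A(a)$ as the full CQ consisting of the single atom $A(a)$. Since AQ entailment and FullCQ entailment agree on atomic queries, the induced fitting relations coincide and $O_{E,\Lmc} = O_{E',\Lmc}$ follows immediately.

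\textbf{ABox-consistency examples.} Given $E = (E^+, E^-)$, the idea is to re-express each constraint ``$\Amc$ consistent (resp.\ inconsistent) with $\Omc$'' by an inconsistent FullCQ example placed on the negative (resp.\ positive) side of $E'$. Assume that every ABox in $E$ is non-empty, which is a standard convention and the only real edge case here. For each ABox $\Amc$ occurring in $E$ I pick some $a \in \mn{ind}(\Amc)$ and a role name $r \in \NR$ with $r(a,a) \notin \Amc$ (possible since $\NR$ is infinite) and set $q_\Amc := r(a,a)$. By Lemma~\ref{lem:forest}, whenever $\Amc \cup \Omc$ has a model it has a forest model in which the only role connections among named individuals come from $\Amc$, so $q_\Amc$ is not satisfied there. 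Hence $(\Amc, q_\Amc)$ is inconsistent in the sense defined before the proposition, and $\Amc \cup \Omc \models q_\Amc$ holds iff $\Amc$ is inconsistent with $\Omc$.

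I then define $E' := (E'^+, E'^-)$ by
$E'^+ := \{(\Amc, q_\Amc) \mid \Amc \in E^-\}$ and
$E'^- := \{(\Amc, q_\Amc) \mid \Amc \in E^+\}$,
so that $\Omc \in O_{E', \Lmc}$ iff $\Amc$ is inconsistent with $\Omc$ for every $\Amc \in E^-$ and consistent with $\Omc$ for every $\Amc \in E^+$, which is exactly the condition for $\Omc \in O_{E, \Lmc}$. The only delicate point is ensuring the existence of an individual $a \in \mn{ind}(\Amc)$ from which to form $q_\Amc$; under the non-emptiness convention on input ABoxes no further obstacle arises, and every other step of the argument is a direct application of the observations already laid out in the preamble of the section.
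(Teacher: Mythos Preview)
Your proof is correct and follows exactly the route indicated in the paper: the AQ case is handled by the inclusion of AQs in full CQs, and the consistency case by attaching to each ABox an inconsistent full CQ (a role atom not in the ABox) and swapping positive and negative, using Lemma~\ref{lem:forest} to verify the equivalence $\Amc \cup \Omc \models q_\Amc$ iff $\Amc$ is inconsistent with~$\Omc$. The paper does not give a formal proof beyond the observations in the preamble to the proposition, and your write-up simply makes those observations precise; the non-emptiness caveat you flag is the only genuine side condition, and it is harmless here.
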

For the following development, we would ideally
like to get rid of inconsistent examples to 
achieve simpler characterizations. 

There is, however, no obvious way to achieve this for inconsistent positive examples. We can get rid of 
inconsistent negative examples based on the following observation. Assume that a collection
of ABox-FullCQ examples $E$ contains an inconsistent negative example $(\Amc,q)$.
We replace it with the negative example $(\Amc,X(a))$ where $X$ is a
fresh concept name and $a \in \mn{ind}(\Amc)$ is chosen arbitrarily.
 The set of fitting \ALCI-ontologies for the resulting set of examples $E'$ remains essentially the same.
\begin{restatable}{lemma}{lemNoInconsEx}
  For $\Lmc \in \{ \ALC,\ALCI \}$, there
  is an \Lmc-ontology that fits $E$ if and only if there is one that fits $E'$.
\end{restatable}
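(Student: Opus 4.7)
The plan is to show that essentially the same ontology serves as a fitting for $E$ and $E'$, based on two simple observations. First, by the definition of ``inconsistent example'', for any ontology $\Omc''$ the condition $\Amc \cup \Omc'' \models q$ can only hold when $\Amc$ is inconsistent with $\Omc''$ (and then it holds vacuously), so $\Amc \cup \Omc'' \not\models q$ is equivalent to $\Amc$ being consistent with $\Omc''$. Second, for any $\Omc''$, $\Amc \cup \Omc'' \not\models X(a)$ always implies $\Amc$ is consistent with $\Omc''$ (otherwise $X(a)$ follows trivially), and provided that $X$ does not occur in $\Amc$ or $\Omc''$, the converse also holds, because any model of $\Amc \cup \Omc''$ can be modified to interpret $X$ as $\emptyset$ without affecting anything in $\Amc$ or $\Omc''$.

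For the forward direction, suppose $\Omc$ fits $E$. Since $X$ is fresh with respect to $E$, I rename any occurrences of $X$ in $\Omc$ to a concept name not appearing in $E \cup \{X\}$, obtaining an ontology $\Omc'$ whose behavior on the symbols of $E$ coincides with that of $\Omc$; in particular, $\Omc'$ still fits $E$. The examples that are unchanged between $E$ and $E'$ are then handled automatically, so it remains to verify the replaced negative example $(\Amc, X(a))$: since $\Omc'$ fits the original $(\Amc, q)$ negatively, observation~1 gives that $\Amc$ is consistent with $\Omc'$, and then observation~2, applicable because $X$ occurs in neither $\Amc$ nor $\Omc'$, yields $\Amc \cup \Omc' \not\models X(a)$.

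For the converse, suppose $\Omc$ fits $E'$. I claim that $\Omc$ itself fits $E$. Unchanged examples are immediate, so consider the replaced inconsistent negative example $(\Amc, q) \in E^-$. From $\Omc$ fitting $(\Amc, X(a))$ negatively, $\Amc \cup \Omc \not\models X(a)$; the first half of observation~2 (which needs no freshness assumption on $X$ with respect to $\Omc$) shows that $\Amc$ is consistent with $\Omc$, and observation~1 then delivers $\Amc \cup \Omc \not\models q$. The only mild subtlety in the whole argument is the renaming step in the forward direction, needed to handle the scenario that a candidate fitting $\Omc$ for $E$ happens to use the concept name $X$ that was declared fresh only with respect to $E$.
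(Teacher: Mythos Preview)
Your proof is correct and follows essentially the same approach as the paper: both directions hinge on the equivalence, for an inconsistent example $(\Amc,q)$, between $\Amc \cup \Omc \not\models q$ and consistency of $\Amc$ with $\Omc$, and both handle the forward direction by renaming $X$ out of the candidate fitting ontology. One small technical slip: when you rename $X$ to a concept name ``not appearing in $E \cup \{X\}$'', you should also require the new name to be fresh with respect to $\Omc$, since otherwise the renamed ontology need not agree with $\Omc$ on the symbols of $E$; the paper avoids this by taking $Y$ globally fresh.
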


\medskip
\emph{Completions} for collections of ABox-FullCQ examples are defined in 
exact analogy with completions for collections of ABox-AQ examples.
%
\begin{restatable}{theorem}{thmfullCQchar}
\label{thm:fullCQchar}
	Let $E = (E^+, E^-)$ be a collection of labeled ABox-FullCQ examples
    and let $\Lmc \in \{ \ALC,\ALCI \}$. 
    Then  the following 
    are equivalent:
    \begin{enumerate}

        \item     $E$ admits a fitting
    \Lmc-ontology;

    \item there is a completion \Cmc for $E$ such that 
    \begin{enumerate}
        
    \item 
    for all consistent \mbox{$(\Amc,q) \in E^{+}$}:
       if $h$ is a homomorphism from \Amc to \Cmc and $Q(a) \in q$, then  \mbox{$Q(h(a)) \in \Cmc$};

   \item for all 
   $(\Amc,q) \in E^-$: there is a $Q(a) \in q$ such that $Q(a) \not\in \Cmc$;

    \item for all inconsistent \mbox{$(\Amc,q) \in E^{+}$}: there is no 
    homomorphism from \Amc to \Cmc.
   
    \end{enumerate}
    \end{enumerate}
\end{restatable}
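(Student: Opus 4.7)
The plan is to adapt the proof strategy of Theorems~\ref{thm:conschar} and~\ref{thm:AQchar} to the richer FullCQ setting. By the preceding lemma that replaces every inconsistent negative example with $(\Amc, X(a))$ for a fresh concept name $X$, I may assume without loss of generality that every negative example is consistent, so every role atom of a negative query already lies in its ABox. The proof then proceeds in two directions that mirror the AQ case, with extra attention to the query atoms and to inconsistent positive examples.

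For the direction $1 \Rightarrow 2$, I would let \Omc be a fitting \Lmc-ontology and set
$$\Cmc := \Amc^- \cup \{ Q(b) \mid b \in \mn{ind}(\Amc^-),\, Q \text{ a concept name in some query of } E^+,\, \Amc^- \cup \Omc \models Q(b) \}.$$
To verify 2(a), pick a consistent $(\Amc, q) \in E^+$, a homomorphism $h \colon \Amc \to \Cmc$, and a concept atom $Q(a) \in q$; for any model $\Imc \models \Amc^- \cup \Omc$, reinterpreting each $a \in \mn{ind}(\Amc)$ as $h(a)^\Imc$ is legitimate since completions add only concept assertions and ABoxes in $E$ have disjoint individuals. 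The resulting $\Imc'$ models $\Amc \cup \Omc$, so fitting gives $\Imc' \models q$ and hence $h(a) \in Q^\Imc$; as this holds for every $\Imc$, $Q(h(a))$ is entailed and lies in $\Cmc$. For 2(b) and a (now consistent) negative $(\Amc, q)$, the disjoint union of any model of $\Amc \cup \Omc$ refuting $q$ with models of the other negative ABoxes is a model of $\Amc^- \cup \Omc$ that still refutes $q$; by consistency the refuted atom is a concept atom $Q(a)$, so $Q(a) \notin \Cmc$. For 2(c), a homomorphism $h \colon \Amc \to \Cmc$ for an inconsistent $(\Amc, q) \in E^+$ would, by the same reinterpretation trick, turn any model of $\Amc^- \cup \Omc$ into a model of $\Amc \cup \Omc$, contradicting inconsistency of the example.

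For the direction $2 \Rightarrow 1$, given a completion \Cmc satisfying 2(a)--(c), I would adapt the CSP-style construction underlying Theorem~\ref{thm:conschar}. Introduce a fresh concept name $X_\alpha$ for each $\alpha \in \mn{ind}(\Cmc)$ and let \Omc consist of $\top \sqsubseteq \bigsqcup_\alpha X_\alpha$ together with $X_\alpha \sqcap X_\beta \sqsubseteq \bot$ for $\alpha \neq \beta$ (forcing a unique ``colour'' per element), $A \sqcap X_\alpha \sqsubseteq \bot$ for each concept name $A$ in $E$ with $A(\alpha) \notin \Cmc$, $X_\alpha \sqsubseteq \forall r. \bigsqcup \{ X_\beta \mid r(\alpha, \beta) \in \Cmc \}$ for each role name $r$ in $E$, and the crucial new axioms $X_\alpha \sqsubseteq Q$ for every concept name $Q$ with $Q(\alpha) \in \Cmc$. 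The first four groups enforce that reading the colour off any model of an ABox $\Bmc$ plus \Omc yields a homomorphism from $\Bmc$ to $\Cmc$, and conversely that any such homomorphism supplies a model; the last group supplies the query-entailment content. Fitting for a consistent $(\Amc, q) \in E^+$ then follows because the colour-induced restriction $h \colon \Amc \to \Cmc$ together with 2(a) gives $Q(h(a)) \in \Cmc$ for every concept atom $Q(a) \in q$, and the new axioms force $a \in Q^\Imc$, while role atoms of $q$ lie in $\Amc$ by consistency. For an inconsistent $(\Amc, q) \in E^+$, any model of $\Amc \cup \Omc$ would yield a homomorphism contradicting 2(c), so $\Amc \cup \Omc$ is inconsistent and vacuously entails $q$. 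For a negative $(\Amc, q)$, the canonical interpretation on $\mn{ind}(\Cmc)$ with $X_\alpha^\Imc = \{\alpha\}$ and the concept and role structure of $\Cmc$ is a model of $\Amc \cup \Omc$ in which the atom $Q(a)$ from 2(b) fails. The constructed \Omc lies in \ALC, so the theorem holds for both $\Lmc \in \{\ALC, \ALCI\}$.

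The main subtlety I anticipate is ensuring that the colouring argument produces a genuine homomorphism across \emph{every} model of $\Amc \cup \Omc$, which is exactly what lets 2(a) transfer into an entailment. The disjointness axioms $X_\alpha \sqcap X_\beta \sqsubseteq \bot$ play the critical role here: without them, an element could belong to several $X_\alpha$ simultaneously and the extracted ``homomorphism'' would be a multi-valued relation depending on arbitrary choices, breaking both the entailment argument for consistent positive examples and the contradiction argument for inconsistent ones. A minor bookkeeping point is that positive-example ABoxes may carry concept names absent from $\Cmc$'s vocabulary; the axioms $A \sqcap X_\alpha \sqsubseteq \bot$ then outlaw such names globally, making the affected ABoxes inconsistent with \Omc, which is harmless since $\Amc \cup \Omc \models q$ becomes vacuously satisfied in that case.
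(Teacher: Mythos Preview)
Your proposal is correct and gives a genuinely more direct argument than the paper's. The paper does not argue semantically the way you do; instead it \emph{reduces to the AQ case}. For $1 \Rightarrow 2$ it builds an auxiliary collection $E'$ of ABox-AQ examples by splitting each positive $(\Amc,q)$ into one example per concept atom of $q$ and, for each negative $(\Amc,q)$, selecting a single concept atom $Q(a)$ with $\Amc \cup \Omc \not\models Q(a)$; it then invokes Theorem~\ref{thm:AQchar} to obtain the completion, and handles Condition~(c) via the CSP correspondence of Proposition~\ref{prop:fromKR12}. For $2 \Rightarrow 1$ it again builds $E'$, invokes Theorem~\ref{thm:AQchar} to get a fitting ontology, and uses the specific ontology $\Omc_{\Cmc,\Sigma}$ of Proposition~\ref{prop:fullhom} to deal with inconsistent positive examples. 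Your approach bypasses these black boxes: you define \Cmc via entailment (rather than via a single witness model) and verify (a)--(c) by an explicit ``clone and reinterpret'' argument; you then write down the CSP ontology yourself, adding the axioms $X_\alpha \sqsubseteq Q$ for $Q(\alpha) \in \Cmc$ so that fitting of consistent positive examples follows directly from 2(a). What the paper's route buys is modularity (the FullCQ case becomes a short corollary of the AQ case); what your route buys is a self-contained proof that makes the semantic mechanism explicit and avoids appealing to Propositions~\ref{prop:fromKR12} and~\ref{prop:fullhom}. One small point worth tightening: your ``reinterpretation'' step in $1\Rightarrow 2$(a) and~(c) tacitly assumes that from $h\colon \Amc \to \Imc$ with $\Imc \models \Omc$ one obtains a model $\Jmc$ of $\Amc \cup \Omc$ with $a \in Q^{\Jmc}$ iff $h(a) \in Q^{\Imc}$; this is true for \ALC and \ALCI via a bisimulation-extension (cloning each $a \in \mn{ind}(\Amc)$ as a copy of $h(a)$), and it is exactly here that the disjointness of individual names is needed, but you should say so rather than just call it ``legitimate''.
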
 
The proof of Theorem~\ref{thm:fullCQchar}
uses Theorem~\ref{thm:AQchar}. Note that, once more, there is no difference between \ALC and \ALCI. As in the case of AQs,
our characterization suggests only a $\Sigma^p_2$
upper bound. However, we can  get down to \coNPclass in the same way as for AQs.
The following is in exact analogy with Definition~\ref{def:refcand}.
\begin{definition}
Let $E = (E^+, E^-)$ be a collection of labeled ABox-FullCQ examples, and let
$\Amc^- = \biguplus_{(\Amc,q) \in E^-} \Amc$. A \emph{refutation
candidate} for $E$ is an 
            ABox \Cmc that can be obtained by starting with $\Amc^-$ and then
         applying the following rule zero or more times:
    \begin{description}

      \item[(\Rsf)]  
        if $(\Amc,q) \in E^+$ is consistent and $h$ is a homomorphism from \Amc
        to \Cmc and $Q(a) \in q$, then set $\Cmc = \Cmc \cup \{ Q(h(a)) \}$.
            \end{description}
 \end{definition}
The proof of the following is then analogous to that of Proposition~\ref{prop:refcand}. Details are omitted.
\begin{proposition}
\label{prop:refcandFullCQ}
    	Let $E = (E^+, E^-)$ be a collection of labeled ABox-AQ examples
    and let $\Lmc \in \{ \ALC,\ALCI \}$. 
    Then  the following 
    are equivalent:
    \begin{enumerate}

        \item     $E$ admits no fitting
    \Lmc-ontology;

    \item there is a refutation candidate \Cmc for $E$ such that 
    one of the following conditions is satisfied:
    \begin{enumerate}
        
     \item there is an  $(\Amc,q) \in E^-$ such that $Q(a) \in \Cmc$ for all $Q(a) \in q$;

     \item there is an inconsistent $(\Amc,q)\in E^+$ and a homomorphism from \Amc to \Cmc.
     
           \end{enumerate}
    \end{enumerate}
\end{proposition}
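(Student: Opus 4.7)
The plan is to mirror the proof of Proposition~\ref{prop:refcand}, but reducing this time to Theorem~\ref{thm:fullCQchar} and accounting for two possible failure modes of a completion rather than one. The pivotal step I would establish first is a \emph{monotonicity lemma}: every refutation candidate \Cmc for $E$ is contained in every completion $\Cmc^{\ast}$ for $E$ that satisfies Condition~2a of Theorem~\ref{thm:fullCQchar}. I would prove this by induction on the number of applications of rule~(\Rsf) used to build \Cmc from $\Amc^{-}$. The base case $\Cmc=\Amc^{-}\subseteq \Cmc^{\ast}$ is immediate. For the inductive step, if \Cmc arises from $\Cmc_{\mathit{prev}}$ by adding $Q(h(a))$ via a consistent positive example $(\Amc,q) \in E^{+}$ with $Q(a) \in q$ and $h$ a homomorphism from \Amc to $\Cmc_{\mathit{prev}}$, then $\Cmc_{\mathit{prev}}\subseteq\Cmc^{\ast}$ by hypothesis, so $h$ remains a homomorphism into $\Cmc^{\ast}$, and Condition~2a of Theorem~\ref{thm:fullCQchar} forces $Q(h(a))\in\Cmc^{\ast}$.

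For the direction $2\Rightarrow 1$, suppose \Cmc is a refutation candidate satisfying (a) or (b), and let $\Cmc^{\ast}$ be any completion for $E$ that satisfies Condition~2a of Theorem~\ref{thm:fullCQchar}; by the monotonicity lemma, $\Cmc \subseteq \Cmc^{\ast}$. If (a) holds, then every atom $Q(a)\in q$ of some $(\Amc,q)\in E^{-}$ lies in \Cmc and hence in $\Cmc^{\ast}$, so $\Cmc^{\ast}$ violates Condition~2b. If (b) holds, the homomorphism from the inconsistent positive example \Amc into \Cmc composes with the inclusion $\Cmc\subseteq\Cmc^{\ast}$ to give a homomorphism into $\Cmc^{\ast}$, violating Condition~2c. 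Either way, no completion satisfies all three conditions simultaneously, so Theorem~\ref{thm:fullCQchar} yields that $E$ admits no fitting \Lmc-ontology.

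For $1\Rightarrow 2$, I would take the \emph{maximal} refutation candidate $\Cmc_{\max}$, obtained by exhaustively applying (\Rsf) to $\Amc^{-}$; since the operator is monotone and at most $|E^{+}|\cdot|\mn{ind}(\Amc^{-})|$ new assertions can ever be produced, the saturation terminates at a unique fixpoint. By construction $\Cmc_{\max}$ is a completion that satisfies Condition~2a of Theorem~\ref{thm:fullCQchar}. Since $E$ admits no fitting \Lmc-ontology, Theorem~\ref{thm:fullCQchar} tells us that $\Cmc_{\max}$ must violate Condition~2b or~2c, which delivers exactly (a) or (b) respectively. The only real obstacle is making sure that the two classes of positive examples are treated consistently on both sides of the equivalence: rule~(\Rsf) and Condition~2a of Theorem~\ref{thm:fullCQchar} are confined to \emph{consistent} positive examples, whereas inconsistent positive examples contribute the alternative failure mode captured by Condition~2c of Theorem~\ref{thm:fullCQchar} and by clause (b); once this alignment is spelled out, the induction and the case analysis go through cleanly.
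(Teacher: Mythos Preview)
Your proposal is correct and follows essentially the same route as the paper, which explicitly says the proof is analogous to that of Proposition~\ref{prop:refcand} and omits details; your monotonicity lemma just makes explicit what the AQ proof phrases as ``rule~(\Rsf) is identical to Condition~(a)'', and your case split over Conditions~2b and~2c of Theorem~\ref{thm:fullCQchar} is exactly the extra work needed to accommodate inconsistent positive examples. One minor imprecision: for FullCQs the bound on the number of assertions added by~(\Rsf) is not literally $|E^{+}|\cdot|\mn{ind}(\Amc^{-})|$ since a single query may contain several concept atoms, but the bound is still polynomial and termination is unaffected.
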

And finally, the proof of the following is similar 
to that of Theorem~\ref{thm:AQincoNP}.

\begin{restatable}{theorem}{thmfullCQconpupper}
    Let $\Lmc \in \{ \ALC,\ALCI \}$. Then  $(\Lmc,\text{FullCQ})$-ontology fitting  is \coNPclass-complete.
\end{restatable}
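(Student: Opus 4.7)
The \coNPclass lower bound is inherited: since every AQ is a FullCQ, the reduction from the digraph homomorphism problem used for $(\Lmc,\text{AQ})$-ontology fitting in Theorem~\ref{thm:AQincoNP} is already a reduction to $(\Lmc,\text{FullCQ})$-ontology fitting. It therefore suffices to establish the \NPclass upper bound for the complement of fitting existence, and the plan is to do this by a direct implementation of Proposition~\ref{prop:refcandFullCQ}.

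First, I will observe two easy syntactic facts that make the guessing polynomial. The individual names that may appear in a refutation candidate \Cmc are exactly those in $\mn{ind}(\Amc^-)$, and the concept names that may appear are those from $\Amc^-$ together with those occurring in some atom of a positive example. Hence $|\Cmc|$ is polynomially bounded in $\|E\|$, and the number of applications of rule~(\textsf{R}) needed to produce \Cmc is bounded by $|E^+|\cdot|\mn{ind}(\Amc^-)|$. Second, consistency of a positive example $(\Amc,q)$ is a polynomial-time syntactic check: by Lemma~\ref{lem:forest} and the discussion preceding Theorem~\ref{thm:fullCQchar}, it is inconsistent iff $q$ contains some role atom $r(a,b) \notin \Amc$.

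Given these observations, the nondeterministic algorithm is as follows. Guess an ABox \Cmc over $\mn{ind}(\Amc^-)$ together with a sequence $(\Amc_1,q_1,h_1), \ldots, (\Amc_k,q_k,h_k)$ of length at most $|E^+|\cdot|\mn{ind}(\Amc^-)|$, where each $(\Amc_i,q_i)\in E^+$ is a consistent positive example and $h_i$ is intended as a homomorphism witnessing rule~(\textsf{R}). In polynomial time verify that, starting from $\Amc^-$ and applying the triples in order, each $h_i$ is indeed a homomorphism from $\Amc_i$ into the current ABox and each added assertion $Q(h_i(a))$ for $Q(a) \in q_i$ appears in \Cmc, so that \Cmc equals the resulting ABox; this shows that \Cmc is a refutation candidate. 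Finally, check whether one of the two refutation conditions of Proposition~\ref{prop:refcandFullCQ} holds: condition~(a) is a direct lookup in \Cmc for each negative example, and condition~(b) is witnessed by additionally guessing an inconsistent positive example $(\Amc,q)\in E^+$ and a homomorphism from \Amc to~\Cmc.

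I expect no serious obstacle. The only point requiring a moment's care is the treatment of inconsistent positive examples in the rule-application sequence, which is handled by restricting (\textsf{R}) to consistent ones and separating the inconsistency witness into condition~(b). The rest is a bookkeeping argument that mirrors the proof of Theorem~\ref{thm:AQincoNP} almost verbatim.
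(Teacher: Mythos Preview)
Your proposal is correct and follows essentially the same approach as the paper: inherit \coNPclass-hardness from the AQ case, and for the upper bound guess a refutation candidate together with a witnessing sequence of rule applications and a witness for Condition~(a) or~(b) of Proposition~\ref{prop:refcandFullCQ}, exactly as in the proof of Theorem~\ref{thm:AQincoNP}. The only (harmless) imprecision is the bound $|E^+|\cdot|\mn{ind}(\Amc^-)|$ on the sequence length, since a single FullCQ may contain several concept atoms; but any polynomial bound on the number of added assertions suffices.
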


\section{CQs and UCQs}

We now turn to conjunctive queries and UCQs, which
constitute the most challenging case. This is due
to the fact that, since positive examples act
as implications, the presence of existentially quantified variables in the query effectively 
turns these examples into a form of existential rule.
Thus, completions as used for AQs and full CQs
are no longer finite.

Throughout this section, we assume that ABoxes in positive examples are never empty. This is mainly
to avoid dealing with too many special cases in
the technical development. We conjecture that admitting empty ABoxes does not change the obtained results.

\subsection{Characterization for \ALC and \ALCI}





We start with a characterization for the case of UCQs (and thus also CQs) that is
similar in spirit to the one for full CQs given
in Theorem~\ref{thm:fullCQchar}. The characterization applies to both
\ALC and \ALCI in a uniform, though not identical way. As already mentioned, finite completions no longer suffice
and we replace them with potentially infinite
interpretations. There is another interesting view on this: the fitting
ontologies constructed (as part of the proofs) in
Sections~\ref{sect:consfit} and~\ref{sect:aqs} do
not make existential statements, that is, their sets
of models are closed under taking induced subinterpretations. This, however, cannot be
achieved
for CQs and UCQs. We illustrate this by the
following example which also shows that, unlike
for AQs and full CQs, there is a difference between
fitting \ALC-ontologies and fitting \ALCI-ontologies. Induced subinterpretations are defined
in exact analogy with induced substructures in model theory.
\begin{example}
  Consider the collection of ABox-CQ examples
  $E=(E^+,E^-)$ where
  $$
  \begin{array}{rcc@{}l}
    E^+ &=& \{ & (\{A_1(a)\}, \exists x\, r(x,a) \land A_2(x)), \\[1mm]
    &&& ( \{A_2(a)\}, \exists x \, r(x,a) \land A_1(x)) \ \} \\[1mm]
    E^- &=& \{ &(\{A_1(a) \} ,B(a)), \ (\{A_2(a) \} ,B(a)) \}.
  \end{array}
  $$
  Then there is a fitting \ALCI-ontology:
  $$
    \Omc =  \{ \; A_1 \sqsubseteq \exists r^- . A_2, \ 
    A_2 \sqsubseteq \exists r^- . A_1 \; \}.
  $$
  The
  set of models of \Omc is clearly not closed under taking
  induced subinterpretations. In fact, this is true
  for every \ALCI-ontology $\Omc'$ that fits $E$ since any such $\Omc'$ must (i)~logically imply~\Omc
  and (ii)~be consistent with the ABoxes $\{ A(a)\}$ and $\{B(a)\}$, due to the negative examples.
  
   Moreover, it is  easy to see that there is no \ALC-ontology~\Omc
  that fits $E$. This is due to  Lemma~\ref{lem:forest} and the negative examples, ensuring that any such \Omc would have to be consistent with the ABoxes in $E^+$.
\end{example}


We start with a preliminary.
Let $\Lmc \in \{ \ALC, \ALCI\}$, let \Amc be an ABox, \Imc an interpretation, and
$h$ a  homomorphism from \Amc to \Imc. We define an interpretation
$\Imc_{\Amc,h, \Lmc}$ as follows. Start with interpretation $\Imc_0$:
$$
\begin{array}{rcl}
    \Delta^{\Imc_0} &=& \mn{ind}(\Amc) \\[1mm]
    A^{\Imc_0} &=& \{ a \mid h(a) \in A^{\Imc} \} \quad \text{for all}\ A \in \NC\\[1mm]
    r^{\Imc_0} &=& \{ (a,b) \mid r(a,b) \in \Amc \} \quad \text{for all}\ r \in \NR.
\end{array}
$$
Then   $\Imc_{\Amc,h, \Lmc}$ is obtained
by taking, for every $a \in \mn{ind}(\Amc)$, the $\Lmc$-unraveling of \Imc at $h(a)$ and disjointly adding
it to $\Imc_0$, identifying the root with $a$.  It can be shown that if 
\Imc is a model of some \Lmc-ontology~\Omc, then $\Imc_{\Amc,h, \Lmc}$ is also a model of \Omc.
Informally, we use $\Imc_{\Amc,h, \Lmc}$ to `undo' the potential
identification of individual names by $h$, in this way obtaining a forest model of \Amc. 
\begin{restatable}{theorem}{charUCQs}\label{thm:charUCQs}
Let $E = (E^+, E^-)$ be a collection of labeled ABox-UCQ examples with $E^-
\neq \emptyset$ and let $\Lmc \in \{ \ALC,\ALCI\}$.
Then the following are equivalent:
\begin{enumerate}
    \item there is an $\Lmc$-ontology $\Omc$ that fits $E$;
    
    \item there is an interpretation  $\Imc$ with degree at most $\bound{E}$ such that
       \begin{enumerate} 
       
       \item  $\Imc= \biguplus_{e \in E^-} \Imc_e$ where, for each $e=(\Amc,q) \in E^-$, $\Imc_e$~is an \Lmc-forest model of \Amc with $\Imc_e \not \models q$;

       \item for all $(\Amc,q) \in E^+$: if $h$ is a homomorphism from \Amc to~$\Imc$,
        then $\Imc_{\Amc,h, \Lmc} \models q$.
    
    \end{enumerate}
   
\end{enumerate}
\end{restatable}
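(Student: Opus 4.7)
The plan is to prove the equivalence by separately establishing each direction. The forward direction extracts $\Imc$ from a fitting ontology via Lemma~\ref{lem:forest}, while the backward direction synthesizes a finite ontology from $\Imc$ via a type-based axiomatization.

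For direction $1 \Rightarrow 2$, I would start with a fitting $\Lmc$-ontology $\Omc$ and, for each negative example $e=(\Amc,q) \in E^-$, invoke Lemma~\ref{lem:forest} on $\Amc \cup \Omc \not\models q$ to obtain an $\Lmc$-forest model $\Imc_e$ of $\Amc$ and $\Omc$ with $\Imc_e \not\models q$ and degree at most $\|\Omc\|$. Setting $\Imc = \biguplus_{e \in E^-}\Imc_e$ yields condition~(a) directly. For (b), given a homomorphism $h\colon \Amc \to \Imc$ for a positive example $(\Amc,q) \in E^+$, the interpretation $\Imc_{\Amc,h,\Lmc}$ admits a natural map back to $\Imc$ (combining $h$ on the ABox part with the canonical unraveling map on each attached piece) that is a bisimulation (two-way in the $\ALCI$ case), so $\Lmc$-concept satisfaction is preserved and $\Imc_{\Amc,h,\Lmc} \models \Omc$. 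Since also $\Imc_{\Amc,h,\Lmc} \models \Amc$, the fitting assumption $\Amc \cup \Omc \models q$ yields $\Imc_{\Amc,h,\Lmc} \models q$. To tighten the degree bound to $\bound{E}$, I would define an equivalence on subtrees of $\Imc$ by their $E$-type (the set of pointed sub-CQs of UCQs in $E$ whose root variable can be matched at the subtree's root); there are at most $\bound{E}$ such types, and pruning redundant equivalent successors preserves both $\Imc_e \not\models q_e$ and property~(b), as any CQ match can be re-routed through retained representatives.

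For direction $2 \Rightarrow 1$, I would construct $\Omc$ from $\Imc$ by axiomatizing the same finite type system used in the pruning. For each type $\tau$ realized in $\Imc$, introduce a fresh concept name $X_\tau$; add inclusions that (i)~force each element to realize exactly one type, (ii)~propagate types along role edges consistently with the successor pattern of some element of type $\tau$ in $\Imc$, and (iii)~encode the concept-name membership dictated by $\tau$. Then $\Imc$ itself is a model of $\Omc$ (labeling elements by their own type), so each $\Imc_e$ witnesses $\Amc_e \cup \Omc \not\models q_e$, taking care of negative examples. For a positive example $(\Amc,q) \in E^+$ and any model $\Jmc$ of $\Amc \cup \Omc$, the type-based axiomatization yields a homomorphism $g\colon \Jmc \to \Imc$; its restriction to $\mn{ind}(\Amc)$ is a homomorphism $h\colon \Amc \to \Imc$, so $\Imc_{\Amc,h,\Lmc} \models q$ by~(b). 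Composing the match in $\Imc_{\Amc,h,\Lmc}$ with the natural map to $\Imc$ and then lifting back along $g$ yields $\Jmc \models q$, hence $\Amc \cup \Omc \models q$.

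The main obstacle is the backward direction, specifically designing a type system that is fine enough to force the homomorphism $\Jmc \to \Imc$ in every model of $\Omc$ while remaining coarse enough that $\Imc$ itself realizes only finitely many types; the bound $\bound{E}$ on the degree of $\Imc$ is precisely what makes a finite type system sufficient. A secondary subtlety lies in the transport-of-matches step from $\Imc_{\Amc,h,\Lmc}$ back to $\Jmc$, which must handle individuals correctly so that strong CQ homomorphisms (identity on $\NI$), not merely weak ones, are produced.
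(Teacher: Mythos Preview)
Your forward direction ($1 \Rightarrow 2$) is essentially the paper's argument: take forest models from Lemma~\ref{lem:forest}, observe that $\Imc_{\Amc,h,\Lmc}$ is a model of $\Omc$ by bisimulation-invariance, and conclude via the fitting assumption. The degree reduction via type-based pruning is different from what the paper does (it uses a dedicated lemma that adds a fresh role to localize matches, re-unravels, and then carefully grows a bounded-degree subinterpretation), but the pruning idea is plausible and not where the real difficulty lies.

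The backward direction, however, has a genuine gap. Your plan is to axiomatize $\Imc$ directly via finitely many ``$E$-types'' and force every model of $\Omc$ to admit a homomorphism into $\Imc$. Two problems. First, the types you propose (sets of pointed sub-CQs matchable at a node) are not fine enough: two elements of $\Imc$ may share this type yet have different multisets of successor types, so your clause~(ii), ``propagate types consistently with the successor pattern of \emph{some} element of type $\tau$ in $\Imc$'', is not well-defined as a single concept inclusion, and taking a disjunction over all patterns witnessed in $\Imc$ may be infinite. Making the types rich enough to determine successor types (e.g.\ full depth-$n$ neighbourhood types) runs into the opposite issue: the successor's depth-$n$ type is not determined by the parent's depth-$n$ type. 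Second, even granting a homomorphism $g\colon \Jmc \to \Imc$, the final step ``lift back along $g$'' goes in the wrong direction: to transport a match of $q$ from $\Imc_{\Amc,h,\Lmc}$ (and hence from $\Imc$) into $\Jmc$ you need structure in $\Jmc$ that simulates $\Imc$, i.e.\ a bisimulation, not a homomorphism $\Jmc \to \Imc$.

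The paper resolves both issues by \emph{not} axiomatizing $\Imc$ directly. It first replaces $\Imc$ by a \emph{finite} interpretation $\Jmc$ that is CQ-equivalent and locally $n$-bisimilar to $\Imc$, invoking a nontrivial finite-model construction for ontology-mediated querying (Theorem~\ref{thm:finite-counter-model}, from \cite{DBLP:conf/kr/GogaczIM18}). Only then does it introduce one fresh concept name $V_d$ per element $d\in\Delta^{\Jmc}$ and axiomatize the full edge relation of $\Jmc$; because $\Jmc$ is finite this is a finite ontology, and models of it are genuinely $\Lmc$-bisimilar to $\Jmc$, which is exactly what is needed to push matches of $q$ from $\Jmc_{\Amc,h,\Lmc}$ back into an arbitrary model. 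The finite-model step is the missing idea in your sketch.
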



The proof of Theorem~\ref{thm:charUCQs} follows
the same intuitions as the proofs of our previous characterizations, but is more technical. One challenge is that, in the ``$2\Rightarrow 1$'' direction, we first need to construct
from an interpretation $\Imc$ as in the theorem a suitable finite interpretation that we can then use to identify a fitting $\Lmc$-ontology. For
this we adopt the finite model construction for ontology-mediated querying from \cite{DBLP:conf/kr/GogaczIM18}.

\subsection{Upper Bounds}
\label{subsect:upperboundsCQ}

Our aim is to prove the following.
\begin{theorem}
  Let $\Lmc \in \{\ALC,\ALCI\}$ and $\Qmc \in \{ \text{CQ}, \text{UCQ} \}$. Then $(\Lmc,\Qmc)$-ontology fitting is in \TwoExpTime.
\end{theorem}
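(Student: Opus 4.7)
The plan is to apply the characterization of Theorem~\ref{thm:charUCQs} and decide the existence of a suitable interpretation $\Imc$ via a mosaic (type-elimination) procedure, as is standard for tree-model-based decision problems in description logic. Note that the degree bound $\bound{E}$ given by the theorem is doubly exponential in $\|E\|$, so from the outset we aim at a doubly-exponential search space rather than a singly-exponential one.

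I would work with \emph{types} that record, for each element of $\Imc$, the concept names from $E$ that hold at it, the ABox individual it realizes (if any), and, for every UCQ $q$ occurring in $E$, a ``query-type'' summarising which partial matches of CQ disjuncts of $q$ could be witnessed from that element. For UCQs occurring in $E^-$ the query-type is evaluated inside $\Imc$ itself, whereas for UCQs occurring in $E^+$ it must be evaluated inside interpretations of the form $\Imc_{\Amc,h,\Lmc}$, since this is what condition~2(b) refers to; this second family of query-types is what makes the positive-example side nontrivial. Since each query contributes only polynomially many variables, the number of query-types per query is at most exponential and the total number of types is at most doubly exponential in $\|E\|$.

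A \emph{mosaic} is then a parent type together with a multiset of at most $\bound{E}$ role-labeled child types, required to be locally consistent: existential witnesses demanded by the parent's query-types are supplied by the children, universally propagated information is respected, and, in the $\ALCI$ case, inverse-successor information is propagated correctly. Starting from all mosaics, I would iteratively eliminate any mosaic whose child-types no longer appear as the parent type of some surviving mosaic, until a fixpoint is reached. I would then attempt to assign root types to the named individuals of each negative example $(\Amc,q)$ so that the ABox structure is respected, the assigned types admit surviving mosaics, and the accumulated query-types certify $\Imc_e \not\models q$.

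The main obstacle is condition~2(b), which is a genuinely \emph{global} constraint: a homomorphism from a positive-example ABox may spread across the forest parts of different $\Imc_e$'s, and the query $q$ must then be realized in the (possibly much larger) interpretation $\Imc_{\Amc,h,\Lmc}$, not in $\Imc$. To handle it I would, for each positive example $(\Amc,q)$, precompute an alternating tree automaton (two-way in the $\ALCI$ case) over types that monitors partial matches of $\Amc$ and, whenever such a match threatens to extend to a full homomorphism $h$, forces a witness of $q$ in $\Imc_{\Amc,h,\Lmc}$. The joint non-emptiness of these automata can be folded into the mosaic elimination by enlarging types with automaton states. Since the number of enriched mosaics is at most doubly exponential in $\|E\|$, and fixpoint elimination runs in time polynomial in this number, the overall procedure runs in \TwoExpTime, uniformly for $\Lmc \in \{\ALC,\ALCI\}$ and $\Qmc \in \{\text{CQ},\text{UCQ}\}$.
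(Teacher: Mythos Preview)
Your high-level strategy---use Theorem~\ref{thm:charUCQs} and decide existence of the forest model~$\Imc$ by a mosaic/elimination procedure---matches the paper's. But the proposal leaves two genuine gaps in the treatment of condition~2(b), and the paper fills them with specific lemmas rather than with automata.

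First, the disconnectedness of positive-example ABoxes is acknowledged but not solved. A homomorphism from a disconnected $\Amc$ may send components into arbitrarily remote parts of $\Imc$, so no local type or mosaic can detect it. Your automaton would have to coordinate globally across independent trees, which a single tree automaton cannot do. The paper handles this by (i)~preprocessing so that the CQs in positive examples are connected, (ii)~proving (Lemma~\ref{lem:IAh-component}) that for each positive example there is \emph{one} component $\Bmc$ of $\Amc$ such that $\Imc_{\Bmc,h,\Lmc}\models q$ for \emph{all} homomorphisms $h$, and (iii)~guessing, via a choice function $\mn{ch}$ and a set $\Amf$ of ``forbidden'' components, which component this is and which other components admit no homomorphism to $\Imc$ at all. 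After this guessing, only homomorphisms of the single connected $\mn{ch}(e)$ need to be checked, and those are local.

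Second, ``forces a witness of $q$ in $\Imc_{\Amc,h,\Lmc}$'' is the crux and is not obviously an automaton-recognisable condition: $\Imc_{\Amc,h,\Lmc}$ is a derived structure built by unraveling $\Imc$ at the images of $h$ and re-attaching the trees to the ABox skeleton of $\Amc$. The paper supplies a concrete local criterion (Lemma~\ref{lem:variation}): $\Imc_{\Amc,h,\Lmc}\models q$ iff some \emph{proper $\Amc$-variation} of a disjunct of $q$ has a weak homomorphism into $\Imc$ compatible with $h$. This reduces the check to something that can be evaluated inside a single bounded-depth mosaic. Without an analogue of this lemma, your automaton construction is underspecified.

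Two smaller points. The degree bound $\bound{E}=2^{\|E\|^2}$ is singly exponential, not doubly; this does not hurt your argument but should be corrected. And the paper's mosaics are not type-plus-children tuples but actual $\Lmc$-tree interpretations of depth at most $3\|E_0\|$, glued with overlap $3\|E_0\|-1$; together with Lemma~\ref{lem:make-IAh-local} (which localises matches of $q$ to depth $\|E\|$ in $\Imc_{\Amc,h,\Lmc}$), this depth suffices to verify condition~2(b) within a single mosaic or base candidate.
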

It suffices to prove the theorem for $\Qmc=\text{UCQ}$. We prove it for \ALC and \ALCI simultaneously. We use the characterization provided by Theorem~\ref{thm:charUCQs} combined with a mosaic procedure, that is, we attempt to assemble the interpretation~\Imc from Point~2
of Theorem~\ref{thm:charUCQs} by
combining small pieces.

Let $\Lmc \in \{ \ALC, \ALCI\}$ and assume
that we are given a set of ABox-UCQ examples
$E_0=(E_0^+,E_0^-)$.
We will often consider  maximally connected components of ABoxes and
CQs which, for brevity, we simply call  \emph{components}. We wish to
work with only connected queries in positive examples. This can be achieved
as follows.
If $(\Amc,q) \in E^+_0$ with $q=q_1 \vee \cdots \vee q_n$ and $q_i$ has components
$p_1,\dots,p_k$, $k> 1$, then we replace $(\Amc,q)$ with positive
examples $(\Amc,\widehat q_1),\ldots,(\Amc,\widehat q_k)$ where $\widehat q_j$ is
obtained from $q$ by
replacing the disjunct $q_i$ with~$p_j$. 
This leads to an exponential blowup
of the number of positive examples, which,
however, does not compromise our upper
bound because
the size of the examples themselves does not increase.


Throughout this section, we shall be concerned with $\Lmc$-forest models \Imc of
ABoxes $\Amc$. We generally assume the following naming convention in such models. All elements
of $\Delta^\Imc$ must be of the form $aw$ 
where $a \in \mn{ind}(\Amc)$ and $w \in \mathbb{N}^*$, that is, $w$ is a finite
word over the infinite alphabet~$\mathbb{N}$. Moreover, $(d,e) \in r^\Imc$
implies that $d,e \in \mn{ind}(\Amc)$ or $e=dc$ or $d=ec$ (if $\Lmc = \ALCI$) where $c \in
\mathbb{N}$. If $d,e \in \Delta^\Imc$ and $e=dc$,
then we call $e$ a \emph{successor} of $d$. 
Note that a successor may be connected to its predecessor via a role name, an
inverse role, or not connected at all. The \emph{depth} of $aw$ is
defined as the
length of $w$ .

Since mosaics represent `local' pieces of an interpretation,
disconnected ABoxes in examples pose a challenge: a homomorphism may
map their components into different parts of a forest model that are
far away from each other. We thus
need some preparation to deal with disconnected ABoxes. For positive
examples, one important ingredient is the following observation.
\begin{restatable}{lemma}{lemIAhcomponent} \label{lem:IAh-component}
Let $\Imc$ be an interpretation and $(\Amc, q) \in E^+$ a positive example such that Condition~(b) from Theorem~\ref{thm:charUCQs} is satisfied and each CQ in $q$ is connected.
Then there exists a component $\Bmc$ of $\Amc$ such that:
if $h$ is a homomorphism from $\Amc$ to $\Imc$, then
$\Imc_{\Bmc, h, \Lmc} \models q$
\end{restatable}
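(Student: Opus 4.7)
The plan is to proceed by contradiction. Suppose no single component of $\Amc$ satisfies the conclusion; that is, for each component $\Bmc_i$ of $\Amc$, there is a homomorphism $h_i: \Amc \to \Imc$ with $\Imc_{\Bmc_i, h_i, \Lmc} \not\models q$. I will glue these finitely many witnesses into a single homomorphism and derive a contradiction with Condition~(b) of Theorem~\ref{thm:charUCQs}.

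The key observation is that $\Imc_{\Bmc_i, h_i, \Lmc}$ depends only on the restriction of $h_i$ to $\mn{ind}(\Bmc_i)$. So I define $h: \mn{ind}(\Amc) \to \Delta^\Imc$ by $h(a) = h_i(a)$ whenever $a \in \mn{ind}(\Bmc_i)$. Since every concept or role assertion of $\Amc$ lies within a single component $\Bmc_i$ and each $h_i$ is a homomorphism there, $h$ itself is a homomorphism from $\Amc$ to $\Imc$. Moreover, for each $i$ we have $\Imc_{\Bmc_i, h, \Lmc} = \Imc_{\Bmc_i, h_i, \Lmc} \not\models q$. By Condition~(b) applied to $h$, $\Imc_{\Amc, h, \Lmc} \models q$, so some disjunct $q_j$ of $q$ admits a strong homomorphism $g$ into $\Imc_{\Amc, h, \Lmc}$.

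The central structural fact I will use is that the connected components of $\Imc_{\Amc, h, \Lmc}$ correspond exactly to the components of $\Amc$: the base $\Imc_0$ carries the shape of $\Amc$ (with extra concept assertions pulled back along $h$), and the $\Lmc$-unravelings attached to different individuals are, by construction, disjointly adjoined. Hence $\Imc_{\Amc, h, \Lmc}$ splits as the disjoint union of the $\Imc_{\Bmc_i, h, \Lmc}$. Since $q_j$ is connected, its image under $g$ lies within a single component $\Imc_{\Bmc^*, h, \Lmc}$, which therefore satisfies $q_j$ and hence $q$, contradicting our choice of $h_{\Bmc^*}$.

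The main obstacle I expect is justifying cleanly that $\Imc_{\Amc, h, \Lmc}$ decomposes into the claimed components, especially for $\Lmc = \ALCI$ where an unraveling may traverse inverse roles. One must verify both that the unravelings rooted at distinct individuals of $\Amc$ share no domain elements and that the only role edges in $\Imc_{\Amc,h,\Lmc}$ between elements of $\mn{ind}(\Amc)$ are exactly those dictated by $\Amc$. Both follow from the explicit disjoint-union construction of $\Imc_{\Amc,h,\Lmc}$ together with the forest shape of each $\Lmc$-unraveling, but these checks deserve to be carried out carefully before the contradiction can be closed.
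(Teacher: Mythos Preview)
Your proposal is correct and follows essentially the same approach as the paper: assume a counterexample homomorphism $h_\Bmc$ for each component, glue their restrictions into a single homomorphism $h$, and derive a contradiction with Condition~(b). The paper compresses your decomposition argument into the one-line observation that, since each CQ in $q$ is connected, $\Imc_{\Amc,h,\Lmc} \models q$ forces $\Imc_{\Bmc,h,\Lmc} \models q$ for some component~$\Bmc$; your worry about the $\ALCI$ case is unnecessary, since the unravelings are \emph{disjointly} adjoined by definition regardless of~$\Lmc$.
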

Note that Lemma~\ref{lem:IAh-component} requires the component \Bmc to
be uniform across all homomorphisms $h$. For each $e=(\Amc,q) \in E^+$, we choose
a component $\mn{ch}(e)$ of $\Amc$. Intuitively, $\mn{ch}(e)$ is the component
\Bmc from Lemma~\ref{lem:IAh-component} with 
\Imc the interpretation from  Point~2
of Theorem~\ref{thm:charUCQs}.
Since, however, we do not know~\Imc, \mn{ch} acts like a guess and our algorithm shall iterate over all possible choice functions
\mn{ch}.

To deal with an example $e=(\Amc,q) \in E^+$, we shall focus on the
component $\mn{ch}(e)$ of \Amc. The other components of \Amc, however, cannot
be ignored.  We need to know whether they have a homomorphism to \Imc,
possibly some remote part of it. This is not easily possible from the
local
perspective of a mosaic, so we again
resort to guessing. We choose a set $\Amf$ of ABoxes that are 
a component of the ABox of some positive example.
We will take care (locally!) that no ABox in \Amf admits a homomorphism
to \Imc. All other components of ABoxes in positive examples may or
may not have a homomorphism to \Imc, we shall simply treat them as if
they do.  We say that a positive example $(\Amc, q) \in E^+$ is
\emph{$\Amf$-enabled} if no component of $\Amc$ is in~$\Amf$.

Note that the number of choices for
$\Amf$ and \mn{ch} is double exponential (it is
not single exponential since  we have an exponential number of positive examples, see above). 

The queries in negative examples need not be connected. To falsify a
non-connected CQ, it clearly suffices to falsify one of its components. We use
another choice function to choose these components: for each $e = (\Amc, p_1
\lor \cdots \lor p_k) \in E^-$ and $1 \leq i \leq k$, choose a component
$\mn{ch}(e,p_i)$ of $p_i$. There are single
exponentially many choices.

Our mosaic procedure tries to assemble the $\Lmc$-forest
model \Imc starting from a large piece
that contains the ABox part of \Imc as well
as the tree parts up to depth $3 ||E||$. The potentially infinite remainder of the trees is
then assembled from smaller pieces. We start with defining the large pieces.
\begin{definition}
\label{def:basecand}
A \emph{base candidate} for \mn{ch} and $\Amf$ is an interpretation $\Jmc =
\biguplus_{e \in E^-} \Imc_e$ that satisfies the following
conditions:

\begin{enumerate}

    \item for each $e=(\Amc, p_1 \lor \cdots \lor p_k) \in E^-$, $\Imc_e$ is an
    $\Lmc$-forest model of $\Amc$ such that  $\Imc_e \not \models
    \mn{ch}(e,p_i)$ for $1 \leq i \leq k$; 
    
    \item no ABox from \Amf has a homomorphism to \Jmc;
                
    \item $\Jmc$ has depth at most $3 ||E_0||$  and degree at most $\bound{E_0}$;
  
    \item for all $e = (\Amc,q) \in E^+$ that are $\Amf$-enabled: if $h$ is a
    homomorphism from $\mn{ch}(e)$ to $\Jmc$ whose range contains only elements of depth at most $2 ||E_0||$, then \mbox{$\Jmc_{\mn{ch}(e), h, \Lmc} \models q$}.
\end{enumerate} 
\end{definition}

To make sure that there are only finitely many (in fact double exponentially many) base candidates, we
assume that (i)~\Jmc interprets only concept and role names that occur in $E$ and (ii)~if $w \in \Delta^{\Jmc}$ has $k$ successors $wc_1,\dots,wc_k$, then $\{c_1,\dots,c_k\} = \{1,\dots,k\}$.


We next define the small mosaics.
An \emph{$\Lmc$-tree interpretation} \Jmc is defined
exactly like an $\Lmc$-forest model, except that all
domain elements are of the form $w \in \mathbb{N}^*$,
that is, there is no leading individual name. We
additionally require the domain $\Delta^\Jmc$ to
be prefix-closed and call $\varepsilon \in \Delta^\Jmc$
the \emph{root} of~\Jmc. 

We say that $\Jmc'$ is a \emph{subtree} of a tree
interpretation $\Jmc$ if, for some successor $c$
of $\varepsilon$, $\Jmc'$ is the restriction of
\Jmc to all domain elements of the form $cw$, with $w \in \mathbb{N}^*$.

\begin{definition}
\label{def:mosaic}
  A \emph{mosaic for} \mn{ch}, \Amf, and $e = (\Amc, p_1 \lor \cdots \lor p_k) \in E^-$ is an $\Lmc$-tree interpretation \Mmc  that satisfies the
  following conditions:
    \begin{enumerate}

    \item $\Mmc \not \models \mn{ch}(e,p_i)$ for $1 \leq i \leq k$;

    \item no ABox from \Amf has a homomorphism to \Mmc;
    
    \item $\Mmc$ has depth at most $3 ||E_0||$  and degree at most $\bound{E_0}$; 

        \item for all $e = (\Amc,q) \in E^+$ that are $\Amf$-enabled: if $h$ is a
    homomorphism from $\mn{ch}(e)$ to $\Mmc$ whose range contains only elements of depth at least $||E_0||$ and at most $2 ||E_0||$, then $\Mmc_{\mn{ch}(e), h, \Lmc} \models q$. 
    \end{enumerate}
\end{definition}

Let  $\Imc$ be
an $\Lmc$-forest model and $d \in \Delta^{\Imc}$. With $\Imc|^\downarrow_d$,
we mean the restriction of $\Imc$ to all
elements of the form $dw$, with $w \in \mathbb{N}^*$.
We say that a mosaic $\Mmc$ \emph{glues to $d$ in} $\Imc$ if
$\Imc|^\downarrow_d$ is identical to the interpretation obtained from $\Mmc$ in the following way:
\begin{itemize}

    \item remove all elements of depth exactly $3||E_0||$;

    \item prefix every domain element with $d$, that is, every $w \in
    \Delta^\Mmc$ is renamed to $dw$.
    
\end{itemize}
Our algorithm now works as follows. In an outer
loop, we iterate over all possible choices for $\mn{ch}$ and $\mathfrak{A}$.
For each  $\mn{ch}$ and $\mathfrak{A}$, as well as for each $e \in E^-$ we construct
the set $S_{e,0}$ of all mosaics for $\mn{ch}$, $\Amf$, and $e$ and then 
apply an elimination procedure, producing a sequence of sets
$$S_{e,0} \supseteq S_{e,1} \supseteq S_{e,2} \supseteq \cdots.$$ 
More precisely, $S_{e,i+1}$ is the subset of mosaics $\Mmc \in S_{e,i}$ that satisfy
the following condition:
\begin{description}

  \item[($*$)] for all successors $c$ of $\varepsilon$, there is an $\Mmc' \in S_{e,i}$ that glues to $c$ in $\Mmc$.
  
\end{description}
Let $S_{e}$ be the set of mosaics obtained  after 
stabilization.

We next iterate over all base candidates
$\Jmc = \bigcup_{e \in E^-} \Imc_e$ for $\mn{ch}$ and $\Amc$, for each of them checking whether
there is, for every element $d \in \Delta^{\Imc_e}$ of depth~1, a mosaic $\Mmc \in S_e$ 
that glues to $d$ in $\Delta^{\Imc_e}$. If the
check succeeds for some $\mn{ch}$, $\Amf$ and~$\Jmc$, we 
return `fitting exists'. Otherwise, we return
`no fitting exists'.
\begin{restatable}{lemma}{ucqmosaic}\label{lem:ucq-mosaic}
    The algorithm returns `fitting exists' if and only if  there is an
  \Lmc-ontology that fits~$E_0$.
\end{restatable}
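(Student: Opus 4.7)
The plan is to combine Theorem~\ref{thm:charUCQs} with a mosaic assembly argument, reducing the lemma to showing that the algorithm returns \emph{fitting exists} if and only if there is an interpretation $\Imc$ satisfying Point~2 of that theorem. I will treat the two directions separately; the mosaic procedure serves as a finite proxy for assembling such a (possibly infinite) $\Imc$.

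For the \emph{if} direction, given the algorithm's accepting data (choice functions $\mn{ch}$ and $\Amf$, a base candidate $\Jmc$, and surviving mosaics $S_e$ for each $e \in E^-$), I construct $\Imc = \biguplus_{e \in E^-} \Imc_e$ by iterated gluing: start from $\Jmc$ and, at each stage, for every frontier element $d$ pick a mosaic $\Mmc \in S_e$ that glues to $d$ in the current partial interpretation. Condition~$(*)$ of the elimination procedure guarantees such an $\Mmc$ always exists, so the limit is an $\Lmc$-forest model of each ABox in $E^-$ with the required degree bound. To verify Point~2(a) of Theorem~\ref{thm:charUCQs}, any homomorphism from a disjunct $p_i$ of a negative query into $\Imc_e$ would restrict to one from $\mn{ch}(e,p_i)$, whose connected image has diameter $\leq ||E_0||$ and therefore lies inside a single mosaic window or the base candidate, contradicting Condition~1 of Definition~\ref{def:mosaic} or Definition~\ref{def:basecand}. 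For Point~2(b), Lemma~\ref{lem:IAh-component} reduces the task to homomorphisms from $\mn{ch}(e)$: for $\Amf$-disabled positive examples, some component of the ABox lies in $\Amf$ and, by Condition~2 plus the same diameter argument, admits no homomorphism into $\Imc$, making Point~2(b) vacuous; for $\Amf$-enabled examples, any homomorphism $h$ from $\mn{ch}(e)$ to $\Imc$ has image in a bounded depth window entirely captured by either $\Jmc$'s depth-$\leq 2||E_0||$ region (invoking Condition~4 of Definition~\ref{def:basecand}) or some mosaic's active window (invoking Condition~4 of Definition~\ref{def:mosaic}). Since $q$ has size $\leq ||E_0||$, the resulting certificate for $q$ uses only elements within the $3||E_0||$-deep truncation and hence transfers to $\Imc_{\mn{ch}(e), h, \Lmc}$ via the natural substructure inclusion.

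For the \emph{only if} direction, take an $\Imc$ guaranteed by Theorem~\ref{thm:charUCQs} and extract the algorithm's data directly from it. Let $\mn{ch}(e)$ for positive examples come from Lemma~\ref{lem:IAh-component}; let $\mn{ch}(e, p_i)$ be any component of $p_i$ without a homomorphism to $\Imc_e$ (which exists since $\Imc_e \not\models p_i$); and let $\Amf$ collect those components of positive-example ABoxes that admit no homomorphism to $\Imc$. Define the base candidate $\Jmc$ as the restriction of $\Imc$ to elements of depth $\leq 3||E_0||$ and, for each non-root $d \in \Delta^\Imc$, define the mosaic $\Mmc_d := \Imc|^\downarrow_d$ truncated at depth $3||E_0||$ (renaming domain elements to respect the naming convention). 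Conditions~1--3 of both Definitions are inherited from the global properties of $\Imc$, and Condition~4 holds because every local homomorphism into such a window is also one into $\Imc$, allowing Theorem~\ref{thm:charUCQs}(2b) to supply a certificate witnessed within the same window. Finally, each $\Mmc_d$ trivially survives elimination: for every successor $c$ of the root of $\Mmc_d$, the mosaic $\Mmc_{dc}$ itself glues to $c$ in $\Mmc_d$.

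The hard part will be justifying the depth-alignment underlying Condition~4 of Definitions~\ref{def:mosaic} and~\ref{def:basecand}: one must verify that the stratification into depths $[0, ||E_0||]$, $[||E_0||, 2||E_0||]$, and $[2||E_0||, 3||E_0||]$ is precisely tight so that every connected homomorphic image of a positive-example component can be \emph{assigned} to exactly one mosaic or the base candidate whose active window covers it, while the auxiliary $\Lmc$-unraveling used to evaluate $q$ relies only on elements inside that piece's truncation. For $\ALCI$, there is the additional subtlety that inverse edges in the unraveling allow the homomorphism to traverse back toward the ABox layer, so the active window must supply enough ancestor context; this is why the mosaic depth is chosen to be $3||E_0||$ rather than $2||E_0||$.
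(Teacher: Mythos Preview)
Your overall architecture matches the paper's proof: build $\Imc$ by gluing in one direction, and carve mosaics out of a given $\Imc$ in the other. The choices of $\mn{ch}$, $\Amf$, the base candidate, and the mosaics $\Mmc_d$ are all exactly as in the paper, and your verification of Conditions~1--3 and of Point~2(a) is fine.

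There is, however, a real gap in your \emph{only if} direction when you verify Condition~4 for the extracted base candidate and mosaics. You write that Theorem~\ref{thm:charUCQs}(2b) ``supplies a certificate witnessed within the same window'', but Theorem~\ref{thm:charUCQs}(2b) gives you only $\Imc_{\mn{ch}(e),h,\Lmc}\models q$; the witnessing homomorphism into that unraveling may use elements at arbitrary depth, so after you truncate $\Imc$ at depth $3\|E_0\|$ to form $\Jmc$ (or a mosaic $\Mmc_d$), there is no reason that $\Jmc_{\mn{ch}(e),h,\Lmc}\models q$ still holds. Your final paragraph flags depth-alignment as ``the hard part'' but does not supply the missing ingredient. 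The paper closes this gap by first passing from the $\Imc$ of Theorem~\ref{thm:charUCQs} to one that additionally satisfies Condition~(b${}^*$) of Lemma~\ref{lem:make-IAh-local}, namely $\Imc_{\Amc,h,\Lmc}|_{\|E\|}\models q$ for every positive example and every homomorphism~$h$. With (b${}^*$) in hand, a homomorphism whose range has depth at most $2\|E_0\|$ yields a witness for $q$ that lives entirely inside the depth-$3\|E_0\|$ truncation, and Condition~4 follows. Without invoking (or reproving) Lemma~\ref{lem:make-IAh-local}, your argument for Condition~4 does not go through.

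A minor point: in the glued direction you cite Lemma~\ref{lem:IAh-component} to ``reduce the task to homomorphisms from $\mn{ch}(e)$'', but that lemma runs the other way (it assumes Condition~(b) and produces a component). What you actually need there is just that each CQ in $q$ is connected (by the preprocessing), so $\Imc_{\mn{ch}(e),h,\Lmc}\models q$ immediately gives $\Imc_{\Amc,h,\Lmc}\models q$; Lemma~\ref{lem:IAh-component} is used only in the opposite direction to \emph{define} $\mn{ch}(e)$.
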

It remains to verify that the algorithm runs in double exponential time. Most importantly, we need an effective way
to check Condition~4 of Definitions~\ref{def:basecand} and~\ref{def:mosaic}. This is provided by the subsequent lemma.

Let $\Amc$ be an ABox and $p$ a CQ.
An \emph{\Amc-variation} of
$p$ is a CQ $p'$ that can be obtained from $p$ by consistently replacing zero or more variables with individual names from $\mn{ind}(\Amc)$ and possibly identifying  variables. We say that $p'$
is \emph{proper}
if the following conditions are 
satisfied:
\begin{enumerate}

\item if $r(a,b) \in p'$ with $a,b \in \NI$, then  $r(a,b) \in \Amc$;
  
\item 
$\Imc_{p'}$ is an $\Lmc$-forest model of $\Amc \cap p'$.



\end{enumerate}
Further, let $\Imc$ be an interpretation, $h$ a homomorphism from $\Amc$ to $\Imc$,
$p'$ an $\Amc$-variation of $p$ and $g$ a weak homomorphism from $p'$ to $\Imc$.
We say that $g$ is
\emph{compatible} with $h$ if 
 \begin{enumerate}

\item $h(a) = g(a)$ for all individual names $a$ in $p'$;

\item for every variable $x$ in $p'$, there is an $a \in \mn{ind}(\Amc)$ 
such that $g(x)$ is $\Lmc$-reachable from $h(a)$ in \Imc.

\end{enumerate}
Here, an element $e \in \Delta^\Imc$ is
\ALCI-reachable from $d \in \Delta^\Imc$
if there are $d_0,\dots,d_n \in
\Delta^\Imc$ such that $d=d_0$, $d_n=e$, and, for $0 \leq i < n$,
$(d_i,d_{i+1}) \in r^\Imc$ for some
role~$r$. In \ALC-reachability, `role' is replaced by `role name'.
\begin{restatable}{lemma}{lemvariation} \label{lem:variation}
 Let $(\Amc,q)$ be an example, \Imc an interpretation, and $h$ a homomorphism from \Amc to \Imc.
 Then the following are equivalent
 \begin{enumerate}
    \item $\Imc_{\Amc, h, \Lmc} \models q$
    \item there exists a proper \Amc-variation $p'$ of a CQ $p$ in $q$ and a weak homomorphism from $p'$ to \Imc that is compatible with~$h$.
 \end{enumerate}
\end{restatable}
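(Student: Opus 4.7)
The plan is to prove the two directions by direct translation, exploiting the fact that every element of $\Imc_{\Amc, h, \Lmc}$ is either an individual $a$ (from the base layer $\Imc_0$) or a path in the $\Lmc$-unraveling of $\Imc$ at some $h(a)$.

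For $(1) \Rightarrow (2)$, I will start from a strong homomorphism $g^* \colon p \to \Imc_{\Amc, h, \Lmc}$ with $p$ a CQ in $q$, and obtain $p'$ from $p$ by (a)~replacing every variable $x$ with $g^*(x) \in \NI$ by that individual name, and (b)~identifying any variables whose $g^*$-images coincide. The weak homomorphism $g \colon p' \to \Imc$ will be defined by $g(a) := h(a)$ for individuals (which gives compatibility condition~(i)) and $g(x) := \mn{tail}(g^*(x))$ for surviving variables, with compatibility condition~(ii) coming for free since each such path starts at $h(a)$ for some $a$. The weak-homomorphism property of $g$ then follows by unpacking the definitions of $\Imc_0$ and of the unraveling's relations. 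Properness Condition~1 holds because the unraveling contributes no role edges among individuals, so any role atom between individuals in $p'$ must originate from $\Amc$; Condition~2 holds because $\Imc_{p'}$ is, up to isomorphism, the induced subinterpretation of the $\Lmc$-forest model $\Imc_{\Amc, h, \Lmc}$ on the range of $g^*$, hence itself an $\Lmc$-forest model of $\Amc \cap p'$.

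For $(2) \Rightarrow (1)$, given a proper $\Amc$-variation $p'$ of some CQ $p \in q$ and a compatible weak homomorphism $g \colon p' \to \Imc$, I will lift $g$ to a strong homomorphism $g^* \colon p \to \Imc_{\Amc, h, \Lmc}$. Since $\Imc_{p'}$ is an $\Lmc$-forest model of $\Amc \cap p'$, the variables of $p'$ decompose into trees rooted at individuals. I will traverse each tree top-down: each individual $a$ goes to itself, and each variable $x$ whose parent $t$ has already been assigned $g^*(t)$ and which is connected to $t$ via a role $r$ (possibly an inverse, for $\ALCI$) is sent to the one-step extension of $g^*(t)$ by $r$ and $g(x)$, which is a legal node of the unraveling because $(g(t), g(x)) \in r^\Imc$. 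Concept atoms then transfer because unraveling inherits $A$-membership from tails, role atoms among tree nodes become parent-child edges in the unraveling by construction, and role atoms among individuals come from $\Amc$ by properness Condition~1 and hence from $\Imc_0$. Undoing the substitution and identification that produced $p'$ turns $g^*$ into a strong homomorphism $p \to \Imc_{\Amc, h, \Lmc}$, witnessing $\Imc_{\Amc, h, \Lmc} \models q$.

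The main obstacle lies in the backward direction, specifically in choosing, for each variable, the correct one-step unraveling extension so that the role-edge directions in $p'$ are respected: in the $\ALCI$ case one must carefully distinguish forward and backward roles, but the $\Lmc$-forest structure of $\Imc_{p'}$ provided by properness equips every non-root variable with a unique parent and a canonical incident role and direction, making the extension well-defined.
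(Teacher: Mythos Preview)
Your forward direction matches the paper's proof: build $p'$ from the strong match $g^*$, then compose with the natural projection $\Imc_{\Amc,h,\Lmc} \to \Imc$ (your $\mn{tail}$ map) to obtain the compatible weak homomorphism.

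In the backward direction there is a genuine gap. You claim that because $\Imc_{p'}$ is an $\Lmc$-forest model of $\Amc \cap p'$, ``the variables of $p'$ decompose into trees rooted at individuals.'' This does not follow: the definition of forest model does not force every tree component to hang below an individual, and in particular a CQ $p$ in $q$ may be purely Boolean, so that $p'$ contains no individuals at all and $\Amc \cap p' = \emptyset$. For such a component your top-down traversal has no base case to start from. Notice that you never invoke compatibility condition~(ii) in this direction, and that is exactly the missing ingredient: it guarantees that for any variable $x$ in such a component, the element $g(x)$ is $\Lmc$-reachable from some $h(a)$, so $g(x)$ occurs as the tail of a path in the unraveling attached at~$a$, and the entire tree can then be embedded there by your path-extension scheme. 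The paper makes precisely this case distinction (component containing an individual versus not) and uses reachability in the second case. Once you add that case, your explicit one-step path extension and the paper's appeal to a tree-lifting property of unravelings are the same argument.
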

The Conditions in Point~2 of Lemma~\ref{lem:variation} can clearly be
checked by brute force in single exponential time, and so can Conditions~1 to 3 of Definitions~\ref{def:basecand} and~\ref{def:mosaic}. Based on Conditions~3 in these definitions, it is thus easy to see that we can produce the set of all base candidates and of all mosaics by a straightforward enumeration in double exponential time. The elimination phase of the algorithm also
clearly needs only double exponential time.

\subsection{Lower Bounds}
\label{sect:lowerbounds}

\begin{theorem}
\label{thm:alcicq-2exphard}
    The $(\ALCI, \mathrm{CQ})$-ontology fitting problem is \TwoExpTime-hard.
\end{theorem}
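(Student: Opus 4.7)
The plan is to reduce in polynomial time from the $\TwoExpTime$-hard problem of $\ALCI$ CQ entailment \cite{DBLP:conf/dlog/Lutz08}. Given an input $(\Amc_0, \Omc_0, q_0)$, I will construct a collection $E=(E^+, E^-)$ of labeled ABox-CQ examples such that $E$ admits a fitting $\ALCI$-ontology if and only if $\Amc_0 \cup \Omc_0 \not\models q_0$.

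After normalizing $\Omc_0$ (by introducing fresh concept names) so that every CI has one of the shapes $A \sqsubseteq B$, $A_1 \sqcap A_2 \sqsubseteq B$, $A \sqsubseteq \exists r. B$, $\exists r. A \sqsubseteq B$, $A \sqsubseteq B_1 \sqcup B_2$, or $A_1 \sqcap A_2 \sqsubseteq \bot$, I set $E^-=\{(\Amc_0, q_0)\}$ and populate $E^+$ with one positive ABox-CQ example per CI. The Horn-style CIs translate directly: for example, $A \sqsubseteq \exists r. B$ becomes $(\{A(a)\},\exists x\,r(a,x)\wedge B(x))$, and, crucially, the $\forall$-like restriction $\exists r. A \sqsubseteq B$ (expressible because we are in $\ALCI$) becomes $(\{A(a), r(b,a)\}, B(b))$. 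A direct check shows that each such positive example forces every model $\Jmc$ of any fitting $\Omc'$ to satisfy the corresponding CI: a witness $d \in A^{\Jmc}$ provides a homomorphism from the example ABox into $\Jmc$, and the fitting condition then forces $d$ into the appropriate target concept.

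For correctness, the ``$\Leftarrow$'' direction is immediate since $\Omc_0$ itself fits $E$. For ``$\Rightarrow$'', observe that if $\Omc'$ fits $E$, then every model of $\Omc'$ is also a model of $\Omc_0$, so every model of $\Amc_0 \cup \Omc'$ is a model of $\Amc_0 \cup \Omc_0$; since $\Amc_0 \cup \Omc' \not\models q_0$ (forced by the negative example), we obtain $\Amc_0 \cup \Omc_0 \not\models q_0$, as required.

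The main obstacle is the encoding of the genuinely non-Horn CIs $A \sqsubseteq B_1 \sqcup B_2$ and $A_1 \sqcap A_2 \sqsubseteq \bot$, since a single conjunctive query can express neither disjunction nor falsity directly. To handle this I would introduce gadget positive examples with fresh concept and role names whose CQ bodies are designed so that the existentially quantified witness interpretation $\Imc$ from Theorem~\ref{thm:charUCQs} can satisfy them only by validating the desired disjunction, respectively disjointness, on the original concept names; intuitively, the existential quantification over $\Imc$ internalizes the ``choice'' present in the semantics of $\sqcup$, while disjointness is forced by a gadget whose ABox admits no homomorphism into $\Imc$ unless the intersection is empty. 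Verifying that these gadgets faithfully capture the semantics of the non-Horn CIs—so that the reduction remains sound in both directions—is where the bulk of the technical work will lie.
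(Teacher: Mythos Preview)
Your high-level strategy matches the paper's: reduce from (the complement of) CQ entailment in $\ALCI$, normalize the ontology, and encode each CI by a positive example. For the Horn-shaped CIs your translation is fine. The gap is exactly where you say it is, and it is not a minor detail: you have not supplied a gadget that forces disjunction, and your sketch (``the existential quantification over $\Imc$ internalizes the choice'') is not a construction. The paper's solution normalizes negation via fresh complement names $\overline{A}$ (so one only has to encode $A \sqsubseteq \neg B$ and $\neg B \sqsubseteq A$) and then forces, at every relevant element of the witness interpretation $\Imc$, exactly one of $A,\overline{A}$ to hold. Disjointness is easy via a positive example $(\{A(a),\overline{A}(a)\},\exists x\,F(x))$, using a fresh $F$ that the negative example keeps empty. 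Coverage is the hard part: the paper's example $(\Amc^*,q^*)$ (Figure~\ref{fig:cruicalexample}) exploits that in the forest model $\Imc_{\Amc^*,h,\ALCI}$ a variable with two distinct individual predecessors must itself be mapped to an individual, which forces a binary choice between two designated successors. That trick is the crux of the reduction and is absent from your proposal.

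There is also a mismatch in your correctness argument. You claim that if $\Omc'$ fits $E$ then \emph{every} model of $\Omc'$ is a model of $\Omc_0$; this follows from the Horn examples but will not follow from characterization-based gadgets, which only constrain the interpretation $\Imc$ of Theorem~\ref{thm:charUCQs}, not arbitrary models of~$\Omc'$. Accordingly the paper runs both directions through Theorem~\ref{thm:charUCQs}: for ``$\Rightarrow$'' it takes the witnessing $\Imc$, restricts it to a designated $\mn{Real}$ part, and shows this is a countermodel to $\Amc_0 \cup \Omc_0 \models q_0$; for ``$\Leftarrow$'' it does not claim that $\Omc_0$ itself fits $E$ (it cannot, since the gadgets use fresh symbols $\Omc_0$ says nothing about), but instead extends a forest countermodel of $\Amc_0 \cup \Omc_0$ with the gadget elements to obtain an $\Imc$ satisfying Conditions~(a) and~(b). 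Your argument needs to be reworked along these lines once you have an actual gadget.
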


To prove Theorem~\ref{thm:alcicq-2exphard},
we give a polynomial
time reduction from the complement of Boolean CQ entailment in $\ALCI$, which is 
\TwoExpTime-hard~\cite{DBLP:conf/dlog/Lutz07}.
Assume that we are given \Amc, \Omc, and $q$, and want to decide whether $\Amc \cup \Omc \models q$. We
construct a collection of labeled   ABox-CQ examples $(E^+, E^-)$ such
that $\Amc \cup \Omc \not \models q$ if and only if there is an $\ALCI$-ontology $\Omc'$ that fits $(E^+, E^-)$.

To keep the reduction simple, we assume $\Omc$ to be in \emph{normal form},
meaning that every concept inclusion in $\Omc$ has one of the following forms:
$\top \sqsubseteq A$, $A_1 \sqcap A_2 \sqsubseteq A$, $A \sqsubseteq \exists
r.B$, $\exists r.B \sqsubseteq A$, $A \sqsubseteq \neg B$, $\neg B \sqsubseteq
A$. It is well-known and easy to see that any \ALCI-ontology \Omc can be rewritten
into an ontology $\Omc'$ of this form in polynomial time, introducing fresh
concept names as needed, such that  $\Amc \cup \Omc \models q'$ if and only if
$\Amc \cup \Omc'  \models q'$ for all Boolean CQs $q'$ that do not use the
fresh concept names.

The reduction uses  fresh concept names $\mn{Real}$, $\mn{Choice}$, $\overline{\mn{Choice}}$, and~$F$, a fresh
concept name $\overline A$ for every concept name $A$ in $\Omc$ and $q$, and a fresh
role name $s$. 
It is  helpful to have the characterization in Theorem~\ref{thm:charUCQs} in
mind when reading on.

We use a single negative example to ensure that the interpretation~\Imc from
Point~2 of Theorem~\ref{thm:charUCQs} is
a model of \Amc and makes the  concept name $F$  false everywhere:
$$
(\Amc \cup \{ \mn{Real}(a) \mid a \in \mn{ind}(\Amc) \},
\exists x \, F(x)).
$$
We will use a gadget that introduces auxiliary domain elements.
To distinguish the domain elements of
primary interest from the auxiliary ones, we label
the former with the concept name \mn{Real}.

The interesting part of the reduction is to guarantee that at each element labeled with $\mn{Real}$ and for each concept name $A$ in \Omc or $q$, exactly one of the concept
names $A$ and $\overline A$ is true. It is
easy to express that both concept names cannot be true simultaneously, via the following positive example: 
$$(\{A(a), \overline A(a)\}, \exists x \, F(x)).$$
To ensure that at least one of  $A$ and $\overline A$ is true, we use the announced gadget. We first introduce
one  successor that satisfies
\mn{Choice} and one that satisfies $\overline{ \mn{Choice}}$, via the following positive examples:
$$
\begin{array}{l}
    (\{\mn{Real}(a)\}, q) \text{ with } q= \exists x\, (s(a,x)\land \mn{Choice}) \\[1mm] 
    (\{\mn{Real}(a)\}, q) \text{ with }
    q= \exists x\, (s(a,x)\land \overline{\mn{Choice}}).
\end{array}
$$
%
We then use a
positive example
$
  (\Amc^*,q^*)
$
where $\Amc^*$ and $q^*$ are displayed
in Figure~\ref{fig:cruicalexample}.
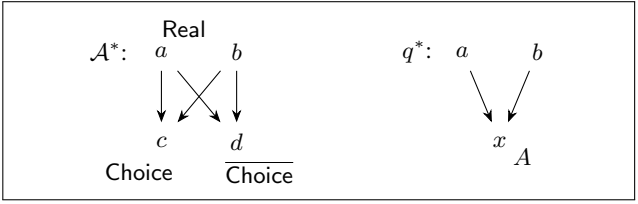
\begin{figure}
   \begin{boxedminipage}{\columnwidth}
    \centering
     \begin{tikzpicture}[
  node distance=1.0cm and 1.0cm,
  every node/.style={font=\small,  minimum size=0.5cm},
  ->, >=Stealth
]
\node (A) at (-0.7, 0) {$\Amc^*$:};
\node (a) at (0,0) {$a$};
\node (L) at (0.3,0.3) {$\mn{Real}$};
\node (b) at (1,0) {$b$};
\node (c) at (0,-1.2) {$c$};
\node (d) at (1,-1.2) {$d$};
\node (T) at (-0.3,-1.6) {$\mn{Choice}$};
\node (Tbar) at (1.3,-1.6) {$\overline{\mn{Choice}}$};

\draw (a) -- (c);
\draw (a) -- (d);
\draw (b) -- (c);
\draw (b) -- (d);

\node (q) at (3.4, 0) {$q^*$:};
\node (a) at (4,0) {$a$};
\node (b) at (5,0) {$b$};
\node (x) at (4.5,-1.2) {$x$};
\node (t) at (4.8,-1.4 ) {$A$};

\draw (a) -- (x);
\draw (b) -- (x);
\end{tikzpicture}
  \end{boxedminipage}
    \caption{Arrows denote $s$-edges.}
    \label{fig:cruicalexample}
    \vspace*{-4mm}
\end{figure}
To understand the gadget, recall Condition~(b)
of Theorem~\ref{thm:charUCQs} and the fact that
$\Imc_{\Amc^*,h,\Lmc}$ is a forest model of $\Amc^*$,
for any~\Imc. The variable $x$ in $q^*$ has two distinct individual names as a predecessor, but the only elements in $\Imc_{\Amc^*,h,\Lmc}$ with this property are
from $\mn{ind}(\Amc^*)$. It follows
that any homomorphism that witnesses $\Imc_{\Amc^*,h,\Lmc} \models q^*$ as required
by Condition~(b)
of Theorem~\ref{thm:charUCQs} maps $x$ to $c$
or to~$d$. 
We transfer the choice back to the original element: 
$$
\begin{array}{l}
(\Amc, A(a)) \text{ with } \Amc = \{\mn{Real}(a), s(a,b), \mn{Choice}(b), A(b)\} \\[1mm]
(\Amc, \overline{A}(a)) \text{ with }
\Amc = \{\mn{Real}(a), s(a,b), \overline{\mn{Choice}}(b),  A(b)\}.
\end{array}
$$
We next include positive examples that encode  $\Omc$:
$$
\begin{array}{ll}
    ( \{ \mn{Real}(a) \}, A(a)) &
    \text{for every } \top \sqsubseteq A \in \Omc \\[1mm]
    ( \{ A_1(a), A_2(a) \}, A(a)) & \text{for every }
    A_1 \sqcap A_2 \sqsubseteq A \in \Omc \\[1mm]
    \multicolumn{2}{l}{( \{ A(a)\}, q) \text{ with } %
    
    q= \exists y \, (r(a, y) \land \mn{Real}(y) \land B(y)) )} \\[1mm]
    & \text{for every } A \sqsubseteq\exists r.B \in \Omc \\[1mm] 
    ( \{ r(a, b), B(b) \}, A(a)) & \text{for every } \exists r.B \sqsubseteq A \in \Omc \\[1mm]
    ( \{ A(a) \}, \overline B(a)) & \text{for every } A \sqsubseteq \neg B \in \Omc \\[1mm]
    ( \{ \overline B(a) \}, A(a)) & \text{for every } \neg B \sqsubseteq A \in \Omc.
\end{array}
$$
Finally, we add a positive example that ensures that $F$ is non-empty if $q$ is made true:
$$( \Amc_q, \exists x \, F(x)),$$
where $\Amc_q$ is $q$ viewed as an ABox, that is, variables become individuals and atoms become assertions.
It remains to show the following.
\begin{restatable}{lemma}{lemfirstlowercorr}  $\Amc \cup \Omc \not \models q$ if and only if there is an $\ALCI$-ontology that fits $(E^+, E^-)$.
\end{restatable}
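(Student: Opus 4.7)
The plan is to prove both directions via the characterization in Theorem~\ref{thm:charUCQs}, which turns fitting existence into the existence of an interpretation $\Imc$ with specific homomorphism properties.

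For the $(\Leftarrow)$ direction, suppose a fitting $\ALCI$-ontology exists. Theorem~\ref{thm:charUCQs} yields an interpretation $\Imc$ that is an $\ALCI$-forest model of $\Amc \cup \{\mn{Real}(a) : a \in \mn{ind}(\Amc)\}$ with $F^\Imc = \emptyset$ and that satisfies condition~(b) on every positive example. Let $\Imc'$ be the restriction of $\Imc$ to $\mn{Real}^\Imc$. I would argue $\Imc' \models \Amc \cup \Omc$ and $\Imc' \not\models q$, which gives $\Amc \cup \Omc \not\models q$. The claim $\Imc' \models \Amc$ is immediate from the negative example. For $\Imc' \models \Omc$, the disjointness example $(\{A(a),\overline A(a)\},\exists x\, F(x))$ together with $F^\Imc = \emptyset$ prevents any element from carrying both $A$ and $\overline A$; the gadget $(\Amc^*,q^*)$ together with the Choice/$\overline{\mn{Choice}}$-existence examples and the two transfer-back examples forces $A \lor \overline A$ on every Real element of $\Imc$, exploiting the forest-shape argument in the excerpt showing that the witness $x$ of $q^*$ must be mapped to $c$ or to $d$; and each CI of $\Omc$ lifts to $\Imc'$ via its corresponding positive example, where the $\mn{Real}(y)$ conjunct in the encoding of $A \sqsubseteq \exists r.B$ keeps the existential witness inside $\Imc'$. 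For $\Imc' \not\models q$, assume a strong homomorphism $h : q \to \Imc'$ exists; viewing $q$ as the ABox $\Amc_q$ gives a weak homomorphism from $\Amc_q$ to $\Imc$, so the positive example $(\Amc_q,\exists x\, F(x))$ forces $F$ to be non-empty in $\Imc_{\Amc_q,h,\ALCI}$, contradicting $F^\Imc = \emptyset$.

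For the $(\Rightarrow)$ direction, suppose $\Amc \cup \Omc \not\models q$ and use Lemma~\ref{lem:forest} to obtain an $\ALCI$-forest model $\Imc_0$ of $\Amc \cup \Omc$ with $\Imc_0 \not\models q$ of bounded degree. Starting from $\Imc_0$, I would build an interpretation $\Imc$ that witnesses condition~(2) of Theorem~\ref{thm:charUCQs}: make every element of $\Imc_0$ a Real element of $\Imc$, interpret $\overline A$ as the complement of $A^{\Imc_0}$ on these Real elements, keep $F^\Imc$ empty, and attach to every $e \in \Delta^{\Imc_0}$ a fresh Choice-successor $c_e$ and a fresh $\overline{\mn{Choice}}$-successor $d_e$ with concept labels tuned so that $A(c_e)$ matches $A(e)$ and $A(d_e)$ matches $\neg A(e)$ for every concept name $A$ in $\Omc,q$. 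Additional subtrees under $c_e$ and $d_e$ (duplicating suitable pieces of $\Imc_0$) and additional $\overline A$-labels are then added so that the $\Omc$-encoding positive examples are satisfied at these auxiliary elements as well. A case-by-case verification then shows that $\Imc$ satisfies the negative example and every positive example produced by the reduction, and Theorem~\ref{thm:charUCQs} yields a fitting $\ALCI$-ontology.

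The main technical obstacle lies in the $(\Rightarrow)$ direction: the $\Omc$-encoding examples such as $(\{A_1(a), A_2(a)\}, A(a))$ and $(\{A(a)\}, \exists y\, r(a,y) \land \mn{Real}(y) \land B(y))$ do not contain $\mn{Real}(a)$ in their antecedent and therefore also constrain the non-Real $c_e$ and $d_e$. Reconciling the rigid identities $A(c_e) \leftrightarrow A(e)$ and $A(d_e) \leftrightarrow \neg A(e)$ dictated by the gadget and the transfer-back examples with the CIs of $\Omc$ at $c_e$ and $d_e$ is the delicate point: one must choose $\Imc_0$ so that the ``complementary'' types of the form $\{A : \neg A(e)\}$ are consistently realizable, and enrich $d_e$ (and $c_e$) with further subtrees and $\overline A$-labels that make them elements of some model of $\Omc$. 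Carrying out this enrichment uniformly, while still preserving the negative example's requirement $\Imc \not\models \exists x\, F(x)$, is the technical heart of the proof.
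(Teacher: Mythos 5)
Your direction from the existence of a fitting ontology to $\Amc \cup \Omc \not\models q$ is essentially the paper's own argument (restrict the witnessing interpretation of Theorem~\ref{thm:charUCQs} to $\mn{Real}^\Imc$, obtain two-valuedness of $A/\overline{A}$ at Real elements from the gadget, existence and transfer-back examples, read off modelhood of $\Omc$ from the encoding examples, and refute $q$ via $(\Amc_q,\exists x\,F(x))$ together with $F^\Imc=\emptyset$), and that half is fine.

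The gap is in the converse direction, which you only outline rather than prove. The paper's proof here is short: starting from the forest model supplied by Lemma~\ref{lem:forest}, it makes every element $\mn{Real}$ and adds for each $d$ just two fresh $s$-successors $d_1\in\mn{Choice}^{\Imc'}$ and $d_2\in\overline{\mn{Choice}}^{\Imc'}$ carrying the copied resp.\ complemented concept-name labels, and then verifies Conditions~(a) and~(b) of Theorem~\ref{thm:charUCQs}; no subtrees are attached and no attempt is made to turn $d_1,d_2$ into elements of a model of $\Omc$, because the $\Omc$-encoding examples are meant to constrain only $\mn{Real}$ elements. Your proposal stops exactly where this verification has to happen: you declare the reconciliation of the forced labels of $c_e,d_e$ with the encoding examples to be ``the technical heart'' and leave it open, so Condition~(b) is never established. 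Moreover, the repair you sketch would fail in general: given Conditions~(a) and~(b), the gadget, transfer-back and disjointness examples \emph{force} $d_e$ to carry precisely the complemented concept-name labels of $e$, and such a complementary type need not be realizable in any model of $\Omc$ at all --- e.g.\ for $\Omc=\{B\sqcap B\sqsubseteq A,\ \neg B\sqsubseteq A\}$ every element of every model satisfies $A$, while the complementary type demands $\neg A$ --- so no choice of $\Imc_0$ and no attached subtrees can make $c_e,d_e$ ``elements of some model of $\Omc$''. What is actually needed is much weaker, namely that the finitely many positive examples do not fire at the auxiliary elements; the clean way to obtain this, and what the paper's two-line verification implicitly relies on, is that the $\Omc$-encoding examples are guarded by $\mn{Real}(a)$ in their ABoxes (as written only the example for $\top\sqsubseteq A$ carries the guard, so your observation that, e.g., the example for $A\sqsubseteq\exists r.B$ would also fire at $c_e$ does point at a genuine imprecision; the remedy is to add the guard, which leaves the other direction untouched, not to enrich the auxiliary elements). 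As it stands, the left-to-right implication is missing its key step, and the route you indicate for supplying it does not work.
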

%





The reduction used in the proof of Theorem~\ref{thm:alcicq-2exphard} also
works for \ALC. But since CQ entailment
in \ALC is only \ExpTime-complete, this does not deliver the desired lower bound. We thus resort to a reduction of
the word problem for  exponentially  space-bounded alternating Turing machines (ATMs). Such reductions have been used oftentimes for DL query entailment problems, see e.g.\ \cite{DBLP:conf/dlog/Lutz07,DBLP:conf/ijcai/EiterLOS09,DBLP:conf/aaai/BednarczykR22}.
%
%
%
%
%
%
%
%

\begin{restatable}{theorem}{ALCCQTwoExpHard}
\label{thm:ALCCQTwoExpHard}
The (\ALC, CQ)-ontology fitting problem is \TwoExpTime-hard. 
\end{restatable}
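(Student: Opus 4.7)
The plan is to reduce from the word problem for exponentially space-bounded alternating Turing machines, which is 2ExpTime-complete. Given such an ATM $M$ and input $w$ of length $n$, I would construct in polynomial time a collection of labeled ABox-CQ examples $E = (E^+, E^-)$ such that $M$ accepts $w$ iff $E$ admits a fitting $\ALC$-ontology. The strategy leverages Theorem~\ref{thm:charUCQs}: a fitting $\ALC$-ontology exists iff there is an $\ALC$-forest model of the negative examples in which every homomorphic image of a positive-example ABox extends to a match of the positive-example CQ.

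The high-level design mirrors the $\ALCI$ case. A single negative example $(\{\mn{Real}(a)\}, \exists x\, F(x))$ forces every candidate fitting model $\Imc$ to interpret a fresh concept name $F$ as empty. The positive examples then encode the ATM computation by stipulating that whenever a certain local pattern of the computation appears, the model must also contain an appropriate successor pattern, since otherwise $F$ would be forced to be non-empty (contradicting the negative example). The computation tree of $M$ on $w$ is simulated by an $\ALC$-tree whose paths from the root correspond to branches of the computation. Each configuration of $M$ is encoded as a sub-path of length $2^n$, where each cell carries its tape symbol, a flag indicating whether the head is there, the current ATM state, and an $n$-bit counter (implemented by $n$ fresh concept names) that addresses the cell. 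Positive examples force the counter to increment correctly along a configuration, force correct initialization on the root path, and encode the local transition rule that determines the content of each cell of a successor configuration from the constant-size window around the corresponding cell of the predecessor. Existential ATM states are handled by a $\mn{Choice}/\overline{\mn{Choice}}$ branching gadget analogous to the one used in Theorem~\ref{thm:alcicq-2exphard}, which commits to exactly one transition; universal states are encoded by forcing all successor configurations to be present. A final positive example fires $F$ on any branch that reaches a rejecting halting configuration, so the existence of a fitting model is equivalent to the existence of an accepting computation of $M$ on $w$.

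The main technical obstacle is checking constraints between the cell at position $i$ of one configuration and the cell at position $i$ of the next configuration, which lie $2^n$ apart along the configuration path, using only polynomial-size CQs and without inverse roles. This rules out a direct comparison via an inverse-role path, as was used for $\ALCI$. I would follow the well-established pattern from existing 2ExpTime lower bounds for $\ALC$ CQ entailment \cite{DBLP:conf/dlog/Lutz07,DBLP:conf/ijcai/EiterLOS09,DBLP:conf/aaai/BednarczykR22}: arrange each configuration so that every cell redundantly stores, along auxiliary forward successors, enough information about its counter bits and tape content to allow a fixed polynomial CQ to fold over a constant-radius neighborhood and detect any mismatch between corresponding cells, triggering $F$. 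Combined with the transition-encoding positive examples and the forest structure enforced by the negative example, this yields the desired equivalence between the existence of a fitting $\ALC$-ontology and acceptance of $w$ by $M$, establishing 2ExpTime-hardness.
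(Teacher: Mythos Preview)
Your high-level plan is sound and matches the paper's: reduce from exponentially space-bounded ATMs, use the characterization in Theorem~\ref{thm:charUCQs}, encode the computation tree so that an interpretation satisfying Conditions~(a) and~(b) exists iff $M$ accepts $w$. The paper also first builds an \ALC-ontology that generates the skeleton of a computation tree (configurations as depth-$m$ binary trees rather than length-$2^n$ paths, but that difference is inessential) and converts it to examples exactly as in the \ALCI reduction.

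The genuine gap is at the step you flag as the ``main technical obstacle'' and then dismiss by appeal to existing lower bounds. Your description---store auxiliary information at each cell so that a polynomial CQ can ``detect any mismatch between corresponding cells, triggering $F$''---suggests a positive example of the form $(\Amc_{\text{defect}}, \exists x\,F(x))$ where $\Amc_{\text{defect}}$ is an ABox that matches precisely when two cells of consecutive configurations share the same position but carry different content. This does not work with a polynomial ABox: an ABox can only assert fixed concept names, so expressing ``bit $i$ of cell~1 equals bit $i$ of cell~2'' for an unknown value would require one ABox per position, i.e., $2^m$ many. Nor can the comparison be pushed into the query in the way it is done in the CQ-entailment lower bounds you cite: there the defect query sits on the \emph{negative} side (one seeks a model that avoids it), whereas here a positive example forces the query to \emph{hold} in $\Imc_{\Amc,h,\ALC}$ whenever the ABox matches---the quantification is reversed, and a direct port does not go through.

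The paper's solution is different and specific to the fitting setting. It uses, for every pair of distinct content labels, a positive example $(\Amc^*,q^*)$ where $\Amc^*$ matches whenever two cells in consecutive configurations carry \emph{different} content, and $q^*$ then forces their positions to differ at some bit. The key trick is that $\Imc_{\Amc^*,h,\ALC}$ is an \ALC-forest model of $\Amc^*$, so every edge in its tree part points away from the ABox; consequently any variable of $q^*$ with a directed path to an individual must itself map to an individual. The query $q^*$ is engineered so that this forces its two distinguished variables $z,\hat z$ into $\{c^i,\hat c^i\}$ for the \emph{same} index $i$, and the atoms $A(z),\overline A(\hat z)$ then certify that the two cell positions disagree at bit $i$. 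This ``make existential variables behave like answer variables'' mechanism, and the contrapositive formulation (different content $\Rightarrow$ different position) rather than direct defect detection, are the missing ideas in your proposal.
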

The crucial step in an ATM reduction of this kind is  to ensure that
tape cells of two consecutive configurations are labeled
in a matching way. This is typically achieved by copying the labeling of each configuration to all successor configurations so
that the actual comparison can 
take place locally.
We achieve this with a
gadget that is based on the same basic idea as the gadget used in the
proof of Theorem~\ref{thm:alcicq-2exphard}, but much more intricate. 

\section{Conclusion}
\label{sect:concl}

We introduced ontology fitting problems based on ABox-query examples and presented algorithms and
complexity results, concentrating on
the ontology languages \ALC and \ALCI.
We believe that our results can be 
adapted to cover many common extensions
of these. As an illustration, we show in the appendix the following result for
the extension \ALCQ of \ALC with qualified number restrictions. A homomorphism $h$ from an ABox $\Amc_1$ to an ABox $\Amc_2$ is
\emph{locally injective} if $h(b) \neq h(c)$ for all $r(a,b),r(a,c) \in \Amc_1$.
\begin{restatable}{theorem}{thmALCQ}
	Let $E=(E^+,E^-)$ be a collection of labeled ABox examples 
	and $\Amc^+ = \biguplus E^+$. Then  the following 
    are equivalent:
    \begin{enumerate}

        \item     $E$ admits a fitting
    \ALCQ-ontology;

    \item
        there is no homomorphism from any $\Amc \in E^-$ to $\Amc^+$
	that is locally injective.
    \end{enumerate}

\end{restatable}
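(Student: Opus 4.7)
The plan is to mirror the strategy of Theorem~\ref{thm:conschar}, except that \ALCQ's number restrictions can distinguish ABoxes up to \emph{locally injective} homomorphism rather than arbitrary homomorphism, so the appropriate notion of preservation becomes counted (graded) \ALCQ-bisimulation.

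For the direction $(1)\Rightarrow(2)$, I would argue by contraposition. Assume $h\colon\Amc\to\Amc^+$ is a locally injective homomorphism for some $\Amc\in E^-$, and let $\Omc$ be any \ALCQ-ontology that fits $E$. Since $\Omc$ is consistent with each positive ABox and individual names are disjoint across positive examples, a disjoint union of witnessing models yields a model $\Imc\models\Omc\cup\Amc^+$. From $\Imc$ I would then build $\Jmc\models\Omc\cup\Amc$ by adding each $a'\in\mn{ind}(\Amc)$ as a fresh domain element, declaring $a'\in A^\Jmc$ iff $h(a')\in A^\Imc$, and setting the $r$-successors of $a'$ in $\Jmc$ to be the union of $\{b' : r(a',b')\in\Amc\}$ with those $r$-successors $d$ of $h(a')$ in $\Imc$ that are not of the form $h(b')$ for any such $b'$. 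Local injectivity guarantees that the $h(b')$'s are pairwise distinct, so $a'$ ends up with exactly as many $r$-successors in $\Jmc$ as $h(a')$ has in $\Imc$. The relation pairing each $a'$ with $h(a')$, extended by the identity on $\Delta^\Imc$, is then a counted \ALCQ-bisimulation, hence $\Jmc\models\Omc$, while $\Jmc\models\Amc$ holds by construction, contradicting that $\Omc$ rejects~$\Amc$.

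For the direction $(2)\Rightarrow(1)$, I would construct a fitting ontology $\Omc$ explicitly from $\Amc^+$. With a fresh concept name $C_a$ for each $a\in\mn{ind}(\Amc^+)$, I include the partition axioms $\top\sqsubseteq\bigsqcup_a C_a$ and $C_a\sqcap C_b\sqsubseteq\bot$ for $a\neq b$; the labelling axioms $C_a\sqsubseteq A$ for each $A(a)\in\Amc^+$ and $C_a\sqsubseteq\neg A$ for each concept name $A$ appearing in $\Amc^+$ with $A(a)\notin\Amc^+$; the role-closure axioms $C_a\sqsubseteq\forall r.\bigsqcup_{b : r(a,b)\in\Amc^+}C_b$; and the number restrictions $C_a\sqsubseteq (\leq 1\, r.\, C_b)$ for every $r(a,b)\in\Amc^+$. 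I further add $A\sqsubseteq\bot$ and $\exists r.\top\sqsubseteq\bot$ for each concept or role name that occurs in $E^-$ but not in $\Amc^+$. Consistency with every $\Amc\in E^+$ follows by interpreting $\Amc^+$ itself as a model with $C_a^\Imc=\{a\}$. Conversely, given a model $\Imc\models\Omc\cup\Amc$ for $\Amc\in E^-$, the supplementary $\bot$-axioms force $\Amc$ to use only names from $\Amc^+$, and the partition axioms assign a unique $h(a')\in\mn{ind}(\Amc^+)$ to each $a'\in\mn{ind}(\Amc)$ with $a'\in C_{h(a')}^\Imc$; the labelling and role-closure axioms make $h$ a homomorphism to $\Amc^+$, and the number restrictions force distinct $r$-successors of $a'$ to sit in distinct $C_b$'s, so $h$ is locally injective.

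The main obstacle is checking, in step $(1)$, that the relation $Z$ above really is a counted \ALCQ-bisimulation, for which the multiset of $r$-successors of $a'$ in $\Jmc$ must agree up to bisimilarity with that of $h(a')$ in $\Imc$. This is precisely where local injectivity becomes essential: without it, two distinct individuals $b',c'$ with $r(a',b'),r(a',c')\in\Amc$ could share an image under $h$, inflating the count in one $C_b$-class and breaking the number restriction in $\Omc$. A secondary subtlety is the bookkeeping for concept and role names that occur in $E^-$ but not in $\Amc^+$; the supplementary $\bot$-axioms make $\Omc$ outright inconsistent with the corresponding negative ABoxes, in line with the fact that those ABoxes admit no homomorphism to $\Amc^+$ on signature grounds alone.
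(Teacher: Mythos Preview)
Your proposal is correct and follows essentially the same approach as the paper. Both directions match: for $(1)\Rightarrow(2)$ the paper also takes the disjoint union of positive models and builds a model of $\Amc\cup\Omc$ via a counting bisimulation that relies on local injectivity to preserve successor counts (the paper attaches unravelings $\Imc_d$ for the non-image successors rather than reusing $\Imc$ directly, but this is a cosmetic difference); for $(2)\Rightarrow(1)$ the paper's ontology is $\Omc_{\Amc^+,\Sigma}$ augmented by $\top\sqsubseteq(\leq 1\,r.\,V_a)$ for all $a$ and $r\in\Sigma$, which is logically the same shape as your partition/label/role-closure/number-restriction axiomatization, with the signature bookkeeping handled implicitly via $\Sigma=\mn{sig}(E)$ rather than your explicit $\bot$-axioms.
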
   
Apart from extensions of \ALC,
there are many other natural 
 ontology languages of interest that
 can be studied in future work, including Horn DLs such as 
\EL and existential rules. One can also
vary the framework in several natural 
ways and, for instance, consider the case where a signature for
the fitting ontology is given as an 
additional input or where
negative examples 
have a stronger semantics, namely
$\Amc \cup \Omc \models \neg q$ in place
of $\Amc \cup \Omc \not\models q$.

\paragraph{Acknowledgements.} The third author was supported by DFG project LU 1417/4-1.









    


\cleardoublepage

\bibliographystyle{kr}
\bibliography{local}

\begin{thebibliography}{}

\bibitem[\protect\citeauthoryear{Baader and
  Distel}{2009}]{DBLP:conf/icfca/BaaderD09}
Baader, F., and Distel, F.
\newblock 2009.
\newblock Exploring finite models in the description logic.
\newblock In Ferr{\'{e}}, S., and Rudolph, S., eds., {\em Proc.\ of {ICFCA}
  2009}, volume 5548 of {\em Lecture Notes in Computer Science},  146--161.
\newblock Springer.

\bibitem[\protect\citeauthoryear{Baader \bgroup et al\mbox.\egroup
  }{2007}]{DBLP:conf/ijcai/BaaderGSS07}
Baader, F.; Ganter, B.; Sertkaya, B.; and Sattler, U.
\newblock 2007.
\newblock Completing description logic knowledge bases using formal concept
  analysis.
\newblock In Veloso, M.~M., ed., {\em Proc.\ of {IJCAI} 2007},  230--235.

\bibitem[\protect\citeauthoryear{Baader \bgroup et al\mbox.\egroup
  }{2017}]{Baader2017}
Baader, F.; Horrocks, I.; Lutz, C.; and Sattler, U.
\newblock 2017.
\newblock {\em An Introduction to Description Logic}.
\newblock Cambridge University Press.

\bibitem[\protect\citeauthoryear{Bednarczyk and
  Rudolph}{2022}]{DBLP:conf/aaai/BednarczykR22}
Bednarczyk, B., and Rudolph, S.
\newblock 2022.
\newblock The price of selfishness: Conjunctive query entailment for {ALCSelf}
  is {2EXPTIME}-hard.
\newblock In {\em 36th Proc. of {AAAI} 2022},  5495--5502.
\newblock {AAAI} Press.

\bibitem[\protect\citeauthoryear{Bienvenu and
  Ortiz}{2015}]{DBLP:conf/rweb/BienvenuO15}
Bienvenu, M., and Ortiz, M.
\newblock 2015.
\newblock Ontology-mediated query answering with data-tractable description
  logics.
\newblock In {\em Proc.\ of Reasoning Web}, volume 9203 of {\em LNCS},
  218--307.
\newblock Springer.

\bibitem[\protect\citeauthoryear{Bienvenu \bgroup et al\mbox.\egroup
  }{2014}]{DBLP:journals/tods/BienvenuCLW14}
Bienvenu, M.; ten Cate, B.; Lutz, C.; and Wolter, F.
\newblock 2014.
\newblock Ontology-based data access: {A} study through disjunctive datalog,
  {CSP}, and {MMSNP}.
\newblock {\em {ACM} Trans. Database Syst.} 39(4):33:1--33:44.

\bibitem[\protect\citeauthoryear{Bourhis and
  Lutz}{2016}]{DBLP:conf/kr/BourhisL16}
Bourhis, P., and Lutz, C.
\newblock 2016.
\newblock Containment in monadic disjunctive datalog, {MMSNP}, and expressive
  description logics.
\newblock In {\em 15th Proc.\ {KR} 2016},  207--216.
\newblock {AAAI} Press.

\bibitem[\protect\citeauthoryear{B{\"{u}}hmann \bgroup et al\mbox.\egroup
  }{2018}]{DBLP:conf/www/BuhmannLWB18}
B{\"{u}}hmann, L.; Lehmann, J.; Westphal, P.; and Bin, S.
\newblock 2018.
\newblock {DL-Learner} structured machine learning on semantic web data.
\newblock In {\em Proc.\ of {WWW}},  467--471.
\newblock {ACM}.

\bibitem[\protect\citeauthoryear{Calvanese \bgroup et al\mbox.\egroup
  }{2006}]{DBLP:journals/is/CalvaneseDNRT06}
Calvanese, D.; Dragone, L.; Nardi, D.; Rosati, R.; and Trisolini, S.
\newblock 2006.
\newblock Enterprise modeling and data warehousing in telecom italia.
\newblock {\em Inf. Syst.} 31(1):1--32.

\bibitem[\protect\citeauthoryear{Chandra, Kozen, and
  Stockmeyer}{1981}]{DBLP:journals/jacm/ChandraKS81}
Chandra, A.~K.; Kozen, D.; and Stockmeyer, L.~J.
\newblock 1981.
\newblock Alternation.
\newblock {\em J. {ACM}} 28(1):114--133.

\bibitem[\protect\citeauthoryear{Eiter \bgroup et al\mbox.\egroup
  }{2009}]{DBLP:conf/ijcai/EiterLOS09}
Eiter, T.; Lutz, C.; Ortiz, M.; and Simkus, M.
\newblock 2009.
\newblock Query answering in description logics with transitive roles.
\newblock In Boutilier, C., ed., {\em Proc.\ of {IJCAI} 2009},  759--764.

\bibitem[\protect\citeauthoryear{Funk \bgroup et al\mbox.\egroup
  }{2019}]{DBLP:conf/ijcai/FunkJLPW19}
Funk, M.; Jung, J.~C.; Lutz, C.; Pulcini, H.; and Wolter, F.
\newblock 2019.
\newblock Learning description logic concepts: When can positive and negative
  examples be separated?
\newblock In {\em Proc.\ of {IJCAI} 2019},  1682--1688.
\newblock ijcai.org.

\bibitem[\protect\citeauthoryear{Gogacz, Ib{\'{a}}{\~{n}}ez{-}Garc{\'{\i}}a,
  and Murlak}{2018}]{DBLP:conf/kr/GogaczIM18}
Gogacz, T.; Ib{\'{a}}{\~{n}}ez{-}Garc{\'{\i}}a, Y.~A.; and Murlak, F.
\newblock 2018.
\newblock Finite query answering in expressive description logics with
  transitive roles.
\newblock In {\em Proc.\ of KR 2018},  369--378.
\newblock {AAAI} Press.

\bibitem[\protect\citeauthoryear{Jacindha, Abishek, and
  Vasuki}{2022}]{ProgramSynthesisReview}
Jacindha, S.; Abishek, G.; and Vasuki, P.
\newblock 2022.
\newblock Program synthesis---a survey.
\newblock In {\em Computational Intelligence in Machine Learning},  409--421.
\newblock Singapore: Springer Nature Singapore.

\bibitem[\protect\citeauthoryear{Jung \bgroup et al\mbox.\egroup
  }{2020}]{DBLP:conf/kr/JungLPW20}
Jung, J.~C.; Lutz, C.; Pulcini, H.; and Wolter, F.
\newblock 2020.
\newblock Logical separability of incomplete data under ontologies.
\newblock In {\em Proc.\ of {KR}},  517--528.

\bibitem[\protect\citeauthoryear{Jung \bgroup et al\mbox.\egroup
  }{2021}]{DBLP:conf/kr/JungLPW21}
Jung, J.~C.; Lutz, C.; Pulcini, H.; and Wolter, F.
\newblock 2021.
\newblock Separating data examples by description logic concepts with
  restricted signatures.
\newblock In {\em Proc.\ of {KR}},  390--399.

\bibitem[\protect\citeauthoryear{Jung \bgroup et al\mbox.\egroup
  }{2022}]{DBLP:journals/ai/JungLPW22}
Jung, J.~C.; Lutz, C.; Pulcini, H.; and Wolter, F.
\newblock 2022.
\newblock Logical separability of labeled data examples under ontologies.
\newblock {\em Artif. Intell.} 313:103785.

\bibitem[\protect\citeauthoryear{Jung, Hosemann, and Lutz}{2025}]{SimonKR25}
Jung, J.~C.; Hosemann, S.; and Lutz, C.
\newblock 2025.
\newblock Fitting ontologies and constraints to relational structures.
\newblock In {\em Proc.\ of {KR}}.

\bibitem[\protect\citeauthoryear{Kharlamov \bgroup et al\mbox.\egroup
  }{2017}]{DBLP:journals/ws/KharlamovHSBJXS17}
Kharlamov, E.; Hovland, D.; Skj{\ae}veland, M.~G.; Bilidas, D.;
  Jim{\'{e}}nez{-}Ruiz, E.; Xiao, G.; Soylu, A.; Lanti, D.; Rezk, M.;
  Zheleznyakov, D.; Giese, M.; Lie, H.; Ioannidis, Y.~E.; Kotidis, Y.;
  Koubarakis, M.; and Waaler, A.
\newblock 2017.
\newblock Ontology based data access in {Statoil}.
\newblock {\em J. Web Semant.} 44:3--36.

\bibitem[\protect\citeauthoryear{Konev \bgroup et al\mbox.\egroup
  }{2017}]{DBLP:journals/jmlr/KonevLOW17}
Konev, B.; Lutz, C.; Ozaki, A.; and Wolter, F.
\newblock 2017.
\newblock Exact learning of lightweight description logic ontologies.
\newblock {\em J. Mach. Learn. Res.} 18:201:1--201:63.

\bibitem[\protect\citeauthoryear{Kriegel}{2024}]{DBLP:conf/aaai/Kriegel24}
Kriegel, F.
\newblock 2024.
\newblock Efficient axiomatization of {OWL} 2 {EL} ontologies from data by
  means of formal concept analysis.
\newblock In Wooldridge, M.~J.; Dy, J.~G.; and Natarajan, S., eds., {\em Proc.\
  of AAAI 2024},  10597--10606.
\newblock {AAAI} Press.

\bibitem[\protect\citeauthoryear{Lehmann and
  Hitzler}{2010}]{DBLP:journals/ml/LehmannH10}
Lehmann, J., and Hitzler, P.
\newblock 2010.
\newblock Concept learning in description logics using refinement operators.
\newblock {\em Mach. Learn.} 78(1-2):203--250.

\bibitem[\protect\citeauthoryear{Li, Chan, and Maier}{2015}]{Li2015:qfe}
Li, H.; Chan, C.-Y.; and Maier, D.
\newblock 2015.
\newblock Query from examples: An iterative, data-driven approach to query
  construction.
\newblock {\em Proc. {VLDB} Endow.} 8(13):2158--2169.

\bibitem[\protect\citeauthoryear{Lutz and Wolter}{2012}]{DBLP:conf/kr/LutzW12}
Lutz, C., and Wolter, F.
\newblock 2012.
\newblock Non-uniform data complexity of query answering in description logics.
\newblock In {\em Proc.\ {KR} 2012}.
\newblock {AAAI} Press.

\bibitem[\protect\citeauthoryear{Lutz, Piro, and
  Wolter}{2011}]{DBLP:conf/ijcai/LutzPW11}
Lutz, C.; Piro, R.; and Wolter, F.
\newblock 2011.
\newblock Description logic {TBoxes}: Model-theoretic characterizations and
  rewritability.
\newblock In Walsh, T., ed., {\em 22nd Proc. {IJCAI} 2011},  983--988.
\newblock {IJCAI/AAAI}.

\bibitem[\protect\citeauthoryear{Lutz}{2007}]{DBLP:conf/dlog/Lutz07}
Lutz, C.
\newblock 2007.
\newblock Inverse roles make conjunctive queries hard.
\newblock In {\em Proc.\ of DL 2007}, volume 250 of {\em {CEUR} Workshop
  Proceedings}.
\newblock CEUR-WS.org.

\bibitem[\protect\citeauthoryear{Lutz}{2008}]{DBLP:conf/dlog/Lutz08}
Lutz, C.
\newblock 2008.
\newblock Two upper bounds for conjunctive query answering in {SHIQ}.
\newblock In {\em Proc.\ of DL 2008}, volume 353 of {\em {CEUR} Workshop
  Proceedings}.
\newblock CEUR-WS.org.

\bibitem[\protect\citeauthoryear{Ozaki}{2020}]{DBLP:journals/ki/Ozaki20}
Ozaki, A.
\newblock 2020.
\newblock Learning description logic ontologies: Five approaches. where do they
  stand?
\newblock {\em K{\"{u}}nstliche Intell.} 34(3):317--327.

\bibitem[\protect\citeauthoryear{Rizzo, Fanizzi, and
  d'Amato}{2020}]{DBLP:journals/fgcs/RizzoFd20}
Rizzo, G.; Fanizzi, N.; and d'Amato, C.
\newblock 2020.
\newblock Class expression induction as concept space exploration: From
  {DL-Foil} to {DL-Focl}.
\newblock {\em Future Gener. Comput. Syst.} 108:256--272.

\bibitem[\protect\citeauthoryear{Sequeda \bgroup et al\mbox.\egroup
  }{2019}]{DBLP:conf/semweb/SequedaBMH19}
Sequeda, J.~F.; Briggs, W.~J.; Miranker, D.~P.; and Heideman, W.~P.
\newblock 2019.
\newblock A pay-as-you-go methodology to design and build enterprise knowledge
  graphs from relational databases.
\newblock In {\em Proc.\ of {ISWC}}, volume 11779 of {\em LNCS},  526--545.
\newblock Springer.

\bibitem[\protect\citeauthoryear{Shalev{-}Shwartz and
  Ben{-}David}{2014}]{DBLP:books/daglib/0033642}
Shalev{-}Shwartz, S., and Ben{-}David, S.
\newblock 2014.
\newblock {\em Understanding Machine Learning - From Theory to Algorithms}.
\newblock Cambridge University Press.

\bibitem[\protect\citeauthoryear{Xiao \bgroup et al\mbox.\egroup
  }{2018}]{DBLP:conf/ijcai/XiaoCKLPRZ18}
Xiao, G.; Calvanese, D.; Kontchakov, R.; Lembo, D.; Poggi, A.; Rosati, R.; and
  Zakharyaschev, M.
\newblock 2018.
\newblock Ontology-based data access: A survey.
\newblock In {\em Proc.\ of {IJCAI}},  5511--5519.
\newblock ijcai.org.

\end{thebibliography}

\appendix

\cleardoublepage

\section{Proofs for Section~\ref{sect:consfit}}

To prove  Theorem~\ref{thm:conschar}, we make use of the connection between ontology-mediated
querying and constraint satisfaction problems (CSPs) established in 
\cite{DBLP:conf/kr/LutzW12}. Theorems 20 and 22 of that paper state the following.
A \emph{signature} is a set $\Sigma$ of concept
and role names. For any syntactic object $O$ such as an ontology, an ABox, or a collection of examples, we use $\mn{sig}(O)$, we
denote the set of concept and role names used in $O$. With a \emph{$\Sigma$-ABox}, we mean an ABox \Amc with $\mn{sig}(\Amc) \subseteq \Sigma$.
\begin{proposition}
\label{prop:fromKR12}~\\[-4mm] 
  \begin{enumerate}
  
    \item For every \ALCI-ontology \Omc and finite signature $\Sigma \supseteq \mn{sig}(\Omc)$, there is a $\Sigma$-ABox $\Amc_\Omc$  such that 
      for all $\Sigma$-ABoxes~\Amc: \Amc is consistent with \Omc if and only if \mbox{$\Amc \rightarrow \Amc_\Omc$};

     \item For every ABox \Amc and finite signature $\Sigma \supseteq \mn{sig}(\Amc)$, there is an \ALC-ontology $\Omc_{\Amc,\Sigma}$ such that
       for all $\Sigma$-ABoxes \Bmc: $\Bmc \rightarrow \Amc$ if and only if \Bmc is consistent with~$\Omc_{\Amc,\Sigma}$.
     
  \end{enumerate}
\end{proposition}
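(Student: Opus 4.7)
The plan is to sketch the two constructions; as these are restatements of Theorems~20 and~22 of \cite{DBLP:conf/kr/LutzW12}, I would outline the main ideas and defer the detailed verifications to that paper.

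For Part~2, the idea is to encode each individual $d \in \mn{ind}(\Amc)$ by a fresh concept name $X_d$ and to take $\Omc_{\Amc, \Sigma}$ to consist of:
\begin{itemize}
    \item $\top \sqsubseteq \bigsqcup_{d} X_d$ together with $X_d \sqcap X_{d'} \sqsubseteq \bot$ for distinct $d, d'$, so that every domain element receives exactly one $X$-label;
    \item $X_d \sqsubseteq \neg A$ whenever $A \in \Sigma \cap \NC$ and $A(d) \notin \Amc$, ruling out ``wrong'' concept memberships on $d$-labelled elements;
    \item $X_d \sqsubseteq \forall r. \bigsqcup_{d' : r(d,d') \in \Amc} X_{d'}$ for each $r \in \Sigma \cap \NR$, with the empty disjunction read as $\bot$, forcing $r$-successors of a $X_d$-labelled element to carry a label $X_{d'}$ with $r(d, d') \in \Amc$.
\end{itemize}
From a model $\Imc$ of $\Bmc \cup \Omc_{\Amc, \Sigma}$, the partition axioms give a well-defined map $h(b) = d$ iff $b \in X_d^{\Imc}$, and the two remaining groups of CIs say exactly that $h$ is a homomorphism $\Bmc \to \Amc$. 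Conversely, from $h : \Bmc \to \Amc$ one reads $\Bmc$ as its own interpretation and puts $X_d^{\Imc} := h^{-1}(d)$; an inspection of the CIs, each time using the homomorphism condition, gives that this is a model of $\Omc_{\Amc, \Sigma}$.

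For Part~1, I would build $\Amc_\Omc$ from the space of realizable types. Let $T = \mn{sub}(\Omc) \cup (\Sigma \cap \NC)$ and call $t \subseteq T$ an \Omc-type if some model of $\Omc$ has an element $d$ with $t = \{ C \in T \mid d \in C^{\Imc}\}$. Then $\Amc_\Omc$ has one individual $a_t$ per \Omc-type $t$, includes $A(a_t)$ exactly when $A \in t$ for $A \in \Sigma \cap \NC$, and includes $r(a_t, a_{t'})$ exactly when some model of \Omc contains elements realizing $t$ and $t'$ connected by $r$. For the forward direction, given $\Imc \models \Amc \cup \Omc$, the map $a \mapsto a_{t_a}$ with $t_a = \{ C \in T \mid a^{\Imc} \in C^{\Imc}\}$ preserves concept assertions by construction and preserves role assertions because $\Imc$ itself witnesses the required $r$-compatibility of the involved types.

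The backward direction of Part~1 is the main obstacle: given $h : \Amc \to \Amc_\Omc$, where I write $h(a) = a_{t_a}$, one must assemble a single model of $\Amc \cup \Omc$. I would start with the interpretation $\Imc_0$ on domain $\mn{ind}(\Amc)$ where $a \in A^{\Imc_0}$ iff $A \in t_a$ and $r^{\Imc_0} = \{(a,b) \mid r(a,b) \in \Amc\}$, and then extend each $a$ by attaching a fresh tree-shaped fragment taken from a chosen model of \Omc realizing $t_a$, witnessing its $\exists$-requirements. The $r$-compatibility clause built into $\Amc_\Omc$ ensures that whenever $r(a,b) \in \Amc$, the $\forall r$-demands in $t_a$ are already consistent with $t_b$, so the fresh tree extensions do not clash with the ABox part. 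That this local compatibility globally yields a model of \Omc is a standard type-elimination-plus-unraveling argument for \ALCI which I would take essentially verbatim from the proof of Theorem~20 of \cite{DBLP:conf/kr/LutzW12}.
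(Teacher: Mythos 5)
Your proposal is correct and takes essentially the same route as the paper, which likewise settles both points by appealing to Theorems~20 and~22 of Lutz and Wolter (KR 2012) and only recalls the ontology $\Omc_{\Amc,\Sigma}$ for Point~2; your version of that ontology coincides with the paper's up to replacing its axioms $V_a \sqsubseteq \forall r.\neg V_b$ (for $r(a,b)\notin\Amc$) by the equivalent $X_d \sqsubseteq \forall r.\bigsqcup_{d'\,:\,r(d,d')\in\Amc} X_{d'}$ under the partition axioms. Your additional type-template sketch for Point~1 is the standard construction underlying the cited result; just make the minor point explicit that the role assertions of $\Amc_\Omc$ range only over role names in $\Sigma$, so that $\Amc_\Omc$ is indeed a finite $\Sigma$-ABox.
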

For the reader's information, we recall the construction of the ontology $\Omc_{\Amc, \Sigma}$ from Point~2. This is
of interest because, in the proof of Theorem~\ref{thm:conschar},
that ontology will be use as the fitting ontology (if there is one). We introduce a fresh concept name $V_a$ for every $a \in \mn{ind}(\Amc)$ and then define $ \Omc_{\Amc,\Sigma}$ to contain the following concept inclusions.
$$
\begin{array}{r@{\;}c@{\;}ll}
 \top &\sqsubseteq& \multicolumn{2}{l}{\displaystyle\bigsqcup_{a \in \mn{ind}(\Amc)} V_a} \\[5mm]
                 V_a \sqcap V_b &\sqsubseteq& \bot &\    a, b \in \mn{ind}(\Amc)  \text { with }  a \neq b \\[1mm]
                 V_a &\sqsubseteq& \neg A & \ a \in \mn{ind}(\Amc), A \in \Sigma, A(a) \notin \Amc\\[1mm]
                 V_a &\sqsubseteq& \forall r. \neg V_b  &\ a, b \in \mn{ind}(\Amc), r \in \Sigma, r(a, b) \notin \Amc.
\end{array}
$$
By construction of $\Omc_{\Amc, \Sigma}$, a common model \Imc of a $\Sigma$-ABox \Bmc and $\Omc_{\Amc, \Sigma}$ gives rise to a homomorphism $h$ from \Bmc to \Amc by setting $h(b) = a$ if $b \in V_a$ for all $a \in \mn{ind}(\Bmc)$. This  is well-defined since \Imc is a model of $\Omc_{\Amc, \Sigma}$, and thus the sets $V_a^\Imc$ form a partition of $\Delta^\Imc$.
\thmconschar*
\begin{proof}
``$1 \Rightarrow 2$''. Let \Omc be an \ALCI-ontology that fits~$E$.
Assume, to the contrary of what we have to show, that $\Amc \rightarrow \Amc^+$ for some $\Amc \in E^-$.  Since
\Omc fits $E$, every $\Amc \in E^+$ is consistent with \Omc. By taking the 
disjoint union of models which witness this, we obtain a model of $\Amc^+$ and \Omc and thus $\Amc^+$ is consistent with \Omc. Therefore,
$\Amc^+ \rightarrow \Amc_\Omc$ where $\Amc_{\Omc}$ is the $\Sigma$-ABox from Point~1 of Proposition~\ref{prop:fromKR12},
for $\Sigma=\mn{sig}(O) \cup \mn{sig}(E)$.
By composition of homomorphisms, we obtain $\Amc \rightarrow \Amc_{\Omc}$. By choice of
$\Amc_{\Omc}$, this implies that $\Amc$ is
consistent with \Omc. This contradicts the fact that \Omc fits $E$.
%

\smallskip

``$2 \Rightarrow 1$''. Assume that  $\mathcal{A} \not\rightarrow \Amc^{+}$ for all $\Amc \in E^-$. 
We argue that the \ALC-ontology $\Omc_{\Amc^+,\Sigma}$ from Point~2 of Theorem~\ref{prop:fromKR12} fits $E$,
for $\Sigma=\mn{sig}(E)$. First,
let $\Amc \in E^+$ be a positive example.  Clearly, $\Amc \rightarrow \Amc^+$. Thus, \Amc is consistent with $\Omc_{\Amc^+,\Sigma}$ by choice of $\Omc_{\Amc^+,\Sigma}$. Now let $\Amc \in E^-$ be a negative example.  
By assumption,  $\mathcal{A} \not\rightarrow \Amc^{+}$ and thus $\Amc$ is not consistent with $\Omc_{\Amc^+,\Sigma}$.
\end{proof}

\thmconsnPcompl*
\begin{proof}
    Theorem~\ref{thm:conschar} places the complement of the consistent \Lmc-ontology fitting problem
    in \NPclass: construct $\Amc^+$ in
    polynomial time, guess an $\Amc \in E^-$ and 
    a homomorphism  from 
 $\Amc$ to $\Amc^+$. The lower bound is by a straightforward reduction of the homomorphism problem between directed graphs $G_1$ and $G_2$. Simply use $G_1$, viewed as an ABox in the obvious way, as the only negative 
 example and $G_2$ as the only positive example; then invoke Theorem~\ref{thm:conschar}.
\end{proof}

\section{Proofs for Section~\ref{sect:aqs}}

We again use a result that connects fitting to CSPs, in the
style of Proposition~\ref{prop:fromKR12}, but for AQs in place
of consistency~\cite{DBLP:conf/kr/BourhisL16}.

\begin{proposition}\label{prop:fullhom} \mbox{} 
  \begin{enumerate}
  
    \item For every \ALCI-ontology \Omc and finite signature $\Sigma \supseteq \mn{sig}(\Omc)$, there is a $\Sigma$-ABox $\Amc_\Omc$  such that 
      for all $\Sigma$-ABoxes \Amc, $a \in \mn{ind}(\Amc)$, and AQs $Q$: $\Amc \cup \Omc \not\models Q(a)$
      if and only if there is a homomorphism $h$ from \Amc to $\Amc_\Omc$ with $Q(h(a)) \notin \Amc_\Omc$;


        \item For every ABox \Amc and finite signature $\Sigma \supseteq \mn{sig}(\Amc)$,
    there is an \ALC-ontology $\Omc_{\Amc,\Sigma}$ such that
       for all $\Sigma$-ABoxes \Bmc, $b \in \mn{ind}(\Bmc)$, and concept names $Q$: there is a homomorphism $h$ from \Bmc to \Amc 
       with $Q(h(b)) \notin \Amc$ if and only if $\Bmc \cup \Omc_{\Amc,\Sigma} \not\models Q(b)$.
  
  \end{enumerate}
\end{proposition}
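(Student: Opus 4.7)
My plan is to adapt the constructions in Proposition~\ref{prop:fromKR12} so that they capture AQ-entailment rather than mere consistency. Throughout I may assume $Q \in \Sigma$, extending $\Sigma$ if necessary (if $Q \notin \Sigma$, then $\Amc \cup \Omc \models Q(a)$ is equivalent to $Q(a) \in \Amc$, and both statements reduce to consistency-style claims already handled by Proposition~\ref{prop:fromKR12}).

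Point~2 is the easier direction. I would reuse $\Omc_{\Amc,\Sigma}$ from Proposition~\ref{prop:fromKR12}(2) and enrich it with the positive inclusions $V_a \sqsubseteq A$ for every $a \in \mn{ind}(\Amc)$ and $A \in \Sigma$ with $A(a) \in \Amc$. The original inclusions already force a model $\Imc$ of $\Bmc \cup \Omc_{\Amc,\Sigma}$ to split $\Delta^\Imc$ into a partition via the $V_a^\Imc$, and the assignment $h(e) = a$ iff $e \in V_a^\Imc$ is a homomorphism $\Bmc \to \Amc$; the enriched inclusions then force $e \in A^\Imc$ exactly when $A(h(e)) \in \Amc$, so $b \in Q^\Imc$ coincides with $Q(h(b)) \in \Amc$. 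Extracting $h$ from a model of $\Bmc \cup \Omc_{\Amc,\Sigma}$ witnessing $b \notin Q^\Imc$ gives ``$\Rightarrow$''. For ``$\Leftarrow$'', given a homomorphism $h$ with $Q(h(b)) \notin \Amc$, I build $\Imc$ on domain $\mn{ind}(\Bmc)$ by setting $V_a^\Imc := h^{-1}(a)$, $A^\Imc := \{e : A(h(e)) \in \Amc\}$, and $r^\Imc := \{(e_1,e_2) : r(e_1,e_2) \in \Bmc\}$, and check each inclusion of the enriched ontology mechanically from the fact that $h$ is a homomorphism.

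For Point~1 I would construct $\Amc_\Omc$ as a finite universal template that records which (enriched) $\Sigma$-types are realizable in models of $\Omc$, together with information about their realizable local neighborhoods. A clean way to do this is to take one individual per Hintikka-style type with respect to the closure of $\Omc$ under subconcepts and negation, label it with the concept names it contains, and include a role assertion $r(a_t, a_{t'})$ whenever the 2-type $(t, r, t')$ is realized from a common source in some model of $\Omc$. The ``$\Rightarrow$'' direction is then immediate: from a model $\Imc \models \Amc \cup \Omc$ with $a \notin Q^\Imc$, map each $b$ to the individual representing its type in $\Imc$; this is a homomorphism, and $Q \notin t_a$ gives $Q(h(a)) \notin \Amc_\Omc$.

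The technical heart, and the main obstacle, is the ``$\Leftarrow$'' direction of Point~1: given a homomorphism $h$ into $\Amc_\Omc$ with $Q(h(a)) \notin \Amc_\Omc$, I must assemble a single model of $\Amc \cup \Omc$ in which $a$ avoids $Q$. The subtle point is that a naive disjoint union of per-type witness models is insufficient, because role assertions of $\Amc$ that share an individual must be matched jointly rather than one 2-type at a time; for instance, an element might realize the types $(t,r,t_1)$ and $(t,r,t_2)$ only in distinct models. My plan is therefore to use rich enough types so that all outgoing neighborhood information is already encoded in $t_b$, and then to combine per-individual forest witnesses (obtained via the forest-model property of Lemma~\ref{lem:forest}) by glueing the roots along the role assertions of $\Amc$, verifying that the resulting interpretation remains a model of $\Omc$. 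The argument mirrors the canonical-model construction used for $\ALCI$ query answering and yields $\Amc_\Omc$ of at most single-exponential size.
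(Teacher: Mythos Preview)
The paper does not prove Proposition~\ref{prop:fullhom}; it imports it from \cite{DBLP:conf/kr/BourhisL16} as a known result, in the same way that Proposition~\ref{prop:fromKR12} is imported from \cite{DBLP:conf/kr/LutzW12}. So there is no proof in the paper to compare against, and your task is really to reconstruct the cited construction.

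Your treatment of Point~2 is correct and is exactly the natural strengthening of the $\Omc_{\Amc,\Sigma}$ displayed after Proposition~\ref{prop:fromKR12}: the additional inclusions $V_a \sqsubseteq A$ for $A(a) \in \Amc$ are precisely what is needed to make the direction ``$b \notin Q^\Imc \Rightarrow Q(h(b)) \notin \Amc$'' go through, and your model construction for the converse checks out.

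For Point~1 your plan is also essentially right, but the obstacle you flag is not one. In $\ALCI$ (no counting, no functionality), full Hintikka types over the closure of $\Omc$ together with the concept names in $\Sigma$ already encode all constraints an element imposes on each of its neighbours, and these constraints are \emph{independent across neighbours}: if $(t,r,t_1)$ and $(t,r,t_2)$ are each realizable in some model of~$\Omc$, then a $t$-element with both a $t_1$- and a $t_2$-$r$-successor is realizable as well, since every $\forall r.C \in t$ is satisfied by $t_1$ and by $t_2$ separately and there is nothing else linking two successors. Hence simple 2-type realizability for the edges of $\Amc_\Omc$ suffices, and the glueing of per-individual forest witnesses along the role assertions of $\Amc$ yields a model of $\Omc$ directly, without needing ``richer'' types. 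One small point to watch: your closure must contain all concept names in $\Sigma$, not only subconcepts of $\Omc$, so that the statement holds uniformly for all AQs $Q$ over $\Sigma$.
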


\thmAQchar*
\begin{proof}
``$1 \Rightarrow 2$''. Assume that there is an \ALCI-ontology \Omc that fits~$E$.  
Take any $(\Amc,Q(a)) \in E^-$. 
Then $\Amc \cup \Omc \not\models Q(a)$ and thus $\Amc$ and $\Omc$ have a common model \Imc with $a \notin Q^\Imc$.
By taking the disjoint union of these
models for all $(\Amc,Q(a)) \in E^-$, we obtain a  model \Imc of $\Amc^-$ and \Omc such that  $a \notin Q^\Imc$ for all  $(\Amc,Q(a)) \in E^-$. 
We define an ABox \Cmc as follows:
\begin{align*}
    \Cmc : = {} & \Amc^- \cup {} \\
    & \{ Q(b) \mid (\Amc, Q(a)) \in E^+, b \in \mn{ind}(\Amc^-), b \in Q^{\Imc}\}.
\end{align*}
  It is straightforward to verify that \Cmc is a completion for $E$ that satisfies Condition~(b). It remains to show that it satisfies also 
  Condition~(a).
  
  Let $(\Amc,Q(a)) \in E^{+}$ and let $h$ be a homomorphism from
$\Amc$ to \Cmc. Assume to the contrary of what we have to show that $Q(h(a)) \notin \Cmc$. By definition  of \Cmc, this implies 
$h(a) \notin Q^\Imc$. Thus \Imc witnesses that $\Cmc \cup \Omc \not \models Q(h(a))$.
Let $\Amc_{\Omc}$ be the $\Sigma$-ABox from Point~1 of Proposition~\ref{prop:fullhom}, where $\Sigma = \mn{sig}(E)\cup \mn{sig}(\Omc)$. Since
$\Cmc \cup \Omc \not \models Q(h(a))$,  by choice of  $\Amc_{\Omc}$ there is a homomorphism
  $g$ from \Cmc to $\Amc_{\Omc}$ with $Q(g(h(a))\notin \Amc_{\Omc}$. Then $f=g \circ h$ is a homomorphism from  $\Amc$ to $\Amc_{\Omc}$
  such that $Q(f(a)) \notin \Amc_{\Omc}$. By choice of  $\Amc_{\Omc}$,
  this implies 
    $\Amc \cup \Omc \not\models Q(a)$, in contradiction to the fact that \Omc fits $E$.

    \smallskip

``$2 \Rightarrow 1$''. Let \Cmc be a completion for $E$ that satisfies Conditions~(a) and~(b).
Consider the ontology $\Omc_{\Cmc,\Sigma}$ from Point 2 of 
Proposition~\ref{prop:fullhom}, for $\Sigma \supseteq \mn{sig}(E)$. We argue that $\Omc_{\Cmc}$ fits~$E$. First let $(\Amc,Q(a)) \in E^+$.
Then by Condition~(a) every homomorphism $h$ from \Amc to \Cmc satisfies \mbox{$Q(h(a)) \in \Cmc$}. By 
choice of $\Omc_{\Cmc,\Sigma}$, this gives $\Amc \cup \Omc_{\Cmc,\Sigma} \models Q(a)$, as required.
Now let $(\Amc,Q(a)) \in E^-$. By Condition~(b), we have
$Q(a) \notin \Cmc$.  The identity may thus serve as a homomorphism $h$ from $\Amc$ to~\Cmc with  $Q(h(a)) \notin \Cmc$. By choice of $\Omc_{\Cmc,\Sigma}$,
this implies $\Amc \cup \Omc_{\Cmc,\Sigma} \not \models Q(a)$, as required.
\end{proof}

\proprefcand*
\begin{proof}
    ``$1 \Rightarrow 2$''.  We prove the contraposition. Assume that there is no refutation candidate \Cmc for $E$ such that 
     $Q(a) \in \Cmc$ for some $(\Amc,Q(a)) \in E^-$.
    Let \Cmc be the refutation candidate obtained from $\Amc^-$ by 
    applying rule \Rsf exhaustively. Then   $Q(a) \notin \Cmc$ for all $(\Amc,Q(a)) \in E^-$. Thus, 
    \Cmc is a completion that satisfies Conditions~(a) and~(b) of Theorem~\ref{thm:AQchar}. Thus, Theorem~\ref{thm:AQchar} implies that $E$ admits a fitting \Lmc-ontology.

       \smallskip

``$2 \Rightarrow 1$''. Assume that there is a refutation candidate \Cmc for $E$ such that $Q(a) \in \Cmc$ for some $(\Amc,Q(a)) \in E^-$.
Then every completion \Cmc for $E$ that satisfies Condition~(a) of Theorem~\ref{thm:AQchar} must fail to satisfy Condition~(b). To see
this, it suffices to note that the rule \Rsf from the definition 
of refutation candidates is in fact identical to Condition~(a).
Thus, Point~2 of Theorem~\ref{thm:AQchar} fails, and that theorem implies that $E$ does not admit a fitting \Lmc-ontology.
\end{proof}

\section{Proofs for Section~\ref{sect:fullCQ}}
%

\lemNoInconsEx*
\begin{proof}
    `` $\Leftarrow$ ''
    Assume that \Omc is a fitting \Lmc-ontology for $E'$. We claim that \Omc is also a fitting \Lmc-ontology for $E$. Since the sets of positive examples in both collections coincide, it remains to show that $\Amc \cup \Omc \not\models q$ for every negative example $(\Amc,q) \in E$. 
    If $(\Amc, q)$ is a consistent example, then $(\Amc, q) \in E'^-$, and therefore  
    $\Amc \cup \Omc \not \models q$ by assumption. Now, let $(\Amc, q) \in E^-$ be an inconsistent example. By construction, $(\Amc, X(a)) \in E'^+$ and therefore $\Amc \cup \Omc$ must be consistent. Using the definition of inconsistent examples, we conclude $\Amc \cup \Omc 
    \not \models q$.
    
    `` $ \Rightarrow$ '' 
    Assume that \Omc is a fitting \Lmc-ontology for $E$. We construct $\Omc'$ by replacing every occurrence of $X$ in \Omc with the fresh concept name $Y \in \NC$. Next, for every interpretation \Imc, define $\Imc'$ to be the interpretation obtained from \Imc by setting $X^{\Imc'} := \emptyset$ and $Y^{\Imc'} := X^\Imc$. Observe that for every interpretation $\Jmc$ with $X^{\Jmc} = \emptyset$, there exists an interpretation $\Imc$ such that $\Imc' = \Jmc$. Using induction, one can easily show that \Imc is a model of  $\Amc \cup \Omc$ if and only if $\Imc'$ is a model of $\Amc \cup \Omc'$, for every ABox \Amc in which neither $X$ nor $Y$ appears. 
    
    Now, let $\Amc$ be an ABox and $q$ a full conjunctive query, in none of which $X$ or $Y$ occurs. 
    \\[1mm]
    {\bf Claim.} $\Amc \cup \Omc \models q$ if and only if $\Amc \cup \Omc ' \models q$. 

    For the only if direction, assume $\Amc \cup \Omc' \not \models q$ and let $\Jmc$ be the respective witness model of $\Amc \cup \Omc'$ satisfying $\Jmc \not \models q$. Since $X$ does not occur in $\Omc'$, \Amc or $q$, we may assume $X^{\Jmc} = \emptyset$. Let \Imc be the interpretation such that $\Imc' = \Jmc$. By the earlier argument, $\Imc$ is a model of $\Amc \cup \Omc$. Furthermore, as $X$ and $Y$ do not occur in $q$, we reason $\Imc \not \models q$, and hence $\Amc \cup \Omc \not \models q$. 

    To show the if direction, conversely suppose $\Amc \cup \Omc \not \models q$. Let \Imc be the witness model of $\Amc \cup \Omc$ with $\Imc \not \models q$. Then, $\Imc'$ is a model of $\Amc \cup \Omc'$ and again, since neither $X$ nor $Y$ do  appear in $q$, we infer $\Imc'\not \models q$. Thus, $\Amc \cup \Omc' \not \models q$.
    
    \medskip
    We now prove that $\Omc'$ is a fitting ontology for $E'$. Let $(\Amc, q) \in E'^+ = E^+$. Since \Omc is a fitting ontology for $E$, $\Omc \cup \Amc \models q$ and using the claim above, we derive $\Omc' \cup \Amc \models q$. An analogous argument applies to each consistent negative example $(\Amc, q) \in E^-$. It remains to show that for every inconsistent negative example $(\Amc, q) \in E^-$, we have $\Amc \cup \Omc' \not \models X(a)$.  By assumption, there exists a model \Imc of $\Amc \cup \Omc$, and thus $\Imc'$ is a model of $\Amc \cup \Omc' $. Since $X^{\Imc'}= \emptyset$, we conclude $\Imc' \not \models q$, and thus $\Omc'$ is a fitting ontology for $E'$.
\end{proof}

\thmfullCQchar*
\begin{proof}
    ``$1 \Rightarrow 2$''. Assume that $E$ admits a fitting \Lmc-ontology \Omc. 
    Define a collection $E'$ 
    of ABox-AQ examples as follows:
    \begin{itemize}

        \item ${E'}^+ = \{ (\Amc,A(a)) \mid (\Amc,q) \in E^+ \text{ and } A(a) \in q\}$;

        \item Consider each 
        $(\Amc,q) \in E^-$; since no inconsistent negative examples are used $(\Amc,q)$ is consistent. Thus every role atom $r(a,b) \in q$ is an assertion in \Amc. Since \Omc fits the negative example $(\Amc,q)$, there must thus be
           a concept assertion $Q(a) \in q$ such that $\Amc \cup \Omc \not\models Q(a)$. Include $(\Amc,Q(a))$ in~${E'}^-$.
           
    \end{itemize}
    It is easy to verify that \Omc fits $E'$. Thus, there is a completion
    \Cmc for $E'$ that satisfies Conditions~(a) and~(b) from Theorem~\ref{thm:AQchar}. From the proof of that theorem, we
     additionally know that \Cmc is consistent with \Omc. By construction of $E'$, $\Cmc$ being a completion of $E'$ clearly
    implies that \Cmc also satisfies Conditions~(a) and~(b) from Theorem~\ref{thm:fullCQchar}. It remains to argue that 
     Condition~(c) is also satisfied. 
     
     To this end, assume that $(\Amc,q)\in E^+$
     is inconsistent. Then \Amc is inconsistent with \Omc. Assume to
     the contrary of what we have to show that there is a homomorphism $h$ from \Amc to \Cmc.
     Let $\Amc_{\Omc}$ be the $\Sigma$-ABox from Point~1 of Proposition~\ref{prop:fromKR12}, for $\Sigma = \mn{sig}(\Omc) \cup \mn{sig}(E')$. Since \Cmc is consistent with \Omc, there is a homomorphism $g$ from \Cmc to $\Amc_{\Omc}$.
     By composing $h$ and $g$, we obtain a homomorphism from $\Amc$ to
     $\Amc_{\Omc}$. As this implies that \Amc is consistent with \Omc, we
     have obtained a contradiction.
     
       \smallskip

  ``$2 \Rightarrow 1$''. Assume that there is a completion \Cmc for $E$ such that Conditions~(a) to~(c) from Theorem~\ref{thm:fullCQchar} are satisfied. Define a collection $E'$ of ABox-AQ examples as follows:
    \begin{itemize}

        \item ${E'}^+ = \{ (\Amc,A(a)) \mid (\Amc,q) \in E^+ \text{ and } A(a) \in q\}$;

        \item Consider each $(\Amc,q) \in E^-$; by Condition~(b) and since all negative examples are consistent,
        there is a $Q(a) \in q$ such that $Q(a) \notin \Cmc$. Include $(\Amc,Q(a))$ in ${E'}^-$.
           
    \end{itemize}
    It is easy to see that \Cmc is also a completion for $E'$ that satisfies Conditions~(a) and~(b) from Theorem~\ref{thm:AQchar}.
    By that theorem, there is therefore an \Lmc-ontology \Omc that fits $E'$.
    We argue that \Omc also fits $E$. Since \Omc fits all negative examples in $E'$, it is clear
    the \Omc fits all negative examples of $E$. 
    
    Let $(\Amc,q) \in E^+$
    be a consistent positive example. Since \Omc
    fits all positive examples in $E'$, we have $\Amc \cup \Omc \models Q(a)$ for all concept atoms $Q(a) \in q$. It thus remains to show that 
    $\Amc \cup \Omc \models r(a,b)$ for all role atoms $r(a,b) \in q$, but
    this is clear since $(\Amc,q)$ is consistent.

    Now let $(\Amc,q) \in E^+$
    be an inconsistent positive example. By Condition~(c), there is no homomorphism from \Amc to \Cmc. From the proof of Theorem~\ref{thm:AQchar}, we actually know that \Omc can be chosen as the
    ontology $\Omc_{\Cmc, \Sigma}$ from Point~2 of Proposition~\ref{prop:fullhom}, where $\Sigma = \mn{sig}(E') \cup \{Q\}$ for a fresh concept name $Q$.
    Now, choose some $b \in \mn{ind}(\Amc)$. Since there is no homomorphism from
    \Amc to \Cmc, there is also no homomorphism $h$ from \Amc to \Cmc with
    $Q(h(b)) \notin \Cmc$. By choice of $\Omc_{\Cmc, \Sigma}$, this implies $\Amc \cup \Omc_{\Omc, \Sigma} \models Q(b)$. But since $Q$ is a fresh concept
    name, \Amc must then be inconsistent with $\Omc_{\Cmc, \Sigma}$. Thus, $\Omc=\Omc_{\Cmc, \Sigma}$ fits $(\Amc,q)$.
\end{proof}

\thmfullCQconpupper*

\begin{proof}
       Again \coNPclass-hardness is inherited from the AQ case and it remains to
   argue that the complement of the $(\Lmc,\text{FullCQ})$-ontology fitting problem is in \NPclass. By Proposition~\ref{prop:refcandFullCQ}, it suffices to guess 
  an ABox \Cmc with the same individuals as $\Amc^-$ and to verify that it is a refutation candidate and that it satisfies at least one of Conditions~(a) and~(b) in Proposition~\ref{prop:refcandFullCQ}. Verifying that \Cmc is a refutation candidate can be achieved 
  exactly as in the proof of Theorem~\ref{thm:AQincoNP}. To verify that
  \Cmc satisfies at least one of Conditions~(a) and~(b), we may guess which condition is satisfied and, in the case of Condition~(b),
  also an inconsistent $(\Amc,q) \in E^+$ and a homomorphism from \Amc
  to \Cmc. Verifying that $(\Amc,q)$ is indeed inconsistent can clearly
  be done in polynomial time.
\end{proof}

\section{Proof of Theorem~\ref{thm:charUCQs}}
To prove~Theorem~\ref{thm:charUCQs}, we need several tools. One of them is a local variant of $\ALCI$-bisimulations:

Let $\Imc_1, \Imc_2$ be interpretations and $k \geq 0$. A relation $S$ is a \emph{$k$-$\ALCI$-bisimulation} between $\Imc_1$ and $\Imc_2$ if there is a series of relations
$S = S_k \subseteq S_{k - 1} \cdots \subseteq S_0$ such that  the following
conditions are satisfied for all concept names $A$, roles $r$, and  $i \geq 1$:
\begin{enumerate}
    \item if $(d_1, d_2) \in S_0$, then $d_1 \in A^{\Imc_{1}}$ if and only if $d_2 \in A^{\Imc_{2}}$;
    \item if $(d_1, d_2) \in S_i$ and $(d_1, d_1') \in r^{\Imc_1}$, then there is a $(d_2, d_2') \in r^{\Imc_2}$ with $(d_1', d_2') \in S_{i - 1}$;
    \item if $(d_1, d_2) \in S_i$ and $(d_2, d_2') \in r^{\Imc_2}$, then there is a $(d_1, d_1') \in r^{\Imc_2}$ with $(d_1', d_2') \in S_{i - 1}$.
\end{enumerate}
We write $(\Imc_1, d_1) \sim_{\ALCI, k} (\Imc_2, d_2)$ if there is a $k$-$\ALCI$-bisimulation $S$ between interpretations $\Imc_1$ and $\Imc_2$ with $(d_1, d_2) \in S$. We also define $k$-$\ALC$-bisimulations as the variant of $k$-$\ALCI$-bisimulations where $r$ ranges only over role names and write $(\Imc_1, d_1) \sim_{\ALC, k} (\Imc_2, d_2)$ if there is a $k$-$\ALCI$-bisimulation $S$ between interpretations $\Imc_1$ and $\Imc_2$ with $(d_1, d_2) \in S$.

 The following lemma is standard: 

\begin{lemma}\label{lem:bisim-preserves-trees}
    Let $\Jmc, \Jmc'$ be  interpretations, $e \in \Delta^\Jmc$, $e' \in \Delta^{\Jmc'}$, $\Lmc \in \{\ALC, \ALCI\}$, $\Imc$ an $\Lmc$-forest
    model of $\Amc = \emptyset$ consisting of a single tree with root $d$ and depth at most $n$.
    If there is a
    homomorphism $h$ from $\Imc$ to $\Jmc$ with $h(d) = e$ and  $(\Jmc, e) \sim_{\Lmc, n} (\Jmc', e')$, then there is also a homomorphism
    $h'$ from $\Imc$ to $\Jmc'$ with \mbox{$h(d) = e'$}. 
\end{lemma}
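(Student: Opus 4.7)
The plan is to proceed by induction on the depth bound $n$, exploiting the stratification $S = S_n \subseteq S_{n-1} \subseteq \cdots \subseteq S_0$ of the given $k$-$\Lmc$-bisimulation. The base case $n = 0$ is immediate: $\Imc$ contains only the root $d$, so I set $h'(d) := e'$, and the atomic condition applied to $(e, e') \in S_0$ guarantees that $h'$ preserves all concept name memberships at $d$, since $h$ already does and $e = h(d)$.

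For the inductive step I again set $h'(d) := e'$ and handle each successor $d'$ of $d$ in $\Imc$ independently. Let $r$ be the role with $(d, d') \in r^{\Imc}$; for $\Lmc = \ALCI$ this $r$ may be an inverse role. Since $h$ is a homomorphism, $(e, h(d')) \in r^{\Jmc}$, so condition~2 of the $k$-bisimulation at level $n$ yields a witness $e'' \in \Delta^{\Jmc'}$ with $(e', e'') \in r^{\Jmc'}$ and $(h(d'), e'') \in S_{n-1}$. Truncating the chain at $S_{n-1}$ gives an $(n-1)$-$\Lmc$-bisimulation witnessing $(\Jmc, h(d')) \sim_{\Lmc, n-1} (\Jmc', e'')$, and the restriction of $h$ to the subtree $\Imc_{d'}$ rooted at $d'$ is a homomorphism of depth at most $n - 1$ into $\Jmc$ sending $d'$ to $h(d')$. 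The inductive hypothesis then supplies a homomorphism $h'_{d'}$ from $\Imc_{d'}$ into $\Jmc'$ with $h'_{d'}(d') = e''$.

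Assembling $h'$ from the assignment $h'(d) = e'$ together with the subtree homomorphisms $h'_{d'}$ yields the required map: the subtree domains are pairwise disjoint (as $\Imc$ is a single tree over an empty ABox, so all edges are parent-child edges already accounted for), the freshly introduced edges $(e', e'') \in r^{\Jmc'}$ cover every edge emanating from the root, and all other edge and concept name assertions are handled by the $h'_{d'}$.

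The only genuine subtlety is the $\ALCI$ case, where the forest is unoriented as a graph but the interpretation still assigns directed roles, so a parent-to-child edge may correspond to either a role name or an inverse role. This is the reason the $k$-$\ALCI$-bisimulation is defined with $r$ ranging over arbitrary roles: once this is noted, conditions~2 and~3 together handle both orientations uniformly, and the induction goes through identically for $\ALC$ and $\ALCI$.
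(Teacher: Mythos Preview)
The paper does not actually prove this lemma: it is introduced with ``The following lemma is standard'' and no argument is given. Your induction on the depth bound $n$, using the forth condition of the $k$-$\Lmc$-bisimulation at each level to push each child along and then recursing on the subtrees, is exactly the standard argument one would expect here, and it is correct. The only minor point worth noting is that concept-name preservation at the root in the inductive step follows from $(e,e') \in S_n \subseteq S_0$, which you use implicitly; otherwise the proof is complete as written.
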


Note that if $\Jmc'$ is the $\Lmc$-unraveling of $\Jmc$ at $d$, then
$(\Jmc, e) \sim_{\Lmc, n} (\Jmc', e)$ for any $n$.

The theorem below is a direct consequence of the model construction used in the proof of
Theorem~1 in \cite{DBLP:conf/kr/GogaczIM18}.

\begin{theorem}\label{thm:finite-counter-model}
    Let \Amc be an ABox, $\Imc$ an $\ALCI$-forest model of $\Amc$, and $n \geq 1$. Then, there
    exists a finite model $\Jmc$ of $\Amc$ such that 
        for all CQs with at most $n$ variables: for every homomorphism $h$ from $q$ to $\Jmc$, there is a
        homomorphism $h'$ from $q$ to $\Imc$ such that
        \[
            (\Imc, h(x)) \sim_{\ALCI, n} (\Jmc, h'(x))
        \]
        for all $x \in \mn{var}(q)$.
\end{theorem}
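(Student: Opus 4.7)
The plan is to derive the statement from the finite-model construction of \citet{DBLP:conf/kr/GogaczIM18}, which produces, from an $\ALCI$-forest model of \Amc, a finite model that ``locally mimics'' the forest up to $n$-$\ALCI$-bisimulation. What then needs to be checked is (i)~that the construction yields a model of \Amc and (ii)~that a homomorphism into it from a CQ with at most $n$ variables can be lifted to \Imc in the required bisimilarity-preserving way.

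First I would observe that the $\ALCI$-forest model \Imc of \Amc produced via Lemma~\ref{lem:forest} has bounded degree and uses only symbols from a finite signature, so there are only finitely many $n$-$\ALCI$-bisimulation types realised by its elements. Next, I would describe \Jmc: keep $\mn{ind}(\Amc)$ together with all elements of \Imc up to some depth $N$ (with $N$ chosen much larger than $n$) isomorphically intact; each element at greater depth is then identified with a ``witness'' element of the same $n$-$\ALCI$-bisimulation type appearing at smaller depth, with role edges redirected accordingly. This produces a finite interpretation \Jmc whose restriction to $\mn{ind}(\Amc)$ coincides with \Imc (hence $\Jmc\models\Amc$ is automatic), and every $e\in\Delta^\Jmc$ satisfies $(\Jmc,e)\sim_{\ALCI,n}(\Imc,\widehat e)$ for some $\widehat e\in\Delta^\Imc$.

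For the CQ-lifting property, fix a CQ $q$ with $|\mn{var}(q)|\le n$ and a homomorphism $h\colon q\to\Jmc$. I would process each connected component of $q$ independently. For a component containing an individual name $a$, set $h'(a)=a$ (this is forced, and consistent with $h$ since the ABox part is preserved verbatim). Otherwise, pick a root variable $x_0$ and choose $h'(x_0)$ to be any $\widehat{h(x_0)}\in\Delta^\Imc$ with $(\Jmc,h(x_0))\sim_{\ALCI,n}(\Imc,h'(x_0))$, which exists by construction. Then extend $h'$ to the remaining variables by a BFS from the root, using the forth condition of $(k-1)$-bisimulation at each step to produce a matching role successor in \Imc for each atom of $q$. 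Since each variable is reached within at most $n-1$ edges and the bisimulation at $x_0$ has depth $n$, enough depth remains to cover the component, and the resulting map is a genuine homomorphism from $q$ to \Imc that is $n$-bisimilar to $h$ variable by variable.

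The main obstacle is ensuring that the foldings at depth $N$ do not break the lifting when a CQ atom crosses the boundary between the ``preserved'' initial segment of \Jmc and the ``folded'' part, or between different trees of the forest via the ABox. Choosing $N$ large enough compared to $n$ and the diameter of the ABox part of \Imc, and making the identifications only among elements of identical $n$-bisimulation type, guarantees that every extension step of $h'$ can be carried out inside a subinterpretation of \Imc that is isomorphic to the corresponding part of \Jmc, so that the role atoms of $q$ are witnessed in \Imc and the required $n$-bisimilarity at each variable is maintained.
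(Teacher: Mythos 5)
The paper does not prove this theorem from scratch: it obtains it as a direct consequence of the finite model construction in the proof of Theorem~1 of Gogacz, Ib\'a\~nez-Garc\'ia and Murlak (KR 2018), which is engineered precisely so that conjunctive-query matches in the finite model can be lifted. Your proposal announces that plan but then substitutes a naive construction of your own (truncate \Imc at depth $N$ and fold deeper elements onto shallower witnesses of the same $n$-$\ALCI$-bisimulation type), and this construction does not establish the statement. The central gap is in the lifting step: your BFS argument, which extends $h'$ atom by atom using the forth condition of the bisimulation, is only sound when the component of $q$ being processed is tree-shaped. A CQ with at most $n$ variables may contain cycles or parallel role atoms, and the redirected edges created by the folding can produce matches for such patterns in \Jmc that have no counterpart in \Imc: for instance, if the $r$-successor and the $s$-successor of a depth-$N$ element are folded onto the same witness, then $\exists x \exists y\, (r(x,y) \wedge s(x,y))$ matches in \Jmc, while in the forest model \Imc no such pattern need exist outside the ABox part. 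Pointwise $n$-bisimilarity of the images cannot repair this, because bisimulations do not preserve the identification of distinct query variables' images; avoiding exactly these spurious cyclic matches is the hard combinatorial content of the cited construction and cannot be obtained by choosing $N$ large.

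A second, independent problem is that even your elementwise claim, that every $e \in \Delta^{\Jmc}$ satisfies $(\Jmc,e) \sim_{\ALCI,n} (\Imc,\widehat e)$, fails for the naive folding in the presence of inverse roles. Redirecting an edge from a depth-$N$ element $d$ onto a shallow witness $w$ gives $w$ a new incoming $r$-edge, i.e.\ a new $r^-$-successor in \Jmc. The back condition of $n$-$\ALCI$-bisimulation at $w$ then requires a matching $r^-$-successor of $\widehat w$ in \Imc that is $(n-1)$-bisimilar to $d$, and the parent of $\widehat w$ in the forest need not have anything to do with $d$'s type. So identifications must also respect the surrounding context (in both role directions), not just the $n$-type of the folded element. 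Both issues are exactly what the construction of Gogacz et al.\ is designed to handle; to make your argument work you would either have to invoke their Theorem~1 as a black box (as the paper does) or reproduce a construction of comparable sophistication, not a depth-bounded folding onto bisimilar witnesses.
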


Moreover, we need to restrict the degree of interpretations used in Point 2 of Theorem~\ref{thm:charUCQs} to be bounded by an exponential and introduce a stronger version of Condition~(b) of Theorem~\ref{thm:charUCQs} that, informally, ensures that matches of disconnected or Boolean CQs are local.

For an $\Lmc$-forest models $\Imc$ of some ABox $\Amc$, let the \emph{depth} of an element $d$ be the length of
the shortest path from an individual $a \in \mn{\Amc}$ to $d$, or $\infty$ if no such path exists.
For $n \geq 0$, let $\Imc|_n$ be the restriction of $\Imc$ to elements of at most depth $n$.

\begin{lemma}\label{lem:make-IAh-local}
Let $E = (E^+, E^-)$ be a collection of labeled ABox-UCQ examples.
If there is an interpretation $\Imc$ that
satisfies Conditions~(a) and~(b) of Theorem~\ref{thm:charUCQs}, then there exists an interpretation $\Imc'$ with degree at most \bound{E} that satisfies Condition~(a), (b) and the following
variant of Condition~(b):
\begin{enumerate}
    \item[(b${}^*$)] for all $(\Amc, q) \in E^+$, if $h$ is a homomorphism from
    $\Amc$ to $\Imc'$, then $\Imc'_{\Amc, h, \Lmc}|_{||E||} \models q$.
\end{enumerate}
\end{lemma}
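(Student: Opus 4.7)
The plan is to construct $\Imc'$ from $\Imc$ by replacing each $\Imc_e$ (for $e \in E^-$) with the $\Lmc$-unraveling of a finite model obtained via Theorem~\ref{thm:finite-counter-model}, thereby bounding the degree while preserving the characterizing properties via bisimulation. Set $n := ||E||$. For each $e = (\Amc_e, q_e) \in E^-$, apply Theorem~\ref{thm:finite-counter-model} to $\Imc_e$ with parameter $n$ to obtain a finite model $\Jmc_e$ of $\Amc_e$ of size at most $\bound{E}$ (a bound that can be extracted from the construction in \cite{DBLP:conf/kr/GogaczIM18}). Let $\Imc'_e$ be the $\Lmc$-unraveling of $\Jmc_e$ at the individuals of $\Amc_e$, and set $\Imc' := \biguplus_e \Imc'_e$. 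Since the degree of $\Imc'_e$ is bounded by $|\Jmc_e|$, the degree of $\Imc'$ is at most $\bound{E}$, as required.

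Condition~(a) follows by a composition argument: if some disjunct $p$ of $q_e$ had a match $g : p \to \Imc'_e$, composing with the collapse map $c_e : \Imc'_e \to \Jmc_e$ would give $\Jmc_e \models p$; since $|p| \leq n$, Theorem~\ref{thm:finite-counter-model} then pulls this back to a match $p \to \Imc_e$, contradicting $\Imc_e \not\models q_e$.

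For Conditions~(b) and~(b${}^*$), take $(\Amc, q) \in E^+$ and a weak homomorphism $h : \Amc \to \Imc'$; let $c : \Imc' \to \Jmc$ denote the combined collapse. Viewing $\Amc$ as a CQ with at most $n$ variables and applying Theorem~\ref{thm:finite-counter-model} to $c \circ h$ yields a weak homomorphism $h' : \Amc \to \Imc$ with $(\Imc, h'(a)) \sim_{\ALCI, n} (\Jmc, c(h(a)))$ for every $a \in \mn{ind}(\Amc)$; chaining with the (arbitrary-depth) $\Lmc$-bisimulation between $\Jmc$ and its unraveling $\Imc'$ gives $(\Imc, h'(a)) \sim_{\Lmc, n} (\Imc', h(a))$. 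Condition~(b) for $\Imc$ applied to $h'$ produces a match $g : p \to \Imc_{\Amc, h', \Lmc}$ for some disjunct $p$ of $q$. The interpretations $\Imc_{\Amc, h', \Lmc}$ and $\Imc'_{\Amc, h, \Lmc}$ have identical ABox parts (0-bisimulation preserves concept-name labels at the roots) and $\Lmc$-bisimilar (to depth $n$) subtrees at each individual; using Lemma~\ref{lem:bisim-preserves-trees} componentwise, $g$ transfers to a match $g' : p \to \Imc'_{\Amc, h, \Lmc}$ lying within depth $n$, since $|p| \leq n$ controls both the diameter of the match and the required bisimulation depth. This establishes (b${}^*$); (b) is then immediate.

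The main obstacle is the depth-$n$ bound in the match transfer. For a match component passing through an individual, the bound is easy, as the match's diameter immediately constrains the depths reached. The delicate case is a match component confined entirely to one subtree away from any individual, which could a priori sit arbitrarily deep in $\Imc_{\Amc, h', \Lmc}$. The key idea is to unfold such a component into a tree interpretation of depth at most $|p| \leq n$ rooted at the corresponding subtree root and apply Lemma~\ref{lem:bisim-preserves-trees} through the depth-$n$ bisimilarity between the two subtree roots, thereby relocating the match into the top $n$ levels of the subtree in $\Imc'_{\Amc, h, \Lmc}$.
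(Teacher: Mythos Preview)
Your approach is genuinely different from the paper's, but the argument for Condition~(b${}^*$) has a real gap.

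The paper does not use Theorem~\ref{thm:finite-counter-model} for this lemma. It proceeds in two independent phases. First, it adds edges via a fresh role name~$u$ so that every Boolean component of every witnessing match becomes connected to some individual, and then unravels; this alone secures~(b${}^*$), because now every match of a disjunct~$p$ sits within depth~$\|E\|$ of the ABox part. Second, it bounds the degree by building $\Imc'$ as a selected subinterpretation of this unraveling, adding successors only when forced by a violated match; the bound $2^{\|E\|^2}$ comes from counting subtrees of $\Amc$-variations of the queries, not from any finite-model construction.

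Your route collapses these two phases, and two problems result. The minor one is that Theorem~\ref{thm:finite-counter-model} as stated gives no size bound on~$\Jmc_e$, so the degree bound $|\Jmc_e| \le 2^{\|E\|^2}$ is unsupported. The serious one is the transfer of the match~$g$ in the ``delicate case''. You have only an $n$-bisimulation between the subtree roots $h'(a)$ in~$\Imc$ and $h(a)$ in~$\Imc'$. A component~$p'$ of~$p$ matched at depth $d > n$ in the $\Lmc$-unraveling of~$\Imc$ at~$h'(a)$ is simply out of reach of this bisimulation: to apply Lemma~\ref{lem:bisim-preserves-trees} you would need a tree rooted at~$a$ that contains the image of~$p'$, and any such tree has depth at least~$d$, not~$|p|$. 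Your sentence ``$|p| \le n$ controls \dots\ the required bisimulation depth'' conflates the diameter of~$p'$ with the depth at which it is matched; the latter is what matters and is not bounded by~$n$ under Condition~(b) alone. This is exactly the obstacle the paper's $u$-edge trick is designed to remove \emph{before} any bisimulation-style transfer is attempted, and without an analogous step your argument does not establish~(b${}^*$).
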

\begin{proof}
We modify $\Imc$ as follows to obtain $\Imc'$.
Let $u$ be a role name that does not occur in $E$, and let $n = ||E||$.

First we modify $\Imc$ to ensure that Condition~(b${}^*$) holds.
As $\Imc$ satisfies Condition~(b),  for each $(\Amc, q) \in E^+$ and homomorphism $h$ from $\Amc$ to $\Imc$,
there must be a CQ $p$ in $q$ such that $\Imc_{\Amc, h, \Lmc} \models p$.
Let $g$ be a homomorphism from $p$ to $\Imc_{\Amc, h, \Lmc}$, and $h'$ be the extension of $h$ to a homomorphism from $\Imc_{\Amc, h, \Lmc}$ to $\Imc$.
For each connected component $p'$ of $p$ that does not contain an individual from $\Amc$, pick a variable $x \in \mn{var}(p')$,
and an individual $a \in \mn{ind}(\Amc)$ and extend $\Imc$ with $(h'(a), h'(g(x)) ) \in u^{\Imc}$.

After this modification, $\Imc$ satisfies Condition~(b${}^*$), but is no longer an $\Lmc$-forest model, and has potentially infinite degree.
Then, by unraveling $\Imc$, we obtain an $\Lmc$-forest model $\Jmc$ that satisfies Condition~(a) and~(b${}^*$). 

We obtain the desired interpretation $\Imc'$ as a restriction of $\Jmc$ to a subset of its domain.
We start by setting $\Imc'$ to be the restriction of $\Jmc$ to the individuals of
$\bigcup_{(\Amc, q) \in E^-} \Amc$. Thus, at the start the degree of $\Imc'$
is bounded by $||E^-||$ and $\Imc'$ satisfies Condition~(a).

We then extend $\Imc'$ by exhaustively applying the following rule for all $(\Amc, q) \in E^+$:
\begin{enumerate}
    \item[($*$)] if $h$ is a homomorphism from $\Amc$ to $\Imc'$ and $\Imc'_{\Amc, h, \Lmc}|_n \not\models q$, then choose a CQ $p$ of $q$ such that $\Jmc_{\Amc, h, \Lmc}|_n \models p$.
    Now, from all homomorphisms from $p$ to $\Jmc_{\Amc, h, \Lmc}|_n$ select a homomorphism $g'$ for which $\mn{im}(g') \cap \Delta^{\Imc'_{\Amc, h, \Lmc}}$ is maximal.

    For each component $p'$ of $p$, let $\Jmc'$ be the restriction of $\Jmc_{\Amc, h, \Lmc}$ to the minimal set of elements that contains the image of $p'$ under $g'$ and is connected to some individual $a \in \mn{\Amc}$.
    Extend $\Imc'$ with the image of $\Jmc'$ under $h'$, where $h'$ is the natural extension of $h$ to $\Jmc_{\Amc, h, \Lmc}$.
\end{enumerate}


Exhaustive and fair application of ($*$), such that every homomorphism $h$ is eventually processed, ensures that $\Imc'$ satisfies
Condition~(b${}^*$). Since it always remains a restriction of $\Jmc$ to a subset of its domain,
it also satisfies Condition~(a).

We now show that the degree of any element $d \in \Delta^{\Imc'}$ is bounded by \bound{E}. 
For this, we make use of Lemma~\ref{lem:variation} and the definition of \Amc-variations from Section~\ref{subsect:upperboundsCQ}. 
It is easy to see that this bound holds for the initial interpretation $\Imc'$ before any rule application.
Now consider a successor $d' \in \Delta^{\Imc'}$ of $d$, introduced by a rule application for the positive example $(\Amc, q)\in E^+$ and homomorphism $h$ from \Amc to $\Imc'$. Let $p$ be the CQ of $q$ and $g'$ the homomorphism from $p$ to $\Jmc_{\Amc,h,\Lmc}|_n$ selected by the rule. By the same argument as in the proof of Lemma~\ref{lem:variation}, there exists 
a proper \Amc-variation $p'$ of $p$ along with a weak homomorphism $\hat h = h' \circ g'$ from $p'$ to $\Imc'$ that is compatible with $h$, where $h'$ denotes the extension of $h$ to $\Imc'_{\Amc, h, \Lmc}$. Since $d'$ lies in the image of $\hat h$, for each $t \in \hat{h}^{-1}(d')$, there is a subtree of $p'$ rooted at  $t$  that maps into the subtree of $\Imc'$ rooted at $d'$. Due to the choice of $g'$, no other successor of $d$ allows a mapping from all of these subtrees. Consequently, for every such subtree of an \Amc-variation of $p$, at most one distinct successor is introduced over the course of the entire algorithm.  

This restricts the number of successors introduced per positive example $(\Amc, q) \in E^+$ to
$$(\|(\Amc, q)\|+1)^{\|q\|} = 2 ^{\|q\| \cdot \log (\|(\Amc, q)\|+1)}.$$
Summing this over all positive examples and the initial interpretation yields the overall bound \bound{E} on the degree of any element in $\Delta^{\Imc'}$.
\end{proof}

\charUCQs*

\begin{proof}
 ``$1 \Rightarrow 2$''.
 Let \Omc be an \Lmc-ontology that fits $( E^{+}, E^{-})$.
 Then $ \Amc \cup \Omc \not\models q $ for all $ ( \Amc, q ) \in
 E^{-} $. By Lemma~\ref{lem:forest}, we can thus choose for every $e = (\Amc, q) \in E^-$ an $\Lmc$-forest model of $\Amc$ and $\Omc$ as $\Imc_e$ such that $\Imc_e \not\models q$ and degree at most $||\Omc||$. By construction, $\Imc:= \biguplus_{e\in E^-} \Imc_e$ satisfies Condition~(a).
 Note that $\Imc:= \biguplus_{e\in E^-} \Imc_e$
is a well-defined interpretation (i.e., it has a non-empty domain)
due to our assumption that $E^- \neq \emptyset$.

Next, we show that $\Imc$ also satisfies Condition~(b).
For this let $(\Amc,q) \in E^+$ and let $h$ be a homomorphism from \Amc to $\Imc$. Observe that \Imc is a model of \Omc since each
$\Imc_e$ is. This implies that $\Imc_{\Amc,h, \Lmc}$ is not only a model of $\Amc$, but also a model of \Omc. Since
\Omc fits $E^+$, it must be that $\Amc \cup \Omc \models q$, and thus $\Imc_{\Amc,h, \Lmc} \models q$, as required.

It then follows from Lemma~\ref{lem:make-IAh-local} that there also exists an interpretation $\Imc'$ with degree at most \bound{E} that satisfies Conditions~(a) and~(b).

\medskip

``$2 \Rightarrow 1$''. 
Let $(\Imc_e)_{e \in E^-}$ be interpretations such that $\Imc= \biguplus_{e \in
E^-} \Imc_e$ satisfies Conditions~(a) and~(b). 
Without loss of generality we assume that $\Imc$ interprets as non-empty only the concept and
role names in $E$.
We proceed as follows:
first, we convert $\Imc$ into a finite model $\Jmc$ that also satisfies
Conditions~(a) and~(b). Then, we use $\Jmc$ to construct the desired ontology
$\Omc$ that fits $(E^+, E^-)$.
By Lemma~\ref{lem:make-IAh-local}, we can assume that $\Imc$ satisfies
Condition~(b${}^*$).

We now apply Theorem~\ref{thm:finite-counter-model} to each $\Imc_e$ using $n = ||E||$ to
obtain a finite model $\Jmc_e$.
\\[1mm]
{\bf Claim.} $\Jmc = \biguplus_{e \in E^-} \Jmc_e$ satisfies the following
variations of Conditions~(a) and~(b) from Theorem~\ref{thm:charUCQs}:
\begin{enumerate}
    \item[(a$'$)] for each $e = (\Amc, q) \in E^-$: $\Jmc_e$ is a model of $\Amc$ and $\Jmc_e \not \models q$;
    \item[(b$'$)] for all $(\Amc,q) \in E^+$: if $h$ is a homomorphism from \Amc to~$\Jmc$,
        then $\Jmc_{\Amc,h, \Lmc} \models q$.
\end{enumerate}

Condition~(a$'$) follows directly from
Theorem~\ref{thm:finite-counter-model} and the choice of $n$.

For Condition~(b$'$), let $h$ be a homomorphism from $\Amc$ to $\Jmc$. As the
individual parts $\Jmc_e$ of $\Jmc$ are disjoint, consider connected components
of $\Amc$ individually. If $h$ maps a connected component of $\Amc$ to $\Jmc_e$,
then (by viewing the component as a Boolean CQ)
Theorem~\ref{thm:finite-counter-model} implies that there is a homomorphism from
this component to $\Imc_e$. By combining these homomorphisms we obtain a
homomorphism $h'$ from the entirety of $\Amc$ to $\Imc$.

Since $\Imc$ satisfies Condition~(b${}^*$), there thus must be a homomorphism
$g$ from
a CQ $p$ in $q$ to $\Imc_{\Amc, h', \Lmc}|_n$. 
Let $\Imc^*$ be the minimal connected restriction of $\Imc_{\Amc, h', \Lmc}|_n$ 
that contains the image of $g$.
Note that $\Imc^*$ is an $\Lmc$-forest model and that the depth of $\Imc^*$ is at most $n$.
We now argue that there exists a homomorphism from $\Imc^*$ to
$\Jmc_{\Amc, h, \Lmc}$ that is the identity on $\mn{ind}(\Amc)$.

For this, we first verify that the identity is a homomorphism from
$\Imc^*$ restricted to domain $\mn{ind}(\Amc)$ to $\Jmc_{\Amc, h, \Lmc}$. 
For this, consider an $a \in
\mn{ind}(\Amc)$ with $a \in A^{\Imc^*}$. 
By definition of $\Imc_{\Amc, h', \Lmc}$, $h'(a) \in A^{\Imc_e}$ for
some component $\Imc_e$ of $\Imc$. Theorem~\ref{thm:finite-counter-model}
implies that $(\Imc_e, h'(a)) \sim_{\ALCI, n} (\Jmc_e, h(a))$.
Therefore, $h(a) \in A^{\Jmc_e}$ and thus $a \in A^{\Jmc_{\Amc, h, \Lmc}}$.

Now consider a tree-shaped component $\Imc^{*\prime}$ of $\Imc^*$ that is rooted
at some $a \in
\mn{ind}(\Amc)$. Note that $|\Delta^{\Imc^{*\prime}}| \leq n$.
Thus, again using the fact that
$(\Imc_e, h'(a)) \sim_{\ALCI, n} (\Jmc_e, h(a))$, for $e$ such that $h(a) \in
\Delta^{\Jmc_e}$, we can conclude, using Lemma~\ref{lem:bisim-preserves-trees}, that
there is a homomorphism from $\Imc^{*\prime}$ to $\Jmc_e$ that maps $a$ to $h(a)$,
and therefore there is also a homomorphism from $\Imc^{*\prime}$ to $\Jmc_{\Amc, h, \Lmc}$
that maps $\Imc^{*\prime}$ to the $\Lmc$-unraveling of $\Jmc_e$ at $h(a)$ that
is attached to $a$ in $\Jmc_{\Amc, h, \Lmc}$.

Thus, there is a homomorphism from the entirety of $\Imc^*$ to $\Jmc_{\Amc, h, \Lmc}$ that
is the identity on $\mn{ind}(\Amc)$, and by composition of
homomorphisms, $\Jmc_{\Amc, h, \Lmc} \models q$, as required.

\medskip

From $\Jmc$ we now construct an ontology $\Omc$ that fits the positive and
negative examples.
The ontology $\Omc$ uses fresh concept names $V_d$ for each $d \in
\Delta^{\Jmc}$ and is constructed as follows:
\begin{align*}
    \Omc = \{ & \top \sqsubseteq \bigsqcup_{d \in \Delta^{\Jmc}} V_d \\
                & V_d \sqcap V_e \sqsubseteq \bot & & \text{for}\ d, e \in \Delta^{\Jmc} \ \ \text{with}\  d \neq e \\
                & V_d \sqsubseteq A & &\text{for}\ d \in A^{\Jmc} \\
                & V_d \sqsubseteq \neg A & &\text{for}\ d \in \Delta^{\Jmc} \setminus A^{\Jmc} \\
                & V_d \sqsubseteq \exists r. V_e & &\text{for}\ (d, e) \in r^{\Jmc} \\
                & V_d \sqsubseteq \neg \exists r.V_e & &\text{for}\ (d, e) \in (\Delta^{\Jmc} \times \Delta^{\Jmc} )\setminus r^{\Jmc} \}
\end{align*}
where $A$ ranges over concept names that occur in $E$ or have a non-empty
extension in $\Jmc$ and $r$ ranges over role names that occur in $E$ or have a
non-empty extension in
$\Jmc$ as well as over inverse roles if $\Lmc = \ALCI$.

By setting $V_d^{\Jmc} = \{d\}$ for each $d \in \Delta^{\Jmc}$, we extend $\Jmc$ to a model of $\Omc$.

Observe that in any model $\Imc$ of $\Omc$, for every $e \in \Delta^{\Imc}$,
there is exactly one $d \in \Delta^{\Jmc}$ such that $e \in V_d^{\Imc}$.
Thus, the relation
\[
 S = \{ (e, d) \in \Delta^{\Imc} \times \Delta^{\Jmc} \mid e \in V_d^{\Imc} \}
\]
is a function from left to right. 
Furthermore, using the concept inclusions in $\Omc$, one can show that $S$ is an
$\Lmc$-bisimulation and induces (due to its functionality) a homomorphism from
$\Imc$ to $\Jmc$, if $\Imc$ only interprets concept and role names that occur in $\Omc$ as non-empty. Using this property, we now show that $\Omc$ fits the examples.

First consider a negative example $e = (\Amc, q) \in E^-$. The component
$\Jmc_e$ of $\Jmc$ is a model of $\Omc$ and $\Amc$, and from Condition~(a$'$) it
follows that $\Jmc_e \not \models q$, as required.

Now consider a positive example $(\Amc, q) \in E^+$. If there are no common
models of $\Amc$ and $\Omc$, then $\Amc \cup \Omc \models q$ and we are done. 
Otherwise, let $\Imc_{\Amc}$ be a model of $\Amc$ and $\Omc$. Without loss of
generality assume that $\Imc_{\Amc}$ interprets only concept and role names that
do occur in $\Omc$ as non-empty. Note that all concept and role names used in $\Amc$ must also be
used in $\Omc$ by construction.

By the properties of $\Omc$s models, the relation $S$, as defined above, is an
$\Lmc$-bisimulation between $\Imc_{\Amc}$ and $\Jmc$, and $S$ directly induces a
homomorphism from $\Imc_{\Amc}$ to $\Jmc$.
Since $\Imc_{\Amc}$ is also a model of $\Amc$, the identity is a homomorphism
from $\Amc$ to $\Imc_{\Amc}$ and therefore by composition there is a homomorphism $h$ from
$\Amc$ to $\Jmc$.

As $\Jmc$ satisfies Condition~(b$'$), the existence of $h$ implies that
$\Jmc_{\Amc,h, \Lmc} \models q$. Let $g$ thus be a homomorphism from a CQ $p$ in $q$
to $\Jmc_{\Amc,h, \Lmc}$. 

We now aim to show that there is homomorphism $g'$ from $\Jmc_{\Amc, h, \Lmc}$ to
$\Imc_{\Amc}$ that is the identity on $\mn{ind}(\Amc)$. The composition of
$g$ and $g'$ is then a homomorphism that witnesses $\Imc_{\Amc} \models q$
as required.

For this, observe that by construction of $\Jmc_{\Amc, h, \Lmc}$, there is an
$\Lmc$-bisimulation $S'$ between $\Jmc_{\Amc, h, \Lmc}$ and $\Jmc$ with $(a, h(a)) \in S'$.
By composing $S$ and $S'$, we obtain an $\Lmc$-bisimulation $\hat S$ between
$\Jmc_{\Amc, h, \Lmc}$ and $\Imc_{\Amc}$.
Note that since $S$ agrees with $h$, $(a, a) \in \hat S$ for all $a \in
\mn{ind}(\Amc)$.

We now define $g'$ in two steps.
First set $g'(a) = a$ for all $a \in \mn{ind}(\Amc)$.
Note that by definition of $\Jmc_{\Amc, h, \Lmc}$, $g'$ is a
homomorphism from $\Jmc_{\Amc, h, \Lmc}$ restricted to $\mn{ind}(\Amc)$ to
$\Imc_{\Amc}$. 
Then, let
$a \in \mn{ind}(\Amc)$ and consider the $\Lmc$-unraveling of $\Jmc$ that was
attached to $a$ in the definition of $\Jmc_{\Amc, h, \Lmc}$. Since $S \circ S'$
is an $\Lmc$-bisimulation with $(a, a) \in S \circ S'$, we can extend $g'$ to this
entire $\Lmc$-unraveling of $\Jmc$ attached to $a$.

This completes the definition of $g'$.
\end{proof}

\section{Proofs for Section~\ref{subsect:upperboundsCQ}}

\lemIAhcomponent*
\begin{proof}
    We start with an easy observation: if $h$ is a homomorphism from $\Amc$ to
    $\Imc$, then there is a  component $\Bmc$ of $\Amc$ such that $\Imc_{\Bmc,
    h, \Lmc} \models q$. This is due to the fact that Condition~(b) from
    Theorem~\ref{thm:charUCQs} is satisfied and $q$ is connected.
    This does not yet give the lemma since the lemma requires us to find a
    component \Bmc of \Amc that works uniformly for all $h$.

    Assume to the contrary of what we have to show that for every component
    $\Bmc$ of $\Amc$ there is a homomorphism $h_\Bmc$ from $\Amc$ to $\Imc$ such
    that $\Imc_{\Bmc, h_\Bmc, \Lmc} \not \models q$. Let $h^-_\Bmc$ be
    the restriction of $h_\Bmc$ to $\mn{ind}(\Bmc)$ and let $h$ be the union of
    the homomorphisms~$h^-_\Bmc$, for all components
    \Bmc of \Amc. Then $h$ is a homomorphism from 
    $\Amc$ to $\Imc$ and $\Imc_{\Bmc, h, \Lmc} \not \models q$ for all components
    $\Bmc$ of \Amc. This, however, contradicts our initial observation.
\end{proof}

\ucqmosaic*
\begin{proof}
 We first show that if the algorithm returns `fitting exists', then there is
    an interpretation $\Imc$ that satisfies Conditions~(a) and~(b) of
    Theorem~\ref{thm:charUCQs} for $E$.

    If the algorithm returns `fitting exists', then there is a choice of $\Amf$
    and $\mn{ch}$, a base candidate $\Jmc = \bigcup_{e \in E^-} \Imc_e$ for
    $\Amf$ and $\mn{ch}$, as well as for each $e \in E^-$ a set $S_e$ of mosaics.
    We use these to construct the desired interpretation $\Imc$.

    For this, start with $\Imc = \Jmc = \bigcup_{e \in E^-} \Imc_e$,
    and for every $e \in E^-$ consider $\Imc_e$. 
    Since the algorithm returned `fitting exists` there must be, for every
    element $d$ of depth $1$ in $\Imc_e$ a mosaic $\Mmc \in S_e$ that glues to
    $d$. Then extend $\Imc_e$ by gluing $\Mmc$ to $d$, by which we mean taking
    the (non-disjoint) union of $\Imc_e$ and $\Mmc$ in which every domain element
    of $\Mmc$ is prefixed with $d$. If there are multiple candidates for $\Mmc$,
    choose arbitrarily.
    Now continue this extension process for all elements $d$ of depth $i \geq 2$
    in a breadth-first manner. Each such element $d$ must originally occur on
    depth $1$ of some mosaic $\Mmc \in S_e$. By Condition~(*) on the set $S_e$,
    there must be a $\Mmc' \in S_e$ that glues to $d$. Extend $\Imc_e$ by gluing
    $\Mmc'$ to $d$.
    
    Next, we verify that the resulting interpretation $\Imc = \bigcup_{e \in E^-} \Imc_e$
    satisfies Conditions~(a) and~(b) of Theorem~\ref{thm:charUCQs}.

    Let $e = (\Amc, p_1 \lor \cdots \lor p_k) \in E^-$.
    By construction of $\Imc$, $\Imc_e$ is an $\Lmc$-forest model of $\Amc$, and the degree of $\Imc$ is bound by \bound{E_0}.
    Assume for contradiction that $\Imc_e \models q$.
    Then, there must be a CQ $p_i$ in $q$ such that $\Imc_e \models p_i$, which
    implies that $\Imc_e \models \mn{ch}(e, p_i)$, as $\mn{ch}(e, p_i)$ is a
    connected component of $p_i$.
    By definition, $\mn{ch}(e, p_i)$ is of size at most $||E_0||$ and thus must
    match either entirely into the base candidate $\Jmc$ of $\Imc$ or entirely
    into a mosaic $\Mmc$. 
    However, both definitions demand in Condition~1 that the component $\mn{ch}(e, p_i)$ has no
    match.
    Thus, $\Imc_e \not\models q$ and Condition~(a) must hold for $\Imc$.

    Now let $e = (\Amc, q) \in E^+$ and $h$ be a homomorphism from $\Amc$ to $\Imc$.
    The existence of $h$ implies that $e$ is $\Amf$-enabled.
    As $\mn{ch}(e)$ is a component of $\Amc$, $h$ is also a homomorphism from 
    $\mn{ch}(e)$ to $\Imc$. By definition, $\mn{ch}(e)$ has size at most
    $||E_0||$.
    Thus, the range of $h$ restricted to $\mn{ch}(e)$ 
    contains either only elements of depth at most $2 ||E_0||$, or only elements of
    depth between $||E_0|| + k$ and depth $2 ||E_0|| + k$ for some $k \geq 1$.
    In the first case, the range of $h$ restricted to $\mn{ch}(e)$ lies entirely
    within the base candidate $\Jmc$ of $\Imc$.
    By Condition~4 of base candidates, thus $\Jmc_{\mn{ch}(e), h, \Lmc} \models q$ and
    $\Imc_{\mn{ch}(e), h, \Lmc} \models q$, as required.
    In the second case, the range of $h$ restricted to $\mn{ch}(e)$ lies
    entirely within depth $||E_0||$ and $2 ||E_0||$ of some mosaic $\Mmc$.
    Condition~4 of mosaics then implies that $\Mmc_{\mn{ch}(e), h, \Lmc} \models q$,
    and thus $\Imc_{\mn{ch}(e), h, \Lmc} \models q$, as required.

    \medskip
    Now we show that if there is an interpretation $\Imc$ that satisfies
    Conditions~(a) and~(b) of Theorem~\ref{thm:charUCQs}, then the algorithm
    returns `fitting exists'.

    Let $\Imc = \bigcup_{e \in E^-} \Imc_e$ be an interpretation as in
    Theorem~\ref{thm:charUCQs} for $E_0$. Most importantly, $\Imc$ has degree at most \bound{E_0}.
    By Lemma~\ref{lem:make-IAh-local}, we may assume
    that $\Imc$ also satisfies the Condition~(b${}^*$).
    By Lemma~\ref{lem:IAh-component}, for each positive example $e = (\Amc, q)
    \in E^+$, there must be a component $\Bmc$ of $\Amc$ such that if $h$ is a
    homomorphism from $\Amc$ to $\Imc$, then $\Imc_{\Bmc, h, \Lmc} \models q$. Choose
    $\mn{ch}(e)$ for all $e \in E^+$ accordingly.

    Next, consider all components $\Bmc$ of ABoxes $\Amc$ that occur in
    positive examples, and let $\Amf$ be the set of all $\Bmc$ that do not have
    homomorphisms to $\Imc$ 

    Additionally, since $\Imc$ satisfies Condition~(a), $\Imc_e \not \models q$
    for each negative example $e = (\Amc, p_1 \lor \cdots \lor p_k) \in E^-$.
    Thus, for each $p_i$, there must be a choice for $\mn{ch}(e, p_i)$ such that
    $\Imc_e \not \models \mn{ch}(e, p_i)$.

    Let $\Jmc$ now be the restriction of $\Imc$ up to depth $3 ||E_0||$. 
    We show that $\Jmc$ is a base candidate for $\mn{ch}$ and $\Amf$. 
    Condition~1 of base candidates is satisfied by choice of $\mn{ch}(e, p_i)$
    and the fact that $\Imc$ satisfies Condition~(a).
    Condition~2 of base candidates is satisfied by choice of $\Amf$.
    Condition~3 is satisfied by definition of $\Jmc$.
    For Condition~4, consider a positive example $e = (\Amc, q) \in E^+$ that is
    $\Amf$-enabled. If $h$ is a homomorphism from $\mn{ch}(e)$ to $\Imc$ whose
    range contains only element of depth at most $2 ||E_0||$, then by
    choice of $\mn{ch}(e)$, $\Imc_{\mn{ch}(e), h, \Lmc} \models q$. 
    Thus, as \Imc also satisfies Condition (b${}^*$) and the fact that the range of $h$
    contains only elements of depth at most $2 ||E_0||$, $\Jmc_{\mn{ch}(e), h, \Lmc}
    \models q$.

    Now for the mosaics. Let $e \in E^-$.
    For each $d \in \Delta^{\Imc_e}$, let $\Mmc_d$ be the restriction of
    $\Imc_e$ to all elements of the form $dw$ with $|w| < 3 ||E_0||$.

    We argue that for every for $d \in \Delta^{\Imc_e}$ with depth at
    least $1$, $\Mmc_d$ is a mosaic for $\Amf$,
    $\mn{ch}$ and $e = (\Amc, p_1 \lor \cdots \lor p_k) \in E^-$.
    Condition~1 of mosaics is satisfied by choice of $\mn{ch}(e, p_i)$ and the
    fact that $\Imc$ satisfies Condition~(a).
    Condition~2 of mosaics is satisfied by choice of $\Amf$.
    Condition~3 is satisfied by definition of $\Mmc_d$.
    Condition~4 holds by the same argument as in the base candidate case.

    Now consider the set
    \[
    S_e = \{\Mmc_d \mid d \in \Delta^{\Imc_e}, \text{depth of } d \geq 1\}.
    \]
    We claim that $S_e$ is stable, i.e., that for every $\Mmc_d \in S_e$ and
    every successor $c$ of $d$, there is an $\Mmc_d' \in S_e$ such that $\Mmc_d'$
    glues to $c$ in $\Mmc_d$. In fact this is easy to see, as $\Mmc_c \in S_e$.

    Thus, the algorithm returns `fitting exists'.
\end{proof}
\lemvariation*
\begin{proof} 
``$1 \Rightarrow 2$''. Assume that  $ \Imc_{\Amc, h, \Lmc} \models q$ and let $g$ be a strong homomorphism from a CQ $p$ in $q$ to $\Imc_{\Amc, h, \Lmc}$.
    We  use $g$ to identify an \Amc-variation $p'$ by replacing a variable $x$ with 
    $a$ if $g(x) = a\in \mn{ind}(\Amc)$ and identifying variables $x$ and $y$ if $g(x) = g(y) \notin \mn{ind}(\Amc)$. Since $ \Imc_{\Amc, h, \Lmc} \models q$ and $\Imc_{\Amc, h, \Lmc}$ is an  \Lmc-forest model 
    of \Amc, this \Amc-variation is proper. 
    Moreover, $g$ is also a homomorphism from
    $p'$ to $\Imc_{\Amc, h, \Lmc}$.
    From the construction of $\Imc_{\Amc, h, \Lmc}$ it follows that there is also a homomorphism $h'$ from $\Imc_{\Amc, h, \Lmc}$ to $\Imc$ that extends $h$.
    The composition $h' \circ g$ is then a weak homomorphism from $q$
    to \Imc. By construction, $h' \circ g$ satisfies Condition~1 of being compatible
    with $h$. Since every element in $\Imc_{\Amc,h, \Lmc}$ is $\Lmc$-reachable from some individual in $\mn{ind}(\Amc)$ in $\Imc_{\Amc, h, \Lmc}$, $h' \circ g$ also satisfies Condition~2. 
	
	\smallskip

	``$ 2 \Rightarrow 1 $''. 
    Assume that there exists a proper \Amc-variation~$p'$ of $p$ and an accompanying weak homomorphism $g$ from $p'$ to \Imc that is 
    compatible with $h$. 
   To prove that 
    $\Imc_{\Amc,h, \Lmc} \models q$, it clearly suffices to show that there is a  strong homomorphism $h'$ from $p'$ to $\Imc_{\Amc, h, \Lmc}$.
    To assemble $h'$ , we start with
    the identity mapping 
    on the set $I$ of individuals in $p'$. Since $p'$ is proper, this is a strong homomorphism from $p'$ restricted to $I$
    to $\Imc_{\Amc, h, \Lmc}$.
       
    Next, take any maximally connected component
   $q_c$ of \mbox{$p' \setminus \Amc$}. Clearly, $q_c$ contains at most one individual name and $\Imc_{q_c}$ is an $\Lmc$-tree.
   
       
        We   distinguish  two cases: 
    \begin{itemize}
        
        \item $q_c$ contains an individual, say $a$.
        Clearly $g$ is a homomorphism from $q_c$ to $\Imc$
        with $g(a) = h(a)$. As $\Imc_{q_c}$ is an $\Lmc$-tree, then by Lemma~\ref{lem:bisim-preserves-trees}, there is also
        a homomorphism from $q_c$ to the $\Lmc$-unraveling of $\Imc$ at $h(a)$
        that is attached in $\Imc_{\Amc, h, \Lmc}$ to $a$.
        Extend $h'$ accordingly.
        
        
        \item $q_c$ does not contain an individual.

        Then, let $x \in \mn{var}(q_c)$.
        As $g$ is compatible with $h$, there must be an $a \in \mn{ind}(\Amc)$ such that $g(x)$ is $\Lmc$-reachable from $h(a)$ in $\Imc$.
        Thus, as in the previous case, extend $h'$ to map $q_c$ to the $\Lmc$-unraveling of $\Imc$ that is attached to $h(a)$ in $\Imc_{\Amc, h, \Lmc}$.
        
    \end{itemize}
    Overall, we obtain the desired   strong homomorphism $h'$ from $q'$ to $\Imc_{\Amc, h, \Lmc}$.
\end{proof}

\section{Proofs for Section~\ref{sect:lowerbounds}}

\lemfirstlowercorr*
\begin{proof}
``$\Rightarrow$''
If $\Amc \cup \Omc \not \models q$, then by Lemma~\ref{lem:forest} there is an $\ALCI$-forest
 model  $\Imc$ of $\Amc$ and $\Omc$ of degree at most $||\Omc||$ such that $\Imc \not \models q$. 
By construction of $E^+$, $||\Omc|| \leq ||E^+|| \leq \bound{E}$.
 
To show that there is a ontology $\Omc'$ that fits $(E^+, E^-)$, we show that
one can extend $\Imc$ to an interpretation $\Imc'$ that satisfies
Conditions~(a) and~(b) of Theorem~\ref{thm:charUCQs} for the examples $(E^+, E^-)$.

First, set $\mn{Real}^{\Imc'} = \Delta^{\Imc}$. This ensures that $\Imc'$ is a model
of the ABox in the only negative example.
Then, for each $d \in \Delta^{\Imc}$,
introduce fresh individuals $d_1$ and $d_2$ with $(d, d_1), (d, d_2) \in s^{\Imc'}$, $d_1 \in \mn{Choice}^{\Imc'}$ and $d_2 \in \overline{\mn{Choice}}^{\Imc'}$. Additionally,
for all concept names $A$ that occur in $\Omc$ or $q$,
if $d \in A^{\Imc'}$, add $d_1$ to $A^{\Imc'}$ and $d_2$ to $\overline A^{\Imc'}$;
If $d \notin A^{\Imc'}$, add both $d$ and $d_1$ to $\overline{A}^{\Imc'}$ and $d_2$ to $A^{\Imc'}$.

Now $\Imc'$ satisfies Condition~(b) for all examples in $E^+$. In particular,
each element is labeled with exactly one of $A$, $\overline A$ for each concept name $A$ that occurs in $\Omc$ or $q$,
and there is no homomorphism from $\Amc_q$ to $\Imc'$. 
Furthermore, by construction $F^{\Imc'} = \emptyset$, and thus $\Imc'$ satisfies Condition~(a).
Therefore, by Theorem~\ref{thm:charUCQs}, there is an $\ALCI$-ontology that fits $(E^+, E^-)$.

\medskip

``$\Leftarrow$''
If there is an $\ALCI$-ontology $\Omc'$ that fits $(E^+, E^-)$, then by Theorem~\ref{thm:charUCQs}, there is an interpretation $\Imc$ that satisfies 
Conditions~(a) and~(b). We are then interested in the restriction $\Imc'$ of $\Imc$
to the domain  $\Delta^{\Imc'} = \mn{Real}^{\Imc}$.

We argue that $\Imc'$ is a model of $\Amc$ and $\Omc$ with $\Imc' \not \models q$.
First note that since $\Imc'$ satisfies Condition~(a), it is a model of $\Amc$ with
$F^{\Imc'} = \emptyset$. By construction of the positive examples that involve $\mn{Choice}$ and $\overline A$, and the fact that $\Imc$ satisfies Condition~(b), 
it follows that $(\overline A)^{\Imc'} = \Delta^{\Imc'} \setminus A^{\Imc'}$ for all concept names $A$ that occur in $\Omc$ or $q$.
Then, it follows from the positive examples that encode $\Omc$ that $\Imc'$ is a model of $\Omc$. For example, consider a concept inclusion $A \sqsubseteq \neg B \in \Omc$.
Then, there exists an example $(\{A(a)\}, \overline B(a)) \in E^+$.
Let $d \in A^{\Imc'}$. Then, $d \in \mn{Real}^\Imc$ and there is a homomorphism $h$ from $\{A(a)\}$ to $\Imc$ with $h(a) = d$, and thus $d$ must also be labeled with $\overline B(a)$. Hence, $d \notin B^{\Imc'}$ as required. The arguments for the other forms of concept inclusions are similar.

As $\Imc'$ additionally satisfies Condition~(b) for the final positive example $(\Amc_q, \exists x\, F(x))$, and $F^{\Imc'} = \emptyset$, there is no homomorphism from $\Amc_q$ to
$\Imc$, and therefore $\Imc \not \models q$.
\end{proof}


\ALCCQTwoExpHard*

We prove the theorem using a polynomial time reduction from the word problem of exponentially space-bounded alternating Turing machines (ATMs).

\begin{definition}[Alternating Turing Machine]
An \emph{alternating Turing machine} is a tuple $\Mmc = (Q, \Sigma, q_0, \Delta)$, where
\begin{itemize}
    \item $Q$ is a set of states  partitioned into a set of existential states $Q_\exists$, a set of universal states $Q_\forall$, an accepting state $\{q_{\mathrm{acc}}\}$ and 
    a rejecting state $\{q_{\mathrm{rej}}\}$,
    \item $q_0 \in Q_\exists \cup Q_\forall$ is the \emph{starting state},
    \item $\Sigma$ is a finite set of symbols, containing a blank symbol $\Box$, and
    \item $\Delta \subseteq Q \times \Sigma \times Q \times \Sigma \times \{+1 , -1\}$ is the \emph{transition relation}.
\end{itemize} 
\end{definition}
To simplify notation, we define $\Delta(q,a)= \{(q',a',M) \mid (q,a,q',a',M) \in \Delta\}$.

Let $\Mmc = (Q, \Sigma, q_0, \Delta)$ be an ATM. 
A \emph{configuration} of \Mmc is a word $wqw'$ where $w,w'\in\Sigma^*$ and $q\in Q$ and represents the content of the tape $ww'$, head position $|w|+1$, and state $q$ of the Turing machine. \emph{Successor configurations} of a configuration $wqw'$ are then defined using $\Delta$ in the standard way. 
We assume that a configuration $w q w'$ with $q \in Q_\exists \cup Q_{\forall}$ always has at least one successor configuration and that a configuration $w q w' \in \{q_{\mathrm{acc}}, q_{\mathrm{rej}}\}$ has no successor configurations.

A \emph{computation} of an ATM $\Mmc$ on a word $w$ is  sequence of configurations $K_0, K_1, \ldots$ such that $K_0 = q_0 w$ and $K_{i + 1}$ is a successor configuration of $K_i$ for all $i \geq 0$. For our purposes it suffices to consider ATMs that have finite computations on any input.
A configuration $w q w'$ without successor configurations is \emph{accepting} if $q = q_{\mathrm{acc}}$. A configuration $w q w'$ with $q \in Q_\exists$ is \emph{accepting} if at least one successor configuration is accepting. 
A configuration $w q w'$ with $q \in Q_\forall$ is \emph{accepting} if all successor configurations are accepting.
Finally an ATM $\Mmc$ \emph{accept} an input $w \in \Sigma^*$ if the configuration $q_0 w$ is accepting. If \Mmc accepts an input $w$, then this is 
witnessed by a finite computation tree whose nodes are accepting configurations,
the root being the initial configuration $q_0w$ and the leaves being configurations in state $q_{\mathrm{acc}}$ or $q_{\mathrm{rej}}$.

There is an exponentially space-bounded ATM $\Mmc = (Q, \Sigma, q_0, \Delta)$ whose word problem is \TwoExpTime-hard,
and thus, on input $w$ with $|w| = m$, all reached configurations $w' q w''$ satisfy  $|w'w''| < 2^m$~\cite{DBLP:journals/jacm/ChandraKS81}.

From $\Mmc$ and an input $w \in \Sigma^*$, we now construct a collection $E = (E^+, E^-)$ of ABox-CQ examples
such that $E$ admits a fitting $\ALC$-ontology if and only if $\Mmc$ accepts~$w$.
This is based on the characterization of  (\ALC, CQ)-ontology fitting in Theorem~\ref{thm:charUCQs}: every interpretation satisfying Conditions~(a) and~(b) of Theorem~\ref{thm:charUCQs} will represent an accepting computation of \Mmc on $w$  as a tree of configurations.

For readability, we split the construction of $E=(E^+, E^-)$ into three steps:
\begin{enumerate}
    \item We define an \ALC-ontology $\Omc$ such that 
     \ALC-forest models of $\Omc \cup \{I(a)\}$ represent the structure of accepting computation trees of $\Mmc$ on $w$, but do not yet ensure that consecutive configurations are labeled in a matching way;
    
    \item from $\Omc$, we construct ABox-CQ examples $E'$ such that every interpretation
    satisfying Conditions~(a) and~(b) of Theorem~\ref{thm:charUCQs} satisfies the same properties as models of $\Omc \cup \{I(a)\}$;
       
    \item we extend $E'$ with examples that ensure that consecutive configurations are labeled in a matching way.
\end{enumerate}

To understand the construction of $\Omc$, observe that when started in input $w$ with $|w| = m$ ,  \Mmc only visits configurations of length at most $2^m$. We can thus represent configurations of $\Mmc$ at the leaves of binary trees of height $m$.  We include in \Omc concept inclusions that generate such trees and ensure they are well-labeled.

The roots of such configuration trees are marked with the concept name~$R$.
Using the following concept inclusions, 
we enforce, at each node with label $R$, the existence of a binary tree of height $m$ and simultaneously assign each leaf its position on the tape using a binary representation based on the concept names $B_1, \ldots, B_m$, where $B_1$ represents the least significant bit:
\begin{align*}
        R & \sqsubseteq L_0\\
        L_i &\sqsubseteq \exists r. (L_{i+1} \sqcap B_{i+1}) \sqcap \exists r. (L_{i+1} \sqcap \neg B_{i+1})
\shortintertext{for $i$ with $0 \leq i < m$, and}
        L_i \sqcap B_j & \sqsubseteq \forall r. (L_{i+1} \rightarrow B_j) \\
        L_i \sqcap \neg B_j & \sqsubseteq \forall r. (L_{i+1} \rightarrow \neg B_j),
\end{align*}
for all $i, j$ with $0 < j \leq i < m$. In the following, we call the elements that are labeled with $L_m$ (tape) \emph{cells}.

In each cell, we store the current tape symbol and possibly a state $q \in \Qmc$.
To simplify the further construction, we also store the tape symbol and state of the same cell with respect to the preceding configuration.
We therefore include concept inclusions that add two elements to each cell, one labeled with the concept name $M_h$, which we will call \emph{h-memory}, and another one labeled with the concept name $M_p$, which we call \emph{p-memory},
where $h$ refers to \emph{here} and $p$ refers to \emph{preceding}. 

To be able to compare cell positions, at each h- and p-memory element we introduce 
$m$ additional successors, the $i$-th  successor being labeled with the concept name $A_i$ and storing the $i$-th bit of the position of the cell using the concept names $A$ and $\overline{A}$. Each of these successors, in turn, has a single successor to enable the construction of certain gadgets later.
For this, we add the following concept inclusions: 
\begin{align*}
        L_m &\sqsubseteq \exists r. M_h \sqcap \exists r. M_p\\
        L_m &\sqsubseteq \exists r. A_i\\
        M_h &\sqsubseteq \exists r. A_i\\
        L_m \sqcap B_i & \sqsubseteq \forall r. (A_i \rightarrow A) \sqcap \forall r^2. (A_i \rightarrow A)\\
        L_m \sqcap \neg B_i & \sqsubseteq \forall r. (A_i \rightarrow \overline A)\sqcap \forall r^2. (A_i \rightarrow \overline A)\\
        \overline A & \sqsubseteq \neg A \\
        A_i & \sqsubseteq \exists r. \top
\end{align*}
for all $i$ with $1 \leq i \leq m$.

\begin{figure}
    \centering
\resizebox{6cm}{6.5cm}{
\begin{tikzpicture}[>=Stealth, scale=1,node distance=1.0cm and 0.5cm,
  every node/.style={font=\small},
  c/.style={insert path={node[fill=black,circle, inner sep=0.6pt] {}}}]

\draw (-4,3) -- (-2.5, 5.5) -- (-1,3) -- cycle;
\draw (-1,2.7) -- (0.5, 5.2) -- (2,2.7) -- cycle;
\draw[dash dot] (-1.2,4.8) -- (-0.5,6) -- (0.2,4.8) -- cycle;
\draw[->] (-4, 6) -- (-2.5,5.5);
\draw[->] (-2.5,5.5) --( -0.5,6 );
\draw[->](-2.5,5.5) -- ( 0.5,5.2);
\draw[->] ( -0.5,6 ) -- (0.3, 6.3);
\draw[->] ( 0.5,5.2) -- (1.7,5.3);
\draw[->] ( 0.5,5.2 ) -- (1.3, 4.9);
\draw[->] ( -0.5,6) -- (0.1,5.9);
\node[fill=black, circle, inner sep=1pt, label=above:$R$] (r) at (-2.5, 5.5) {};
\node[fill=black, circle, inner sep=1pt, label=above:$R$] (r) at (0.5, 5.2) {};
\node[fill=black, circle, inner sep=0.8pt, label=above:\scriptsize ${ R}$] (r) at (-0.5, 6) {};
\node (conf1) at (-2.45,3.9) {$w'qw''$};
\node (conf2) at (0.55,3.6) {$v'q'v''$};

\draw (-1.5,3)[c] -- (-1.55, 2.7)[c] -- (-1.7, 2.4)[c] -- (-1.7,2.1)[c];
\draw (-1.55, 2.7) -- (-1.4, 2.4)[c] -- (-1.4,2.1)[c];
\draw (-1.5,3) -- (-1.3, 2.7)[c] -- (-1.3, 2.4)[c];
\draw (-1.5,3) -- (-1.1, 2.7)[c] -- (-1.1, 2.4)[c];
\draw (-1.5,3) -- (-1.8, 2.7)[c];

\draw[shift={(1.1 cm,-0.3 cm)}] (-1.5,3)[c] -- (-1.55, 2.7)[c] -- (-1.7, 2.4)[c] -- (-1.7,2.1)[c];
\draw[shift={(1.1 cm,-0.3 cm)}] (-1.55, 2.7) -- (-1.4, 2.4)[c] -- (-1.4,2.1)[c];
\draw[shift={(1.1 cm,-0.3 cm)}] (-1.5,3) -- (-1.3, 2.7)[c] -- (-1.3, 2.4)[c];
\draw[shift={(1.1 cm,-0.3 cm)}] (-1.5,3) -- (-1.1, 2.7)[c] -- (-1.1, 2.4)[c];
\draw[shift={(1.1 cm,-0.3 cm)}] (-1.5,3) -- (-1.8, 2.7)[c];

\draw[shift={(2 cm,-0.3 cm)}] (-1.5,3)[c] -- (-1.55, 2.7)[c] -- (-1.7, 2.4)[c] -- (-1.7,2.1)[c];
\draw[shift={(2 cm,-0.3 cm)}] (-1.55, 2.7) -- (-1.4, 2.4)[c] -- (-1.4,2.1)[c];
\draw[shift={(2 cm,-0.3 cm)}] (-1.5,3) -- (-1.3, 2.7)[c] -- (-1.3, 2.4)[c];
\draw[shift={(2 cm,-0.3 cm)}] (-1.5,3) -- (-1.1, 2.7)[c] -- (-1.1, 2.4)[c];
\draw[shift={(2 cm,-0.3 cm)}] (-1.5,3) -- (-1.8, 2.7)[c];

\draw[shift={(2.9 cm,-0.3 cm)}] (-1.5,3)[c] -- (-1.55, 2.7)[c] -- (-1.7, 2.4)[c] -- (-1.7,2.1)[c];
\draw[shift={(2.9 cm,-0.3 cm)}] (-1.55, 2.7) -- (-1.4, 2.4)[c] -- (-1.4,2.1)[c];
\draw[shift={(2.9 cm,-0.3 cm)}] (-1.5,3) -- (-1.3, 2.7)[c] -- (-1.3, 2.4)[c];
\draw[shift={(2.9 cm,-0.3 cm)}] (-1.5,3) -- (-1.1, 2.7)[c] -- (-1.1, 2.4)[c];
\draw[shift={(2.9 cm,-0.3 cm)}] (-1.5,3) -- (-1.8, 2.7)[c];

\draw[shift={(-1 cm,0 cm)}] (-1.5,3)[c] -- (-1.55, 2.7)[c] -- (-1.7, 2.4)[c] -- (-1.7,2.1)[c];
\draw[shift={(-1 cm,0 cm)}] (-1.55, 2.7) -- (-1.4, 2.4)[c] -- (-1.4,2.1)[c];
\draw[shift={(-1 cm,0 cm)}] (-1.5,3) -- (-1.3, 2.7)[c] -- (-1.3, 2.4)[c];
\draw[shift={(-1 cm,0 cm)}] (-1.5,3) -- (-1.1, 2.7)[c] -- (-1.1, 2.4)[c];
\draw[shift={(-1 cm,0 cm)}] (-1.5,3) -- (-1.8, 2.7)[c];

\draw[shift={(-2 cm,0 cm)}] (-1.5,3)[c] -- (-1.55, 2.7)[c] -- (-1.7, 2.4)[c] -- (-1.7,2.1)[c];
\draw[shift={(-2 cm,0 cm)}] (-1.55, 2.7) -- (-1.4, 2.4)[c] -- (-1.4,2.1)[c];
\draw[shift={(-2 cm,0 cm)}] (-1.5,3) -- (-1.3, 2.7)[c] -- (-1.3, 2.4)[c];
\draw[shift={(-2 cm,0 cm)}] (-1.5,3) -- (-1.1, 2.7)[c] -- (-1.1, 2.4)[c];
\draw[shift={(-2 cm,0 cm)}] (-1.5,3) -- (-1.8, 2.7)[c];

\draw[densely dotted] (-4.5,2.7) -- (2.6,2.7);
\draw[densely dotted] (-4.5,2.4) -- (2.6,2.4);
\draw[densely dotted] (-4.5,2.1) -- (2.6,2.1);

\node[ circle,draw ,dashed, inner sep=11pt] (d) at (-1.47, 2.5) {};

\draw[dashed] (-1.5,1.5) |- (d.south);
\draw[dashed, thick, rounded corners] (-5, -3) rectangle (3, 1.5);

\node[fill=black, circle, inner sep=2pt, label=left:$M_p$] (Ep) at (-3.5, 0) {};

\node[fill=black, circle, inner sep=2pt, label=right:$M_h$] (Eh) at (-2.5, 0) {};
\node[fill=black, circle, inner sep=2pt, label=right:$A_1$] (hA1) [below right=of Eh] {};
\node (dots1) [below =of Eh] {$\dots$};
\node[fill=black, circle, inner sep=2pt, label=left:$A_m$] (hAm) [below left=of Eh] {};
\node[fill=black, circle, inner sep=2pt] (hd1) [below=of hA1] {};
\node (dots2) [below =of dots1] {$\dots$};
\node[fill=black, circle, inner sep=2pt] (hdm) [below=of hAm] {};

\node[fill=black, circle, inner sep=2pt, label=left:$A_m$] (A1) [right= 2cm of Eh] {};
\node (dots3) [right =of A1] {$\dots$};
\node[fill=black, circle, inner sep=2pt, label=right:$A_1$] (Am) [right=of dots3] {};
\node[fill=black, circle, inner sep=2pt] (d1) [below=of A1] {};
\node (dots4) [below =of dots3] {$\dots$};
\node[fill=black, circle, inner sep=2pt] (dm) [below=of Am] {};

\node[fill=black, circle, inner sep=2pt, label=left:$L_m$] (top) at (-2,1) {};
\draw  (-2, 1.5) -- (top.north);
\draw[->] (top) -- node[above left] {} (Ep);
\draw[->] (top) -- node[above right] {} (Eh);
\draw[->] (top) -- node[above left] {} (A1);
\draw[->] (top) -- node[above right] {} (Am);
\draw[->] (Eh) -- node[above left] {} (hAm);
\draw[->] (Eh) -- node[above right] {} (hA1);
\draw[->] (hA1) -- node[above left] {} (hd1);
\draw[->] (hAm) -- node[above right] {} (hdm);
\draw[->] (A1) -- node[above left] {} (d1);
\draw[->] (Am) -- node[above right] {} (dm);
\end{tikzpicture}}

    \caption{Structure of $\ALC$-forest models of \Omc.}
    \label{fig:computationtree}
\end{figure}

Next, we add concept inclusions which guarantee that the p-memory and h-memory of every cell are appropriately labeled
with tape symbols and states. For this, we use concept names for each $a \in \Sigma$
and $q \in Q \cup \{ \mathrm{nil} \} = Q^+$, where $\mathrm{nil}$ represents that 
the head of the ATM is currently in a different cell. For simplicity we directly use the elements of $\Sigma$ and $Q$ as concept names.
\begin{align*}
    M_h \sqcup M_p & \sqsubseteq \bigsqcup_{q \in Q^+} \bigl( q \sqcap  \bigsqcap_{q' \in Q^+ \setminus \{ q\} } \neg q' \bigr) \\
    M_h \sqcup M_p & \sqsubseteq \bigsqcup_{a \in \Sigma} \bigl( a \sqcap  \bigsqcap_{a' \in \Sigma \setminus \{a\} } \neg a' \bigr).
\end{align*}
For easier comparison, we also label each combination of symbols $a \in \Sigma$ and $q \in Q^+$ with concept name $Z_{a, q}$:
\[
M_h \sqcup M_p \sqsubseteq (a\sqcap q) \leftrightarrow Z_{a,q}
\]

We  store the current and preceding positions of the head of $\Mmc$ in the roots of configuration trees. For this we use a binary encoding via the concept names
$Q_j$ and $Q_j'$, respectively, for $1 \leq j \leq m$. 
By propagating these concept names to all cells in this configuration, we are able to ensure that only the appropriate cell is labeled with a state.
For this, we add the following concept inclusions: 
\begin{align*}
        L_i \sqcap Q_j & \sqsubseteq \forall r. (L_{i+1} \rightarrow Q_j)\\
        L_i \sqcap \neg Q_j & \sqsubseteq \forall r. (L_{i+1} \rightarrow \neg Q_j)\\
        L_i \sqcap Q_j' & \sqsubseteq \forall r. (L_{i+1} \rightarrow Q_j')\\
        L_i \sqcap \neg Q_j' & \sqsubseteq \forall r. (L_{i+1} \rightarrow \neg Q_j')\\
    L_m \sqcap C_= &\sqsubseteq \forall r. (M_h  \rightarrow \bigsqcup_{q \in Q} q) \\
    M_h \sqcap \neg C_= &\sqsubseteq \forall r. (M_h \rightarrow \mathrm{nil}),
\end{align*}
for $i, j$ with $0 \leq i < m$ and $1 \leq j \leq m$, $C_= := \bigsqcap_{i=1}^m (Q_i \leftrightarrow B_i)$, and $C'_= := \bigsqcap_{i=1}^m (Q'_i \leftrightarrow B_i)$.
Further, we need to ensure that configurations have successors that locally comply with the transition relation $\Delta$ and respect the structure of accepting ATM computation trees. Hence, we demand the existence of one successor configuration tree if the current configuration is in an existential state, and the existence of all required successor configurations  if the current configurations is in a universal state. 
For this, we store the chosen transition $(q, a, q', a', M) \in \Delta$ in 
the root of the successor configuration using a concept name $T_{a',q',M}$.
Hence, for $q \in Q_\exists$ and $a \in \Sigma$, we include
\begin{align*}
    R \sqcap \exists r^{m+1}(M_h \sqcap q \sqcap a)& \sqsubseteq \quad \bigsqcup_{ \mathclap{(q',a',M)\in \Delta(q,a)}} \  \exists r .(R \sqcap T_{q',a',M}).\\
\intertext{And for $q \in Q_\forall$ and $a \in \Sigma$, we include }
        R \sqcap \exists r^{m+1}(M_h \sqcap q \sqcap a)& \sqsubseteq \quad \bigsqcap_{\ \mathclap{(q',a',M)\in \Delta(q,a)}} \  \exists r .(R \sqcap T_{q',a',M}). 
\end{align*}

We propagate the old head position to the next configuration while changing the concept names from $Q_i$ to $Q_i'$. This enables us to distinguish the old from the new head position. For this, we include the following concept inclusions:
\begin{align*}
        Q_i \sqcap R &\sqsubseteq \forall r. (R \rightarrow Q'_i) \quad \text{ for all } 1 \leq i \leq m\\
        \neg Q_i '\sqcap R& \sqsubseteq \forall r. (R \rightarrow \neg Q'_i) \quad \text{ for all } 1 \leq i \leq m
\end{align*}

The new head position is then calculated depending on the respective transition node $T_{a',a',M}$ for $M \in \{+1, -1\}$. This is achieved by the following concept inclusions for $M = +1$:
\begin{align*}
        R \sqcap T_{q,a,+1} &\sqsubseteq Q_1' \leftrightarrow \neg Q_1 \\
        R \sqcap T_{q,a,+1} \sqcap (Q_i' \sqcap \neg Q_i) & \sqsubseteq (Q_{i+1}' \leftrightarrow \neg Q_{i+1})\\
        R \sqcap T_{q,a,+1} \sqcap (\neg Q_i' \sqcup  Q_i) & \sqsubseteq (Q_{i+1}' \leftrightarrow  Q_{i+1})\\
\intertext{for all $i$ with $1 \leq i < m$, and with the following inclusions for $M = -1$:}
        R \sqcap T_{q,a,-1} &\sqsubseteq Q_1' \leftrightarrow \neg Q_1 \\
        R \sqcap T_{q,a,-1}\sqcap (\neg Q_i' \sqcap Q_i) & \sqsubseteq (Q_{i+1}' \leftrightarrow \neg Q_{i+1})\\
        R \sqcap T_{q,a,-1} \sqcap ( Q_i' \sqcup \neg  Q_i) & \sqsubseteq (Q_{i+1}' \leftrightarrow  Q_{i+1}), 
\end{align*}
for all $1 \leq i < m$.

Using the old and new head positions as well as the transition concept $T_{a',q',M}$, we
can enforce that the cells of this configuration correctly store the result of the respective transition.
Thus, the cell at the old head position should store the symbol $a'$, the  cell at the new head position should be labeled with $q'$, and the symbol stored in all other cells should remain the same. The
latter can be encured by enforcing that for each cell, the symbol stored in the p-memory is the same as the one stored in the h-memory. We assume here that the p-memory is set up correctly, which is
not yet guaranteed but will be ensured
later on in Step~3 of the reduction.

We include the following concept inclusions for all $T_{q', a', M}$ and $a \in \Sigma$
\begin{align*}
        T_{q',a',M} & \sqsubseteq \forall r^m. (L_m \rightarrow T_{q',a',M}) \\
        L_m \sqcap T_{q',a',M} \sqcap C'_= & \sqsubseteq \forall r. (M_h \rightarrow a')\\
        L_m \sqcap T_{q',a',M} \sqcap C_= & \sqsubseteq \forall r. (M_h \rightarrow q')\\
        L_m \sqcap \exists r. (M_p \sqcap a \sqcap \mathrm{nil}) &\sqsubseteq \forall r. (M_h \rightarrow a),
\end{align*}
where $C'_= := \bigsqcap_{i=1}^m (Q'_i \leftrightarrow B_i)$.

To ensure that the represented run is accepting, we demand that every configuration without successor states must be in the accepting state:
\[
R \sqcap\forall r. \neg R \sqsubseteq \bigsqcup_{a \in \Sigma, M \in \{-1, +1\} }T_{a,q_{\mathrm{acc}},M}
\]

The final concept inclusions are used to encode the initial configuration. For this, we will take care that the root of the computation tree is labeled with the
concept name $I$. We then put:
\begin{align*}
        I & \sqsubseteq \exists r. R\\
        I &\sqsubseteq \forall r^{m+1}. (\mn{pos}_B= i \rightarrow \forall r.(M_h \rightarrow a_i))  \\
        \intertext{for all $i$ with $0 \leq i < m$, and}
        I &\sqsubseteq \forall r^{m+1}. (\mn{pos}_B= 0 \rightarrow \forall r.(M_h \rightarrow q_0))\\
        I &\sqsubseteq \forall r^{m+1}. (\mn{pos}_B\geq m \rightarrow \forall r.(M_h \rightarrow \Box))\\
        I &\sqsubseteq \forall r. \neg Q_i,\ \text{for all $i$ with $1 \leq i \leq m$,}
\end{align*}
where $\mn{pos}_B = 0$, $\mn{pos}_B = i$ and $\mn{pos}_B \geq m$ are abbreviations for Boolean combinations of the concept
names $B_i$ which make sure that the position is as required.

This completes the construction of $\Omc$. The models of $\Omc$ and the ABox $\{I(a)\}$
are now almost accepting runs of $\Mmc$ on $w$. What is missing is the following property:
\begin{center}
($*$) If two cells $t$ and $t'$ of succeeding configurations have the same position, then
the h-memory of $t$ must have the same label $Z_{a, q}$ as the p-memory of $t'$.
\end{center}
We next convert the constructed ontology $\Omc$ to a collection of labeled ABox-CQ examples, and then add certain further ABox-CQ examples that
enforce ($*$).

\smallskip\noindent
\textbf{Claim~1.} There is a collection $E'$ of ABox-CQ examples of size polynomial in $||\Omc||$ such that
\begin{enumerate}
    \item every $\ALC$-forest model of $\Omc$ and $\{I(a)\}$ can be extended to a model that satisfies Conditions~(a) and~(b) of Theorem~\ref{thm:charUCQs} for $E'$;
    \item every model that satisfies Conditions~(a) and~(b) of Theorem~\ref{thm:charUCQs} for $E'$ is a model of $\Omc$ and $\{I(a)\}$.
\end{enumerate}

Claim~1 in fact can be shown by using exactly the same examples as in the proof of Theorem~\ref{thm:alcicq-2exphard}.

\medskip

Now we extend $E'$ with additional ABox-CQ examples that enforce ($*$) to obtain the final collection of examples $E$.
%
To get a first intuition for the following  construction, we ask the reader to recall the example of Theorem~\ref{thm:alcicq-2exphard} in Figure~\ref{fig:cruicalexample} together with the fact that any strong homomorphism from $q^*$ to 
$\Imc_{\Amc^*, h, \Lmc}$ maps $x$ to $c$ or $d$ for every model \Imc. Intuitively, this gadget uses the idea that we may force existential variables to act like answer variables in the sense that they
can only bind to the individuals in $\ALC$-forest models, but not to elements in the trees of those models. 
It is not hard to see that 
all variables in a CQ $q$ that have a directed path to an individual in $q$ are forced to act like such an answer variable. Note that this statement is only true for \ALC, where the edges in the trees
of a forest model must be directed away from the ABox.

Building upon these fundamental ideas, we now construct the examples that ensure the contraposition of ($*$). We use an ABox that matches whenever the $Z_{a,q}$ labels of two cells of consecutive configurations do not agree and then use the corresponding query to ensure that their respective positions do not coincide. The ABox consists of multiple components, only connected through a few designated individuals and each of these components matches into a different bit of the positional memory. 
The components and the query are carefully crafted in such way that all variables act like answer variables and must collectively map into exactly one of these components. Then, using the concepts $A$ and $\overline A$, this choice presents a certificate for the disagreement of both positions. 

For all concept names $Z_{a,q}, Z_{a',q'}$ with $Z_{a, q} \neq Z_{a', q'}$, we define the following positive example $(\Amc^*, q^*)$ with $\Amc^* = \Amc_1 \cup \ldots \cup \Amc_m$ and
\begin{align*}
\Amc_i =\{&&&  r(c^i_0, c^i_1), \ldots , r(c^i_{m-1}, c^i_{m}), r(c^i_{m}, c), \\
    &&& r(\hat{c}^i_0, \hat{c}^i_1), \ldots , r(\hat{c}^i_{m-1}, \hat{c}^i_{m}), r(\hat{c}^i_{m}, \hat{c}),\\
    &&& r(c_0^i,a_0^i), r(a_0^i, a_1^i), \ldots, r(a_{m-1}^i, a_{m}^i), r(a_{m}^i, c^i),\\
    &&& r(\hat{c}_0^i,{b}_0^i), r({b}_0^i, {b}_1^i), \ldots, r({b}_{m-1}^i, {b}_{m}^i), r({b}_{m}^i, c^i),\\
    &&& r(\hat{c}_0^i,\hat{a}_0^i), r(\hat{a}_0^i, \hat{a}_1^i), \ldots, r(\hat{a}_{m-1}^i, \hat{a}_{m}^i), r(\hat{a}_{m}^i, \hat{c}^i), \\
    &&& r(c_0^i,\hat{b}_0^i), r(\hat{b}_0^i, \hat{b}_1^i), \ldots, r(\hat{b}_{m-1}^i, \hat{b}_{m}^i), r(\hat{b}_{m}^i, \hat{c}^i),\\
    &&& A_i({c}^i),r(c, d_1^i), r(d_1^i, d_2^i) ,r(c^i, d_2^i),\\
    &&& A_i(\hat{c}^i),r(\hat{c}, \hat{d}_1^i), r(\hat{d}_1^i, \hat{d}_2^i),r(\hat{c}^i, \hat{d}_2^i), \\
    &&& R(c_0^i), R(\hat{c}_1^i),\\
    &&& r(\hat{c}, \hat{p}),  Z_{a',q'}(\hat{p}), M_p(\hat{p}),\\
    &&&  Z_{a,q}(c), M_h(c)\}  \qquad (\text{see Figure~\ref{fig:aboxred}})
\end{align*}

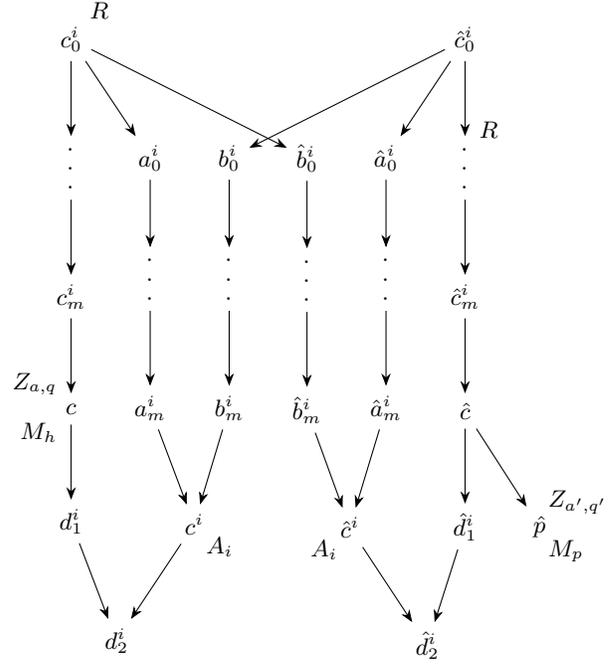
\begin{figure}
    \centering
    \hspace*{-0.5cm}
     \begin{tikzpicture}[
  node distance=1.0cm and 0.5cm,
  every node/.style={font=\small,  minimum size=0.4cm},
  ->, >=Stealth
]
\node (ci0) at (0, 0) {$c^i_0$};
\node (R) [above right=-0.2cm of ci0] {$R$};
\node (ai1) [below right=of ci0] {$a^i_0$};
\node (bhi1) [right =of ai1] {${b}^i_0$};
\node (bi1) [right =of bhi1] {$\hat{b}^i_0$};
\node (ahi1) [right =of bi1] {$\hat{a}^i_0$};
\node (chi0) [above right=of ahi1] {$\hat{c}^i_0$};

\node (cidots) [below=of ci0] {\rotatebox{90}{$\cdot\;\cdot\;\cdot$}};
\node (chidots) [below=of chi0] {\rotatebox{90}{$\cdot\;\cdot\;\cdot$}};
\node (R2) [above right=-0.2cm of chidots] {$R$};
\node (aidots) [below=0.9cm of ai1] {\rotatebox{90}{$\cdot\;\cdot\;\cdot$}};
\node (bhidots) [below=0.9cm of bhi1] {\rotatebox{90}{$\cdot\;\cdot\;\cdot$}};
\node (bidots) [below=0.9cm of bi1] {\rotatebox{90}{$\cdot\;\cdot\;\cdot$}};
\node (ahidots) [below=0.9cm of ahi1] {\rotatebox{90}{$\cdot\;\cdot\;\cdot$}};

\node (cim) [below=of cidots] {${c}^i_m$};
\node (chim) [below=of chidots] {$\hat{c}^i_m$};
\node (aim) [below=of aidots] {${a}^i_m$};
\node (ahim) [below=of ahidots] {$\hat{a}^i_m$};
\node (bim) [below=of bidots] {$\hat{b}^i_m$};
\node (bhim) [below=of bhidots] {${b}^i_m$};

\node (c) [below=of cim] {${c}$};
\node (ch) [below=of chim] {$\hat{c}$};

\node (ci) [below right=1cm and 0.0cm of aim] {${c}^i$};
\node (chi) [below right=1cm and 0.0cm of bim] {$\hat{c}^i$};

\node (AI) [below right=-0.3cm of ci] {$A_i$};

\node (AHI) [below left=-0.3cm of chi] {$A_i$};

\node (d1i) [below =of c] {$d_1^i$};

\node (d1hi) [below =of ch] {$\hat{d}_1^i$};

\node (d2i) [below left=of ci] {$d_2^i$};

\node (d2hi) [below right=of chi] {$\hat{d}_2^i$};

\node (p) [right=of d1hi] {$\hat{p}$};

\node (L) [above left=-0.2cm of c] {$Z_{a,q}$};
\node (Lh) [above right=-0.3cm of p] {$Z_{a',q'}$};

\node (M) [below left=-0.2cm of c] {$M_h$};
\node (Mh) [below right=-0.3cm of p] {$M_p$};

\draw (ch) -- (p);
\draw (ci0) -- (cidots);
\draw (chi0) -- (chidots);
\draw (ai1) -- (aidots);
\draw (ahi1) -- (ahidots);
\draw (bi1) -- (bidots);
\draw (bhi1) -- (bhidots);
\draw (ci0) -- (ai1);
\draw (ci0) -- (bi1);
\draw (chi0) -- (ahi1);
\draw (chi0) -- (bhi1);
\draw (cidots) -- (cim);
\draw (chidots) -- (chim);
\draw (aidots) -- (aim);
\draw (bidots) -- (bim);
\draw (ahidots) -- (ahim);
\draw (bhidots) -- (bhim);
\draw (cim) -- (c);
\draw (chim) -- (ch);
\draw (aim) -- (ci);
\draw (ahim) -- (chi);
\draw (bim) -- (chi);
\draw (bhim) -- (ci);
\draw (c) -- (d1i);
\draw (ci) -- (d2i);
\draw (ch) -- (d1hi);
\draw (chi) -- (d2hi);
\draw (d1i) -- (d2i);
\draw (d1hi) -- (d2hi);

\end{tikzpicture}
    \caption{The ABox $\Amc_i \subseteq \Amc^*$, where arrows depict $r$-roles.}
    \label{fig:aboxred}
\end{figure}
\begin{align*}
q^* = \{&&& r(z_0, z_1), \ldots, r(z_{m-1}, z_{m}), r(z_{m}, c),\\
    &&& r(\hat{z}_0, \hat{z}_1), \ldots, r(\hat{z}_{m-1}, \hat{z}_{m}), r(\hat{z}_{m}, \hat{c}),\\
    &&& r(z_0, x_0), r(x_0, x_1), \ldots, r(x_{m-1}, x_{m}), r(x_{m}, z),\\
    &&& r(\hat{z}_0, \hat{y}_0), r(\hat{y}_0, \hat{y}_1), \ldots, r(\hat{y}_{m-1}, \hat{y}_{m}), r(\hat{y}_{m}, z),\\
    &&& r(\hat{z}_0, \hat{x}_0), r(\hat{x}_0, \hat{x}_1), \ldots, r(\hat{x}_{m-1}, \hat{x}_{m}), r(\hat{x}_{m}, \hat{z}),\\
    &&& r(z_0, y_0), r(y_0, y_1), \ldots, r(y_{m-1}, y_{m}), r(y_{m}, \hat{z}),\\
    &&& A(z), \overline A (\hat{z})\} \qquad (\text{ see Figure~\ref{fig:queryred}})
\end{align*}

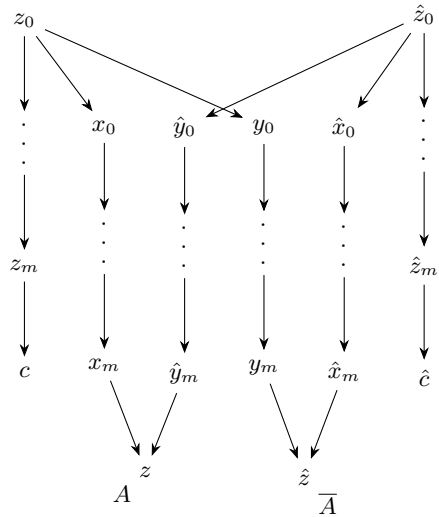
\begin{figure}
    \centering
    \hspace*{-0.3cm}
     \begin{tikzpicture}[
  node distance=1.0cm and 0.5cm,
  every node/.style={font=\small,  minimum size=0.4cm},
  ->, >=Stealth
]
\node (ci0) at (0, 0) {$z_0$};
\node (ai1) [below right=of ci0] {$x_0$};
\node (bhi1) [right =of ai1] {$\hat{y}_0$};
\node (bi1) [right =of bhi1] {${y}_0$};
\node (ahi1) [right =of bi1] {$\hat{x}_0$};
\node (chi0) [above right=of ahi1] {$\hat{z}_0$};

\node (cidots) [below=of ci0] {\rotatebox{90}{$\cdot\;\cdot\;\cdot$}};
\node (chidots) [below=of chi0] {\rotatebox{90}{$\cdot\;\cdot\;\cdot$}};

\node (aidots) [below=0.9cm of ai1] {\rotatebox{90}{$\cdot\;\cdot\;\cdot$}};
\node (bhidots) [below=0.9cm of bhi1] {\rotatebox{90}{$\cdot\;\cdot\;\cdot$}};
\node (bidots) [below=0.9cm of bi1] {\rotatebox{90}{$\cdot\;\cdot\;\cdot$}};
\node (ahidots) [below=0.9cm of ahi1] {\rotatebox{90}{$\cdot\;\cdot\;\cdot$}};

\node (cim) [below=of cidots] {${z}_m$};
\node (chim) [below=of chidots] {$\hat{z}_m$};
\node (aim) [below=of aidots] {${x}_m$};
\node (ahim) [below=of ahidots] {$\hat{x}_m$};
\node (bim) [below=of bidots] {${y}_m$};
\node (bhim) [below=of bhidots] {$\hat{y}_m$};

\node (c) [below=of cim] {${c}$};
\node (ch) [below=of chim] {$\hat{c}$};

\node (ci) [below right=1cm and 0.0cm of aim] {${z}$};
\node (chi) [below right=1cm and 0.0cm of bim] {$\hat{z}$};

\node (AI) [below left=-0.2cm of ci] {$A$};

\node (AHI) [below right=-0.2cm of chi] {$\overline A$};

\draw (ci0) -- (cidots);
\draw (chi0) -- (chidots);
\draw (ai1) -- (aidots);
\draw (ahi1) -- (ahidots);
\draw (bi1) -- (bidots);
\draw (bhi1) -- (bhidots);
\draw (ci0) -- (ai1);
\draw (ci0) -- (bi1);
\draw (chi0) -- (ahi1);
\draw (chi0) -- (bhi1);
\draw (cidots) -- (cim);
\draw (chidots) -- (chim);
\draw (aidots) -- (aim);
\draw (bidots) -- (bim);
\draw (ahidots) -- (ahim);
\draw (bhidots) -- (bhim);
\draw (cim) -- (c);
\draw (chim) -- (ch);
\draw (aim) -- (ci);
\draw (ahim) -- (chi);
\draw (bim) -- (chi);
\draw (bhim) -- (ci);

\end{tikzpicture}
    \caption{The query $q^*$, where arrows  depict $r$-roles.}
    \label{fig:queryred}
\end{figure}

Observe that the two ABoxes $\Amc_i, \Amc_j$ with $i \neq j$ have only three individuals in common, namely $c$, $\hat{c}$ and $\hat{p}$. Adding each of these positive examples to $E'$, we obtain $E$.

We are now able to state the intended behaviour of $(\Amc^*,q^*)$ in a precise way.\\
\smallskip \noindent
\textbf{Claim~2.} Let $\Imc$ be an $\ALC$-forest model of $\Amc^*$. If $h$ is a (strong) homomorphism from $q^*$ to
$\Imc$, then 
\begin{enumerate}
    \item $h(x) \in \mn{ind}(\Amc^*)$ for all $x \in \mn{var}(q^*)$,
    \item $\{h(z), h(\hat z)\}  \subseteq \{ c^i, \hat c^i \}$ for some $i$ with $1 \leq i \leq m$.
\end{enumerate}

\smallskip \noindent
\textbf{Proof of Claim~2.}
We show both points at the same time.
By definition of strong homomorphisms,
$h(c) = c \in \mn{ind}(\Amc^*)$ and $h(\hat c) = \hat c \in \mn{ind}(\Amc^*)$.
As $\ALC$-forest models do not contain predecessors of individuals in $\Amc^*$ that do not occur in $\Amc^*$,
$h(z_k) = c^i_k$ and $h(\hat z_k) = \hat c^j_k$ for all $k$ with $0 \leq k \leq m$
and for some $i, j$ with $1 \leq i, j \leq m$.
Observe now that $q^*$ contains a directed $r$-path of length $m + 2$ from $z_0$ to $z$
and a directed $r$-path of length $m + 2$ from $\hat z_0$ to $z$.
Again due to the structure of $\ALC$-forest models of $\Amc^*$, $c^i_0$ and $\hat c^j_0$ only have common $r$-successors
at depth $m + 2$ for $i = j$, and these are $c^i$ and $\hat c^i$. Thus, $i = j$, and $h(x_k) = a^i_k$ and $h(\hat y_k) = b^i_k$, or $h(x_k) = \hat b^i_0$ and $h(\hat y_k) = \hat a^i_k$, for all $k$ with $0 \leq k \leq m$, and therefore $h(z) \in \{c^i, \hat c^i\}$.
Thus the same argument applies to $\hat z$ and the variables $y_k$ and $\hat x_k$.
This completes the proof of the claim.

\smallskip


Now consider the intended $\ALC$-forest model $\Imc$ of $\Omc$ depicted in Figure~\ref{fig:computationtree}, and a homomorphism $h$ from $\Amc^*$ to $\Imc$. By the tree-shape of $\Imc$, 
$h(c^i_0) = h(\hat c^i_0)$ and $h(c^i_0) = h(c^j_0)$ for all $i, j$ with $1 \leq i, j \leq m$,
and $h(c^i_0)$ must be the root (labeled with $R$) of a configuration $K$.
From the lengths of $r$-paths,
it follows that $h(c)$ is a h-memory node of some cell in this configuration,
and from the paths involving $d_1^i$ and $d^i_2$, it follows that $h(c) = h(a^i_m) = h(b^i_m)$,
and $h(c^i)$ is the element that stores the $i$-th bit of the cell position.

Again the lengths of $r$-paths imply that $h(\hat p)$ is the p-memory node of
some cell in a \emph{successor} configuration $K'$ of $K$. And by the same argument above it follows that
$h(\hat c^i)$ is the element that stores the $i$-th bit of this cell position.

Together with Claim~2, these observations suffice to show that this reduction is correct:
\begin{enumerate}
    \item \Mmc accepts $w$
    \item There exists an $\ALC$-ontology that fits $E$ 
\end{enumerate}
\begin{proof}
``$1 \Rightarrow 2$''
Assume there is an accepting computation of $\Mmc$ on $w$, and
let $\Imc'$ be the corresponding model of $\Omc$ and $\{I(a)\}$.
By construction of the examples $E'$, $\Imc'$ can then, as in the proof of Theorem~\ref{thm:alcicq-2exphard}, be extended to an interpretation
$\Imc$ that satisfies Conditions~(a) and~(b) of Theorem~\ref{thm:charUCQs} for the examples $E'$.
Now we verify that $\Imc$ also satisfies Condition~(b) for the positive examples we added to construct $E$.
Let $(\Amc^*, q^*)$ be a positive example as constructed above for some $Z_{a, q}, Z_{a', q'}$ with $Z_{a, q} \neq Z_{a', q'}$, and let $h$ be a homomorphism from $\Amc^*$ to $\Imc$. By the above observations about $\Amc^*$, it follows that
$h(c)$ is a cell labeled with $Z_{a, q}$, and for all $i$, $h(c^i)$ is the element that stores the $i$-th bit of the position of this cell. Furthermore, $h(\hat c)$ is a cell in a successor configuration, where the p-memory is labeled with $Z_{a', q'}$
and for all $i$, $h(\hat c^i)$ is the element that stores the $i$-th bit of the position of this cell.

As $\Imc$ represents a well-formed computation tree, the p-memory of a cell is always labeled with the same $Z_{a, q}$ as the $h$-memory of the same cell in the predecessor configuration as in ($*$),
it follows that $h(c)$ and $h(\hat c)$ must be in cells with different cell positions. Thus,
there is an $i$ such that $h(c^i) \in A^{\Imc}$ and $h(\hat c^i) \in \overline A^{\Imc}$, or vice versa.
This in turn implies $c^i \in A^{\Imc_{\Amc^*, h, \ALC}}$ and $\hat c^i \in \overline A^{\Imc_{\Amc^*, h, \ALC}}$, or vice versa.
Thus, we can directly construct a (strong) homomorphism $g$ from $q^*$ to $\Imc_{\Amc^*, h, \ALC}$ with $g(z) = c^i$ and $g(\hat z) = \hat c^i$, or vice versa, witnessing $\Imc_{\Amc^*, h, \ALC} \models q^*$, as required.

\smallskip
``$2 \Rightarrow 1$''
By Theorem~\ref{thm:charUCQs} there is an $\ALC$-forest model $\Imc$ that satisfies Conditions~(a) and~(b) for the examples $E$.
Since $\Imc$ also satisfies Conditions~(a) and~(b) for the examples $E'$, $\Imc$ is a model of $\Omc$ and $\{I(a)\}$.
Thus, $\Imc$ represents an accepting computation tree of $\Mmc$ on $w$ if it satisfies ($*$).
We will show the contraposition of ($*$).
Thus, let $t$ and $t'$ be cells of succeeding configurations. If the h-memory of $t$ is labeled with $Z_{a,q}$ 
and the p-memory of $t'$ is labeled with $Z_{a', q'}$ and $Z_{a, q} \neq Z_{a', q'}$, then
there is a positive example $(\Amc^*, q^*)$ constructed for $Z_{a, q}$ and $Z_{a', q'}$ in $E$ and
there is a homomorphism $h$ from $\Amc^*$ to $\Imc$ with $h(c)$ lies in $t$ and $h(\hat c)$ lies in $t'$.
As $\Imc$ satisfies Condition~(b) for the example $(\Amc^*, q^*)$, it follows that $\Imc_{\Amc^*, h, \ALC} \models q^*$.
Thus, there is a (strong) homomorphism $g$ from $q^*$ to $\Imc_{\Amc^*, h, \ALC}$. As $\Imc_{\Amc^*, h, \ALC}$ is an $\ALC$-forest model
of $\Amc^*$, $g(z), g(\hat z) \subseteq \{ c^i, \hat c^i\}$ for some $i$ by Claim~2.
Since $ \overline A \sqsubseteq \neg A \in \Omc$, and $\Imc$ is a model of $\Omc$, $g(z) = c^i$ and $g(\hat z) ) = \hat c^i$ 
or vice versa. Thus $c^i \in A^{\Imc_{\Amc^*, h, \ALC}}$ and $\hat c^i \in \overline A^{\Imc_{\Amc^*, h, \ALC}}$, or vice versa.
In both cases, the position of $t$ and the position of $t'$ must differ at bit $i$. Hence, $\Imc$ satisfies ($*$) and represents a accepting computation tree of $\Mmc$ on~$w$.
\end{proof}

\section{Proofs for Section~\ref{sect:concl}}

Let $\Imc_1$ and $\Imc_2$ be interpretations. We write $(\Imc_1,d_1) \sim_\ALCQ (\Imc_2,d_2)$ for $d_1 \in \Delta^{\Imc_1}$ and $d_2 \in \Delta^{\Imc_2}$ if there exists a \emph{counting bisimulation} $R \subseteq \Delta^{\Imc_1} \times \Delta^{\Imc_2}$ with  $(d_1,d_2) \in R$. For formal definitions and the following proposition, we refer to \cite{DBLP:conf/ijcai/LutzPW11}.

\begin{proposition}\label{prop:alcqbisim}
    Let $\Imc_1$ and $\Imc_2$ be interpretations, and \Omc an \ALCQ-ontology.
    If $\Imc_2 \models \Omc$ and for each $d_1\in \Delta^{\Imc_1} $ there exists $d_2 \in \Delta^{\Imc_2} $ such that $(\Imc_1,d_1) \sim_\ALCQ (\Imc_2,d_2)$, then $\Imc_1$ is a model of \Omc.
\end{proposition}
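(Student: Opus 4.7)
The plan is to derive the proposition from the standard invariance of $\ALCQ$-concepts under counting bisimulations. First I would establish the following invariance lemma: for every $\ALCQ$-concept $C$ and every pair $(d_1,d_2)$ with $(\Imc_1,d_1) \sim_{\ALCQ} (\Imc_2,d_2)$, we have $d_1 \in C^{\Imc_1}$ iff $d_2 \in C^{\Imc_2}$. The proof is by structural induction on $C$. The cases $C=\top$, $C=A$ for a concept name $A$, $C=\neg D$, and $C=D_1 \sqcap D_2$ are immediate from the atomic harmony clause of a counting bisimulation and the induction hypothesis.

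The only interesting case is qualified number restrictions $C = ({\geq}\,n\;r.D)$. Assume $d_1 \in C^{\Imc_1}$, so there are pairwise distinct $r$-successors $e_1,\dots,e_n \in D^{\Imc_1}$ of $d_1$ in $\Imc_1$. The forth condition of a counting bisimulation (as defined in \cite{DBLP:conf/ijcai/LutzPW11}) supplies pairwise distinct $r$-successors $e_1',\dots,e_n'$ of $d_2$ such that $(\Imc_1,e_i) \sim_{\ALCQ} (\Imc_2,e_i')$ for each $i$. By the induction hypothesis, $e_i' \in D^{\Imc_2}$ for each $i$, so $d_2 \in C^{\Imc_2}$. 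The converse direction uses the back condition in an entirely symmetric manner. The key point is that a counting bisimulation relates successor sets in a cardinality-preserving fashion, precisely what is required to transfer number restrictions.

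With the invariance lemma at hand, the proposition follows quickly. Let $C \sqsubseteq D$ be an arbitrary concept inclusion in $\Omc$ and let $d_1 \in C^{\Imc_1}$. By assumption there is some $d_2 \in \Delta^{\Imc_2}$ with $(\Imc_1,d_1) \sim_{\ALCQ} (\Imc_2,d_2)$. The invariance lemma yields $d_2 \in C^{\Imc_2}$, and since $\Imc_2 \models \Omc$ we obtain $d_2 \in D^{\Imc_2}$. A second application of the invariance lemma transfers this back to $d_1 \in D^{\Imc_1}$. Hence $C^{\Imc_1} \subseteq D^{\Imc_1}$, so $\Imc_1$ satisfies $C \sqsubseteq D$, and as this holds for every CI in $\Omc$, $\Imc_1$ is a model of $\Omc$.

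The main obstacle is the number-restriction case of the invariance lemma, where one must be careful that the forth/back clauses of a counting bisimulation indeed permit simultaneously choosing \emph{pairwise distinct} bisimilar partners rather than merely matching them one by one; this is exactly the strengthening that distinguishes counting bisimulations from ordinary $\ALC$-bisimulations, and it is what makes the argument go through uniformly for all $n$. Note that only one direction of the bisimulation hypothesis (every $\Imc_1$-element has a partner in $\Imc_2$) is needed, since we only need to transport witnesses of $C^{\Imc_1}$ into $\Imc_2$ and back.
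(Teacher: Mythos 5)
Your proof is correct. Note that the paper itself does not prove this proposition but defers to the cited work of Lutz, Piro, and Wolter for both the definition of counting bisimulations and the statement; your argument — invariance of $\ALCQ$-concepts under counting bisimulations established by structural induction (with the bijective forth/back matching handling $\geq n\, r.D$), followed by transferring each CI of $\Omc$ through a bisimilar partner — is exactly the standard argument underlying that reference.
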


In the context of \ALCQ, the \emph{unraveling} $\Imc_d$ of an interpretation $\Imc$ at $d\in \Delta^\Imc$ is defined analogously to the \ALC case. It is well known that for each $d'\in \Delta^{\Imc_d} $ there exists $e \in \Delta^{\Imc} $ such that $(\Imc_d,d') \sim_\ALCQ (\Imc,e)$. 

We say that a homomorphism $h$ from an ABox $\Amc_1$ to an ABox $\Amc_2$ is
\emph{locally injective} if $h(b) \neq h(c)$ for all $r(a,b),r(a,c) \in \Amc_1$ \cite{DBLP:conf/ijcai/FunkJLPW19}.

\begin{proposition}
\label{cor:modfrominjhom}
	Let \Imc be an ABox, \Omc an \ALCQ-ontology and \Imc an interpretation 
	with $ \Imc \models \Omc $. 
    If there is a locally injective homomorphism from \Amc to \Imc, then there exists
	a model \Jmc of $\Omc \cup \Amc$.
\end{proposition}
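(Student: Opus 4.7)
The plan is to construct the desired model $\Jmc$ by combining $\Amc$ (viewed as a base interpretation on $\mn{ind}(\Amc)$) with tree-shaped $\ALCQ$-unravelings of $\Imc$ attached at each individual, using the locally injective homomorphism $h\colon\Amc\to\Imc$ as a guide. The role of local injectivity is to rule out the one obstruction that could prevent the resulting $\Jmc$ from being counting-bisimilar to $\Imc$.

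Concretely, I would define $\Jmc$ as follows. Start with $\mn{ind}(\Amc)\subseteq\Delta^{\Jmc}$ and set $a\in A^{\Jmc}$ iff $h(a)\in A^{\Imc}$ for every concept name $A$, and $(a,b)\in r^{\Jmc}$ whenever $r(a,b)\in\Amc$. Since $h$ is a homomorphism, the resulting base interpretation satisfies $\Amc$. Then, for each $a\in\mn{ind}(\Amc)$, each role name $r$, and each $r$-successor $e$ of $h(a)$ in $\Imc$ that is not of the form $h(b)$ for some $b$ with $r(a,b)\in\Amc$, attach a fresh copy of the unraveling of $\Imc$ at $e$ as a new $r$-successor of $a$. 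By construction $\Jmc\models\Amc$, so it remains to show $\Jmc\models\Omc$, which I would do via Proposition~\ref{prop:alcqbisim}: it suffices to exhibit, for each $d\in\Delta^{\Jmc}$, an element $e\in\Delta^{\Imc}$ with $(\Jmc,d)\sim_{\ALCQ}(\Imc,e)$.

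For elements introduced inside an attached unraveling, such a bisimilar counterpart in $\Imc$ is provided by the standard fact that unraveling preserves $\ALCQ$-bisimilarity pointwise. For an ABox individual $a$, I would claim $(\Jmc,a)\sim_{\ALCQ}(\Imc,h(a))$ and build the counting bisimulation by pairing each $a$ with $h(a)$ and extending to tree elements via the unraveling projections. The concept-name clause is immediate from the definition of $\Jmc$. For the forth/back clauses at each role name $r$, I need a bijection between the $r$-successors of $a$ in $\Jmc$ and those of $h(a)$ in $\Imc$ that respects the bisimulation. Such a bijection is given by $b\mapsto h(b)$ on the ABox-successors, together with the identity on the freshly attached tree roots, which by construction exhaust the remaining $r$-successors of $h(a)$.

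The main obstacle is exactly the point at which local injectivity enters: if two distinct ABox-$r$-successors $b,b'$ of $a$ satisfied $h(b)=h(b')$, then $a$ would have two $r$-successors bisimilar to the single $r$-successor $h(b)$ of $h(a)$, breaking the bijection required by the counting bisimulation and potentially falsifying an $\ALCQ$ number restriction in $\Omc$. Local injectivity of $h$ is precisely the hypothesis that prevents this collision, and once it is in place the bijection construction above goes through and Proposition~\ref{prop:alcqbisim} delivers $\Jmc\models\Omc$.
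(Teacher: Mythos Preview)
Your proposal is correct and follows essentially the same construction as the paper's proof: build $\Jmc$ on $\mn{ind}(\Amc)$ with concept labels pulled back along $h$, then hang an unraveling of $\Imc$ at each missing $r$-successor and invoke Proposition~\ref{prop:alcqbisim}. Your write-up is in fact more explicit than the paper's about where local injectivity is used (to make $b\mapsto h(b)$ injective on ABox-$r$-successors so that the successor bijection required by the counting bisimulation goes through).
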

\begin{proof}
Let $h$ be a locally injective homomorphism from \Amc to \Imc.
$\Jmc$ is then constructed the following way:

We begin with adding $\mn{ind}(\Amc)$ to $ \Delta^{\mathcal{J}}$
and, for all $a \in \mn{ind}(\Amc)$ and $A \in \NC$, adding  $a$ to $A^{\mathcal{J}} $ if $ h(a) \in A^{\mathcal{I}}$. Furthermore, for each $a \in \mn{ind}(\Amc)$, $r \in \NR$ and $ d \in \Delta^{\Imc} \setminus 
\{ h(b) \mid r(a, b) \in \Amc \}$ with 
$ \left( h(a), d \right) \in r^\Imc $, add a copy of $ \Imc_d $ to 
\Jmc and connect $ a $ to it's root via $ r $. It is easy to see that 
combining the canonical counting bisimulations of the copies of $ \mathcal{I}_d $ 
with $ h $ results in a valid counting bisimulation between  $ \mathcal{J} $ and $ \mathcal{I} $, and thus
for each $e\in \Delta^{\Jmc} $ there exists $e' \in \Delta^{\Imc} $ such that $(\Jmc,e) \sim_\ALCQ (\Imc,e')$.
An application Proposition~\ref{prop:alcqbisim} shows that \Jmc is a model of \Omc. 
\end{proof}



\thmALCQ*
\begin{proof}
	``$ 1 \Rightarrow 2 $'' Assume there exists $ \Amc \in E^{-} $ 
	and a locally injective homomorphism $ h $ from $\Amc$ to  $\mathcal{A}^{+}$.
	Now suppose, contrary to what we want to show, that there is a fitting \ALCQ-ontology 
	\Omc. Since \Omc is consistent with every $\Amc \in E^{+} $ 
	it is consistent with $ \mathcal{A}^{+} $, witnessed by the disjoint union $ \mathcal{I} $ 
	of the individual
	models. An application of Proposition~\ref{cor:modfrominjhom} 
    on $\Amc$, \Imc and the composition $g \circ h$, where
    $g$ is the homomorphism from
    $\Amc^+$ to \Imc, proves that $\Amc$
    is consistent with \Omc. This contradicts the 
    assumption that \Omc is a fitting ontology for $E$.

	\smallskip

	``$ 2 \Rightarrow 1 $''
	Assume there is no locally injective homomorphism from any $\Amc \in E^-$ to $\Amc^+$. We use $ \Amc^{+} $ to construct the fitting
	ontology \[ \mathcal{O}:= \Omc_{\Amc^+,\Sigma} \cup 
    \bigcup_{a \in \mn{ind}(\Amc^+)}\{\top 
    \sqsubseteq (\leq 1.r) V_a \mid r \in \Sigma\} ,\] where $\Omc_{\Amc^+,\Sigma}$ is the \ALC-ontology for $\Amc^+$ defined in Proposition~\ref{prop:fromKR12}, for $\Sigma = \mn{sig}(E)$. 
    
    It is straightforward to show that each $\Amc \in E^+$ 
    is consistent with \Omc by utilizing $\Imc_\Amc$, where we set $V_a^{\Imc_\Amc}:=\{a\}$ for each $\mn{ind}(\Amc)$. 
    Thus, it remains to prove the inconsistency with every $\Amc \in E^-$.
    Contrary to what we want to show, assume that some $\Amc \in E^-$ is 
    consistent with \Omc and let $\Imc$ be a model of $\Amc$ and $\Omc$. Combining \Imc with Proposition~\ref{prop:fromKR12}, we receive a homomorphism $ h$ from \Amc to $\Amc^{+} $, where $ a \in V_b ^{\mathcal{I}} $ if $ h(a) = a_b $.
    
	We now claim that $ h $ is locally injective. Suppose $r(a,b), r(a,c) \in \Amc$, then there exists no 
    $V_d$ with $b,c \in V_d^\Imc$, by definition of \Omc. Lifting this observation back to $h$ shows $h(b) \neq h(c)$, and thus the local injectivity. As the existence of such a homomorphism contradicts the initial assumption, we conclude that \Omc is a fitting ontology for $E$.
\end{proof}

\end{document}